\def\eqref#1{equation~\ref{#1}}
\def\Eqref#1{Equation~\ref{#1}}
\def\1{\bm{1}}
\DeclareMathAlphabet{\mathsfit}{\encodingdefault}{\sfdefault}{m}{sl}
\SetMathAlphabet{\mathsfit}{bold}{\encodingdefault}{\sfdefault}{bx}{n}
\newcommand{\R}{\mathbb{R}}
\DeclareMathOperator*{\argmin}{arg\,min}
\newtheorem{thm}{Theorem}[section]
\newtheorem{lem}{Lemma}[section]
\newtheorem{prop}{Proposition}[section]
\newtheorem{asmp}{Assumption}[section]
\newenvironment{mylem}[1]{
  
  \lemmaalt
}{\endlemmaalt}
\definecolor{bleudefrance}{rgb}{0.19, 0.55, 0.91}
\definecolor{ao(english)}{rgb}{0.0, 0.5, 0.0}
\newcommand{\addcite}[0]{\ifthenelse{\boolean{showcomments}}
{\textcolor{purple}{(add cite(s)) }}{}}%
\newcommand{\addref}[0]{\ifthenelse{\boolean{showcomments}}
{\textcolor{purple}{(add ref) }}{}}%
\newcommand{\myparagraph}[1]{\smallskip\noindent\textbf{#1.}}
\newcommand{\enrique}[1]{  \ifthenelse{\boolean{showcomments}}
{\todo[inline,color=bleudefrance]{Enrique: #1}}{}}
\newcommand{\rene}[1]{  \ifthenelse{\boolean{showcomments}}
{\todo[inline,color=cyan]{Ren\'e: #1}}{}}
\newcommand{\emmargin}[1]{\ifthenelse{\boolean{showcomments}}{\marginpar{\color{bleudefrance}\tiny EM: #1}}{}}
\newcommand{\hancheng}[1]{  \ifthenelse{\boolean{showcomments}}
{\todo[inline,color=orange]{Hancheng: #1}}{}}
\newcommand{\ziqing}[1]{  \ifthenelse{\boolean{showcomments}}
{\todo[inline,color=red]{Ziqing: #1}}{}}
\newcommand{\salma}[1]{  \ifthenelse{\boolean{showcomments}}
{\todo[inline,color=yellow]{Salma: #1}}{}}
\newcommand{\zxmargin}[1]{\ifthenelse{\boolean{showcomments}}{\marginpar{\color{purple}\tiny ZX: #1}}{}}
\newcommand{\stmargin}[1]{\ifthenelse{\boolean{showcomments}}{\marginpar{\color{red}\tiny ST: #1}}{}}
\newcommand{\hl}[1]{\ifthenelse{\boolean{showcomments}}
{\textcolor{red}{#1}}{#1}}
\newcommand{\aem}[1]{
\ifthenelse{\boolean{showedits}}
{\added[id=EM]{#1}}
{\!#1\hspace{-4.75pt}}
}
\newcommand{\repem}[2]{
\ifthenelse{\boolean{showedits}}
{\replaced[id=EM]{#1}{#2}}
{\!#1\hspace{-4.75pt}}
}
\newcommand{\dem}[1]{
\ifthenelse{\boolean{showedits}}
{\deleted[id=EM]{#1}}
{}
}
\newcolumntype{C}[1]{>{\centering\arraybackslash}m{#1}}
\newcolumntype{L}[1]{>{\arraybackslash}m{#1}}
\title{A Local Polyak-Łojasiewicz and Descent Lemma of Gradient Descent For Overparametrized Linear Models}
\author{\name Ziqing Xu \email ziqingxu@wharton.upenn.edu \\
      \addr Center for Innovation in Data Engineering and Science (IDEAS)\\Department of Statistics and Data Science, the Wharton School\\
      University of Pennsylvania
      \AND
      \name Hancheng Min \email hanchmin@seas.upenn.edu \\
      \addr  Center for Innovation in Data Engineering and Science (IDEAS) \\ University of Pennsylvania
      \AND
      \name Salma Tarmoun \email starmoun@sas.upenn.edu \\
      \addr Center for Innovation in Data Engineering and Science (IDEAS)\\
      Graduate Group in Applied Mathematics and Computational Science \\ 
      University of Pennsylvania
      \AND
      \name Enrique Mallada \email mallada@jhu.edu \\
      \addr Mathematical Institute for Data Science (MINDS) \\
      Department of Electrical and Computer Engineering \\ Johns Hopkins University
      \AND
      \name Ren\'e Vidal \email vidalr@upenn.edu\\
      \addr Center for Innovation in Data Engineering and Science (IDEAS)\\
      Department of Electrical and Systems Engineering; Department of Radiology\\
      University of Pennsylvania}
\begin{document}

\maketitle

\begin{abstract}
Most prior work on the convergence of gradient descent (GD) for overparameterized neural networks relies on strong assumptions on the step size (infinitesimal), the hidden-layer width (infinite), or the initialization (large, spectral, balanced). Recent efforts to relax these assumptions focus on two-layer linear networks trained with the squared loss. 
In this work, we derive a linear convergence rate for training two-layer linear neural networks with GD for general losses and under relaxed assumptions on the step size, width, and initialization. A key challenge in deriving this result is that classical ingredients for deriving convergence rates for nonconvex problems, such as the Polyak-Łojasiewicz (PL) condition and Descent Lemma, do not hold globally for overparameterized neural networks. Here, we prove that these two conditions hold locally with local constants that depend on the weights. Then, we provide bounds on these local constants, which depend on the initialization of the weights, the current loss, and the global PL and smoothness constants of the non-overparameterized model. Based on these bounds, we derive a linear convergence rate for GD. Our convergence analysis not only improves upon prior results but also suggests a better choice for the step size, as verified through our numerical experiments.
\end{abstract}

\section{Introduction}
Neural networks have shown great empirical success in many real-world applications, such as computer vision \cite{He_2016_CVPR} and natural language processing \citep{vaswani2018tensor2tensor}. However, our theoretical understanding of why neural networks work so well is still scarce. One unsolved question is why neural networks trained via vanilla gradient descent (GD) enjoy fast convergence although their loss landscape is non-convex. This question has motivated recent work which focuses on deriving convergence rates for overparameterized neural networks. However, most prior work on the linear convergence of GD for overparametrized neural networks requires strong assumptions on the step size (infinitesimal), width (infinitely large), initialization (large, spectral), or restrictive choices of the loss function (squared loss) (See Table~\ref{tab:compare_prior_work} for details).

\myparagraph{Derivation of convergence of nonlinear networks requires restrictive assumptions} The work of \citep{du2018gradient, lee2019wide, liu2022loss} studies the convergence of GD for neural networks in the neural tangent kernel (NTK) regime, which requires the network to have large or infinite width and large initialization. However, \citet{chizat2019lazy, chen2022feature} show that the NTK regime limits feature learning, and the generalization performance of neural networks in this regime degrades substantially. To go beyond the NTK regime,  \citet{mei2018mean, chizat2018global, sirignano2020mean, ding2022overparameterization} study convergence of neural networks in the mean-field regime under the assumption of infinite width and infinitesimal step sizes. However, while such analysis can guarantee convergence to a global optimum for a wider range of initializations, it still imposes strong assumptions on the network width (infinite) and step size (infinitesimal).

\myparagraph{Suboptimal convergence rates of overparametrized linear networks} 
To relax the assumptions on the width, step size, or initialization, some recent work focused on deriving convergence rates of GD for neural networks with a linear activation function in the context of matrix factorization~\citep{arora2018convergence, du2018gradient} and linear regression~\citep{du2019width, xu2023linear}. In these settings, 
instead of learning a matrix $W$ directly, one learns an overparametrized version of $W$ defined as the product of $L$ matrices $W_1\cdots W_L$. For example, in the case $L=2$, which is the one we will analyze in this paper, this leads to the following
non-overparametrized and overparametrized problems defined, respectively, by
\begin{equation}
    \min_W \ell_{\mathrm{LS}}(W):=\frac{1}{2}\lVert Y\!-\!XW\rVert_F^2, \tag{non-overparametrized}
\end{equation}
and
\begin{equation}
    \min_{W_1,W_2} L_{\mathrm{LS}}(W_1,W_2):=\frac{1}{2}\lVert Y\!-\! XW_1W_2\rVert_F^2\,,\tag{overparametrized}
\end{equation}
where $X, Y$ are data matrices, $W, W_1, W_2$ are weight matrices, and $\mathrm{LS}$ stands for least-square objective. Here, \emph{overparametrized} indicates that, although the function classes represented by $XW$ and $XW_1W_2$ coincide when the sizes of $W_1$ and $W_2$ are large enough, $XW_1W_2$ introduces
additional matrices to represent the same function. Therefore, while the non-overparametrized loss above defines a standard linear regression problem, the overparametrized loss can be seen as using a two-layer linear network to solve the regression problem.

A classical approach to derive a linear convergence rate for non-convex problems relies on the PL condition\footnote{PL condition has been widely used to derive convergence for non-convex problems \citep{karimi2020linear, arora2018convergence, min2021explicit, fridovich2023gradient, xu2023linear}.} and the smoothness inequality (see \S\ref{subsec:gd_non_over} for details). While these two conditions hold globally for non-overparametrized models, they do not hold globally for overparametrized linear networks (see \S\ref{subsec:challenges_over} for details).
This is because overparametrization skews the gradient through a weight-dependent linear operator $\mathcal{T}$ that acts on $\nabla \ell_{\text{LS}}$ (derived based on chain rule)
\begin{align}
    \nabla L_{\text{LS}}(W_1,W_2) = \nabla_{W_1,W_2}W \circ\nabla \ell_{\text{LS}}:= \mathcal{T}(\nabla \ell_{\text{LS}})\,,\quad W=W_1W_2\,. 
\end{align}
To circumvent this challenge, existing studies~\citep{arora2018convergence, du2019width, xu2023linear} use the smoothness inequality of the non-overparametrized model, which holds globally, as a substitute for the one of the overparametrized model. In addition, prior works use different methods and assumptions to derive (local) PL conditions (See Appendix~\ref{app:comparison} for details). For example, \cite{arora2018convergence} impose unrealistic assumptions on the initialization (large \textit{margin}\footnote{The \textit{margin} is a quantity that measure how close the initialization is to the global minimum.} and small \textit{imbalance}\footnote{The \textit{imbalance} is a quantity that measures the difference between the weights of two adjacent layers.}), while \cite{du2019width} assume large initialization and large width. \cite{xu2023linear} show the overparametrized models satisfy local PL conditions and control these constants using the weights at initialization through careful choices of the step sizes. However, how the initialization scale and width of the network affect these constants is missing. 

While the aforementioned analyses successfully derive linear convergence rates for overparametrized linear networks, all of them only consider squared loss and do not generalize to other types of loss functions. 
Moreover, the analysis techniques rely on the smoothness inequality of the non-overparametrized model. 
Therefore, they lack insight into the optimization geometry of overparameterized problems. Numerically, the actual convergence rate of the loss under the step sizes proposed in all the above work is slow (See \S\ref{subsec:compare_simulations} for details).  
Hence, an analysis is needed that establishes the convergence of neural networks trained using GD with a general loss under more relaxed assumptions. Furthermore, this analysis should offer more accurate predictions of the actual rates of convergence.
%
This paper aims to bridge some of these gaps.

\begin{table}
\scriptsize
\begin{center}
\caption{Comparison with prior work}
\label{tab:compare_prior_work}
\begin{tabular}
{|@{\;}m{1.5cm}@{\;}|@{\;}m{2.7cm}@{\;}|@{\;}m{1.2cm}@{\;}|@{\;}m{1.9cm}@{\;}|@{\;}m{1.0cm}@{\;}|@{\;}m{2.2cm}@{\;}|}
\hline
&\centering Work &\centering Loss &\centering Step Size &\centering Width & \qquad Initialization\\
\hline
\multirow{5}{1.5cm}{\centering\vfill Nonlinear networks \vfill}
& \citep{du2018gradient, lee2019wide,jacot2018neural,liu2022loss, nguyen2020global} & Squared loss & \textbf{Finite} & Large & Large \\
\cline{2-6}
& \citep{mei2018mean,chizat2018global,sirignano2020mean,ding2022overparameterization} & Squared loss & Infinitesimal & Large & \textbf{General} \\
\hline
\multirow{4}{1.5cm}{\centering\vfill Linear networks \vfill}
& \citep{saxe2013exact,gidel2019implicit,tarmoun2021understanding} & Squared loss & Infinitesimal & \textbf{Finite} & Spectral \\
\cline{2-6}
& \citep{arora2018convergence,du2018algorithmic} & Squared loss & \textbf{Finite} & \textbf{Finite} & Large margin and small imbalance \\
\cline{2-6}
& \citep{xu2023linear} & Squared loss & \textbf{Finite} & \textbf{Finite} & \textbf{General} \\
\cline{2-6}
& This work & \textbf{General} &\textbf{Finite} & \textbf{Finite} & \textbf{General}\\
\cline{1-6}
\end{tabular}
\end{center}
\end{table}

\subsection{Main Contribution}
 In this work, we derive linear convergence rates for GD with possibly adaptive step sizes on overparameterized two-layer linear networks with a general loss, finite width, finite step size, and general initialization. Specifically, we make the following contributions:

\begin{itemize}[leftmargin=10pt]
    \item We analyze the Hessian of two-layer linear networks and show that the optimization problem satisfies a local PL condition and local Descent Lemma, 
    where we characterize the local PL constant and local smoothness constant along the descent direction at GD iterates\footnote{In the paper, we adopt the term \textit{local smoothness constant} as a convenient shorthand to refer to the smoothness constant along the descent direction at GD iterates.} by their corresponding loss values and the singular values of the weight matrices (See Theorem~\ref{thm:local_inequality_upper_bound}). 
    \item We show that when the step size satisfies certain constraints (not infinitesimal), the \textit{imbalance} remains close to its initial value. Based on this property, we prove the local PL and smoothness constants can be bounded along the trajectory of GD (See Theorem~\ref{thm:linear_convergence}). Building on these results, we design an adaptive step size scheduler that yields a linear convergence rate for GD. Moreover, our results cover GD with decreasing, constant, and increasing step sizes while prior work \citep{arora2018convergence, du2019width, xu2023linear} only covers GD with decreasing and constant step sizes.
    \item We show that, under our step size scheduler, the local smoothness constant decreases along the GD trajectory, indicating that the optimization landscape becomes more benign as training proceeds. Building on this observation, we demonstrate that our step size scheduler accelerates convergence (see \S\ref{subsec:compare_simulations}).  
    \item Our analysis allows us to show that when GD iterates are around a global minimum, the difference between the local rate of convergence of the overparametrized model and the rate of the non-overparametrized model is up to one factor of the condition number of $\mathcal{T}_0$
    (see \S\ref{subsec:challenges_over} for definitions) which can be made arbitrarily close to one by proper initialization. In contrast, the (asymptotic) convergence rates derived in prior work \citep{arora2018convergence, du2019width, xu2023linear} are adversely affected by the square of the condition number of $\mathcal{T}_0$, leading to slower rates compared to our results. 
\end{itemize}

\subsection{Related work}
We now provide a detailed description of prior work in addition to the discussion above.

One line of work~\citep{du2018gradient, lee2019wide, liu2022loss, nguyen2020global} studies the convergence of GD with constant step size for squared loss under the assumption that the width and initialization of neural networks are sufficiently large, which is also known as the neural tangent kernel (NTK) regime. 
Under these assumptions, the training trajectories of a neural network are governed by a kernel determined at initialization and the network weights stay close to their initial values.
Such properties help them derive a linear convergence rate of GD. 
However, the convergence rates derived in \citep{du2018gradient, lee2019wide, liu2022loss, nguegnang2021convergence} are inversely proportional to the number of samples or the initial loss and can be arbitrarily close to one when the number of samples or the initial loss is sufficiently large.
Moreover,~\citet{chizat2019lazy, chen2022feature} show that the NTK regime prohibits feature learning, and the performance of neural networks in this regime degrades substantially.

To relax assumptions on width, step size, and initialization, numerous studies have explored the convergence rates of gradient-based algorithms for neural networks with linear activations. These studies are motivated by observations that linear networks exhibit similar nonlinear learning phenomena to those seen in simulations of nonlinear networks \citep{saxe2013exact}. For instance, \citet{du2019width, arora2018convergence, xu2023linear} establish linear convergence rates for linear networks with squared loss optimized via GD. Their analyses rely on the smoothness inequality of non-overparametrized models and do not provide an exact characterization of the optimization landscape in overparametrized settings. In contrast, we deliver a tighter analysis by characterizing the local PL condition and Descent Lemma in overparametrized models, allowing us to derive faster convergence rates and design adaptive step sizes that adjust to the evolving local optimization landscape along the GD trajectory (see Appendix~\ref{app:comparison} for details).
Moreover, \citet{arora2018convergence} and \citet{du2019width} examine GD with constant step sizes; however, \citet{arora2018convergence} requires unrealistic initialization conditions (large margin and low imbalance), while \citet{du2019width} assumes large initialization and width. These restrictive assumptions lead to convergence rates that substantially differ from those of non-overparametrized models. To the best of our knowledge, \citet{xu2023linear} is the only work deriving a linear convergence rate for linear networks trained via adaptive step-size GD, but it imposes restrictive step-size constraints, resulting in slow convergence when the initial loss is large.

\myparagraph{Notation}
We use lower case letters $a$ to denote a scalar, and capital letters $A$ and $A^\top$ to denote a matrix and its transpose. We use $\sigma_{\max}(A)$ and  $\sigma_{\min}(A)$ to denote the largest and smallest singular values of $A$, $\|A\|_F$ and $\|A\|_2$ to denote its Frobenius and spectral norms, and $A[i, j]$ to denote its $(i,j)$-th element.  
For a function $f(Z)$, we use $\nabla f(Z):=\frac{\partial}{\partial Z}f(Z)$ to denote its gradient.

\section{Preliminaries}
\label{sec:prob_descrip}
In this paper, we consider using the GD algorithm to solve the following optimization problem and its overparametrized version
\begin{align}
\min_{W\in\mathbb{R}^{n\times m}} &\ell(W)\,, \tag{Problem 1}\label{eqn:obj_non}\nonumber\\
\min_{W_1\in\mathbb{R}^{n\times h}, W_2\in\mathbb{R}^{m\times h}} L(W_1, &W_2)=\ell(W_1 W_2^\top)\,. \tag{Problem 2}\label{eqn:obj_over}
\end{align}

We are mostly interested in 
solving~\ref{eqn:obj_over}, which covers many problems, such as matrix factorization~\citep{koren2009matrix}, matrix sensing~\citep{chen2013spectral}, training linear neural networks~\citep{arora2018convergence, du2018algorithmic, xu2023linear}. In particular, when $\ell(W)=\frac{1}{2}\lVert Y-XW\rVert_F^2$, where $X, Y$ are data matrices,~\ref{eqn:obj_over} corresponds to training a two-layer linear neural network
with $n$ inputs, $h$ hidden neurons, $m$ outputs, and weight matrices $W_1$ and $W_2$ using the squared loss.

\subsection{Convergence rate of GD for~\ref{eqn:obj_non}}\label{subsec:gd_non_over}
In this section, we review the analysis for deriving the convergence rate of GD for~\ref{eqn:obj_non}.

We seek to derive the convergence rate of GD for ~\ref{eqn:obj_non} with the following iterations,
\begin{align}\label{eqn:gd}
    W(t\!+\!1)=W(t)-\eta_t \nabla \ell(W(t)),
\end{align}
where we will use $\ell(t), \nabla\ell(t)$ as a shorthand for $\ell(W(t)), \nabla\ell(W(t))$ respectively.

Throughout the paper, we make the following assumptions.
\begin{asmp}\label{assump_non_strong}
The loss $\ell(W)$ is twice differentiable, $K$-smooth, and $\mu$-strongly convex
\,.
\end{asmp}
\begin{asmp}\label{assump_loss}
$\min_{W} \ell(W)=0$\,.
\end{asmp}

Assumption~\ref{assump_non_strong} ensures the solution to ~\ref{eqn:obj_non} is unique. Moreover, commonly used loss functions such as the squared loss and the logistic loss with $\ell_2$ regularization both satisfy Assumption~\ref{assump_non_strong}.
Assumption~\ref{assump_loss} is for the purpose of convenience and brevity of theorems in this work. This assumption can be relaxed (to have arbitrary $\ell^*$) without affecting the significance of our results.
Moreover, one can have the following inequalities based on the above assumptions for arbitrary $W, V\in\R^{n\times m}$
\begin{align}
    &\ell(V) \leq \ell(W)+\langle \nabla\ell(W), V-W\rangle + \frac{K}{2} \lVert V-W\rVert_F^2 \qquad &\text{Smoothness inequality}\,,\label{eqn:smooth} \\
    &\frac{1}{2}\lVert \nabla\ell(W)\rVert_F^2 \geq \mu \ell(W) \qquad &\text{PL inequality}\label{eqn:pl}\,.
\end{align}
Since strong convexity implies PL condition,  \eqref{eqn:pl} holds under Assumption~\ref{assump_non_strong}. In \S\ref{sec:gd_over}, we derive the convergence rate of ~\ref{eqn:obj_over} based on the argument of the local PL condition. To be consistent, we highlight the role of the PL condition here. Moreover, the analysis in \S\ref{subsec:gd_non_over} remains applicable when $\mu$-strong convexity is relaxed to $\mu$-PL condition.

In \citep{polyak1963gradient, boyd2004convex}, it was shown that whenever $0\!<\!\eta_t\!<\!\frac{2}{K}$,  the GD iteration \eqref{eqn:gd} achieves linear convergence. The derivation is based on two ingredients: Descent lemma and the PL inequality where Descent lemma is derived from the smoothness inequality.

\myparagraph{Descent lemma} Starting from the smoothness inequality in \eqref{eqn:smooth}, one can substitute $(V, W)$ with the GD iterates $(W(t\!+\!1), W(t))$ to derive Descent lemma, i.e.,
\begin{align}
    \ell(t\!+\!1)\!\leq\! \ell(t)\!+\! \langle\nabla\ell(t), W(t\!+\!1)\!-\!W(t)\rangle\!+\! \frac{K}{2}\|W(t\!+\!1)\!-\!W(t)\|^2_F
    \!=\!\ell(t)\!-\!(\eta_t\!-\!\frac{K\eta_t^2}{2})\lVert \nabla\ell(t)\rVert^2\,.\nonumber
\end{align}

Based on the PL inequality in~\eqref{eqn:pl} and Descent lemma above, one can see there is a strict decrease in the loss at each GD step
\begin{align}\label{eqn:descent_per_iteration}
    \ell(t\!+\!1)
    &\leq\ell(t)-(\eta_t-\frac{K\eta_t^2}{2})\lVert \nabla\ell(t)\rVert^2\leq(1-2\mu\eta_t+\mu K\eta_t^2)\ell(t)\,,
\end{align}
where the fact that $0<\eta_t<\frac{2}{K}$, implies  $0<1-2\mu\eta_t+\mu K\eta_t^2<1$.
Moreover, the minimum descent rate in \eqref{eqn:descent_per_iteration} is achieved when $\eta_t=\frac{1}{K}$, leading to the following linear convergence rate: 
\begin{align}\label{eqn:tight-rate-gd}
    \ell(t\!+\!1)\leq \left(1-\frac{\mu}{K}\right)\ell(t)\leq \left(1-\frac{\mu}{K}\right)^{t+1}\ell(0).
\end{align}

\textbf{Tightness of the analysis.}
The previous analysis guarantees a linear convergence rate for any arbitrary non-convex function that is $K$-smooth and satisfies the $\mu$-PL condition.
Moreover, one can show that the rate in \eqref{eqn:tight-rate-gd} is optimal in the sense that there exists a function that is $K$-smooth and satisfies the $\mu$-PL condition for which the bound on \eqref{eqn:tight-rate-gd} is met with equality. Therefore, one would  be tempted to apply such an analysis to \ref{eqn:obj_over}. We will next show that overparameterization introduces several challenges that prevent this analysis from being readily applied. 

\subsection{Challenges in the Analysis of Convergence of ~\ref{eqn:obj_over} optimized via GD}\label{subsec:challenges_over}
In this section, we first introduce  GD with adaptive step size to solve ~\ref{eqn:obj_over}. Then, we discuss the main challenges in deriving the convergence rate for ~\ref{eqn:obj_over} based on the analysis in \S\ref{subsec:gd_non_over}. 

\myparagraph{Overparametrized GD} We consider using GD with adaptive step size $\eta_t$ to solve ~\ref{eqn:obj_over}
\begin{align}\label{eqn:gd_over}
    \begin{bmatrix}
W_1(t\!+\!1)\\
W_2(t\!+\!1)
\end{bmatrix}=\begin{bmatrix}
W_1(t)\\
W_2(t)
\end{bmatrix}-\eta_t\nabla L\bigl(W_1(t), W_2(t)\bigl)\,,
\end{align}
where $\nabla L(W_1, W_2)$ is computed via the chain rule:
\begin{align}\label{eqn:gradient_over}
\nabla L(W_1, W_2)
=\mathcal{T}(\nabla\ell(W);W_1, W_2):=\begin{bmatrix}
\nabla \ell(W)W_2\\
\nabla \ell(W)^\top W_1
\end{bmatrix}\,.
\end{align}
Here $\mathcal{T}: \R^{n\times m}\mapsto \R^{(n\!+\!m)\times h}$ is a weight-dependent linear operator that acts on $\nabla \ell (W)$. Thus, the gradient of $L$ in~\eqref{eqn:gradient_over} can be viewed as a "skewed/scaled gradient" of $\ell$ that depends on  $W_1, W_2$.  It is this dependence on the weights $W_1, W_2$ that makes it impossible to globally guarantee that \eqref{eqn:smooth} and \eqref{eqn:pl} hold, as shown next.

\begin{prop}[Non-existence of global PL constant and smoothness constant]\label{prop:non_existence}
Under mild assumptions, the PL inequality and smoothness inequality can only hold globally with constants $\mu_{\text{over}}=0$ and $K_{\text{over}}=\infty$ for $L(W_1, W_2)$.
\end{prop}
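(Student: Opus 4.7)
The approach is to exhibit two families of points that respectively defeat any proposed global PL constant $\mu_{\text{over}}>0$ and any proposed global smoothness constant $K_{\text{over}}<\infty$. Both constructions exploit the multiplicative coupling between $W_1$ and $W_2$ in $W=W_1W_2^\top$: the map $\mathcal{T}$ from \eqref{eqn:gradient_over} can make $\nabla L$ vanish even when $\nabla \ell(W)\neq 0$ (killing PL), and it can also amplify $\nabla \ell(W)$ arbitrarily when $W_1$ or $W_2$ is large (killing smoothness). The ``mild assumption'' I would make explicit is that the unique minimizer of $\ell$ (which exists by Assumption~\ref{assump_non_strong}) is not the zero matrix; in the least-squares case this amounts to $X^\top Y\neq 0$.

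\textbf{Failure of the PL condition.} First I would evaluate $L$ and $\nabla L$ at the origin $(W_1,W_2)=(0,0)$. By \eqref{eqn:gradient_over}, $\nabla L(0,0)=\mathcal{T}(\nabla\ell(0);0,0)=0$, while $L(0,0)=\ell(0)$. By the mild assumption and Assumption~\ref{assump_loss}, the minimizer $W^\star\neq 0$ achieves $\ell(W^\star)=0$, and strong convexity of $\ell$ gives $\ell(0)\geq \frac{\mu}{2}\|W^\star\|_F^2>0$. Hence the PL inequality $\tfrac12\|\nabla L(0,0)\|_F^2\geq \mu_{\text{over}}\,L(0,0)$ reduces to $0\geq \mu_{\text{over}}\,\ell(0)$, which forces $\mu_{\text{over}}=0$.

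\textbf{Failure of the smoothness inequality.} For this part I would show the gradient $\nabla L$ is not globally Lipschitz by constructing a one-parameter family along which its Lipschitz slope is unbounded. Fix any matrix $V$ with $\nabla\ell(0)V\neq 0$ (such $V$ exists by the mild assumption, since $\nabla \ell(0)\neq 0$) and consider the family $(W_1(t),W_2(t))=(0,tV)$ for $t\in\mathbb{R}$. Along this family $W=W_1W_2^\top=0$, so
\begin{align}
\nabla_{W_1}L(0,tV)=\nabla\ell(0)\,(tV)=t\,\nabla\ell(0)V,\qquad \nabla_{W_2}L(0,tV)=\nabla\ell(0)^\top\cdot 0=0.
\end{align}
Comparing $t$ and $t'$ at fixed direction,
\begin{align}
\frac{\|\nabla L(0,tV)-\nabla L(0,t'V)\|_F}{\|(0,tV)-(0,t'V)\|_F}=\frac{|t-t'|\,\|\nabla\ell(0)V\|_F}{|t-t'|\,\|V\|_F}=\frac{\|\nabla\ell(0)V\|_F}{\|V\|_F},
\end{align}
which is already bounded away from zero; to make the slope blow up I would instead perturb the first argument, taking the pair $(0,tV)$ and $(\tfrac{1}{t}U,tV)$ with $U$ chosen so that $\nabla\ell(UV^\top)V-\nabla\ell(0)V$ is $\Theta(1/t)$ while the extra term $\nabla\ell(UV^\top)^\top(U/t)$ introduces an $O(t)$ contribution into $\nabla_{W_2}L$. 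A cleaner variant is to fix $W_1=I$ and vary $W_2=tV$: then $\nabla_{W_1}L(I,tV)=\nabla\ell(tV^\top)(tV)$, whose derivative in $t$ contains a term $t\,V^\top\nabla^2\ell\,V$, so the Hessian of $L$ has an entry growing linearly in $t$. Since smoothness is equivalent to boundedness of $\nabla^2 L$, this forces $K_{\text{over}}=\infty$.

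\textbf{Main obstacle.} The PL part is essentially one line once one notes that $(0,0)$ is a spurious critical point of $L$. The real work is in the smoothness part: one must pick a family along which a genuine second-order term of $\nabla^2 L$ blows up, being careful that the perturbation in parameter space is the same order as the perturbation in gradient space. The cleanest route, which I would adopt in the final write-up, is to compute $\nabla^2 L$ directly from \eqref{eqn:gradient_over} and exhibit an unbounded block (e.g.\ $\partial^2 L/\partial W_1\partial W_2$ contains terms linear in $W_1$ and $W_2$), rather than finesse a two-point Lipschitz argument.
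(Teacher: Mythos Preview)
Your PL argument is identical to the paper's: the origin is a spurious critical point of $L$ with strictly positive loss, forcing $\mu_{\text{over}}=0$.

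For smoothness you take a different route, and the meandering in your proposal reflects that it is the harder one to execute cleanly. The paper never touches $\nabla^2 L$ or a Lipschitz quotient. Instead it instantiates the smoothness inequality between an arbitrary point $(W_1,W_2)$ and a global minimizer $(W_1^*,W_2^*)$, where the gradient and loss at the minimizer vanish, yielding $K_{\text{over}}\geq 2L(W_1,W_2)/\|Z-Z^*\|_F^2$. Strong convexity of $\ell$ gives $L(W_1,W_2)\geq\tfrac{\mu}{2}\|W_1W_2^\top-W^*\|_F^2$, and a short Frobenius-norm manipulation (splitting $W_1W_2^\top-W^*$ through either $W_1^*W_2^\top$ or $W_1(W_2^*)^\top$ and averaging) produces the clean lower bound $K_{\text{over}}\geq\tfrac{\mu}{4}\min\bigl(\sigma_{\min}^2(W_1),\sigma_{\min}^2(W_2)\bigr)$, which is unbounded. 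Your final suggestion---exhibit an unbounded block of $\nabla^2 L$, e.g.\ the $W_2$-block at $(cU,0)$ scales like $c^2\,U^\top\nabla^2\ell(0)U\succeq c^2\mu\sigma_{\min}^2(U)I$---is correct and would work, but your two preceding Lipschitz constructions are rightly flagged as incomplete. The trade-off: the paper's zeroth-order argument is shorter and avoids all Hessian bookkeeping, but it relies on strong convexity of $\ell$; your Hessian route needs only that $\nabla^2\ell$ is bounded below along some direction, so in principle it applies under weaker hypotheses on $\ell$.
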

The proof of the above proposition can be found in Appendix~\ref{app:mild_condition_smoothness}.
Moreover, we present a simple example in Appendix~\ref{app:mild_condition_smoothness} to help the readers visually understand Proposition~\ref{prop:non_existence}.

The non-existence of global PL and smoothness constants in the over-parametrized models prevents us from using the same proof technique in \S\ref{subsec:gd_non_over} to derive the linear convergence of GD.
In~\S\ref{sec:gd_over}, we show that although these constants do not exist globally, we can characterize them along iterates of GD. Moreover, under proper choices of the step size of GD, the PL and smoothness constants can be controlled for all iterates of GD. Thus, the linear convergence of GD can be derived. 
\section{Convergence of GD for ~\ref{eqn:obj_over}}\label{sec:gd_over}
To deal with the challenges presented in \S\ref{subsec:challenges_over}, in \S\ref{subsec:local_inequality_gd} we propose a novel PL inequality and Descent Lemma evaluated on the iterates of GD for ~\ref{eqn:obj_over}, and show that the local rate of decrease per iteration for ~\ref{eqn:obj_over} is worsened by the condition number of $\mathcal{T} $ compared with the convergence rate of ~\ref{eqn:obj_non}. Next, based on the results in \S\ref{subsec:local_inequality_gd}, in \S\ref{subsec:converg_over} we show that the condition number of $\mathcal{T}$ during the training can be controlled by its initial value, which helps us deriving a convergence rate for GD that depends on the condition number of $\mathcal{T}$ at initialization, the step size, $K$, and $\mu$. 
Moreover, we present a sketch of the proof of Theorem~\ref{thm:linear_convergence} to highlight the technical novelty and implications of the theorem in \S\ref{subsec:proof_sketch}. Finally, in \S\ref{subsec:condition_tau} and \S\ref{subsec:condition_well_tau}, we discuss how initialization and width influence the convergence rates derived in Theorem~\ref{thm:linear_convergence}. 

Throughout the paper, we assume that the width satisfies $h\!\geq\! \min\{n,m\}$. This assumption ensures $\ell^*\!=\!L^*$ where $L^*\!=\!\min_{W_1, W_2} L(W_1, W_2)$, and thus solving ~\ref{eqn:obj_over} yields the solution to ~\ref{eqn:obj_non}. When $h\!<\! \min\{n,m\}$, ~\ref{eqn:obj_over} enforces a rank constraint on the product. Thus, $\min_W\ell(W)$ may not be equal to $\min_{W_1, W_2} L(W_1, W_2)$. We are therefore interested in studying ~\ref{eqn:obj_over} under the assumption $h\!\geq\! \min\{n,m\}$
which is the same setting in~\citep{arora2018convergence, du2018algorithmic, xu2023linear}. 

\subsection{Local PL Inequality and Descent Lemma for Over-parametrized GD}\label{subsec:local_inequality_gd}
In \S\ref{subsec:challenges_over}, we saw that there does not exist a global PL constant or a global smoothness constant for ~\ref{eqn:obj_over}. However, to prove that GD converges linearly to a global minimum of ~\ref{eqn:obj_over}, it is sufficient for Descent lemma and PL inequality to hold for iterates of GD. The following theorem formally characterizes the local PL inequality and Descent lemma for ~\ref{eqn:obj_over}.

\begin{thm}[Local Descent Lemma and PL condition for GD]\label{thm:local_inequality_upper_bound}
At the $t$-th iteration of GD applied to the ~\ref{eqn:obj_over}, Descent lemma and PL inequality hold with local smoothness constant $K_t$ and PL constant $\mu_t$, i.e., 
\begin{align}
    L(t\!+\!1) \leq L(t) -\bigl(\eta_t -\frac{K_t\eta_t^2}{2}\bigl)\lVert \nabla L(t)\rVert_F^2\,, \quad
    \frac{1}{2}\lVert \nabla L(t)\rVert_F^2 \geq \mu_t L(t)\,.
\end{align}
Moreover, if the step size $\eta_t$ satisfies $ \eta_t>0$ and $\eta_t K_t<2$,
then the following inequality holds
\begin{align}\label{eqn:upper_bound_loss}
    L(t\!+\!1) \leq L(t)(1-2\mu_t\eta_t + \mu_t K_t\eta_t^2):= L(t)\rho(\eta_t, t)\,,
\end{align}
where
\begin{align}
    \mu_t &= \mu\sigma_{\min}^2(\mathcal{T}_t)\label{eqn:pl_over}\,, \\
    K_t &= K\sigma_{\max}^2(\mathcal{T}_t)\! +\! \sqrt{2KL(t)}\! +\! 6K^2 \sigma_{\max}(W(t)) L(t)\eta_t^2\! +\! 3K\sigma_{\max}^2(\mathcal{T}_t)\sqrt{2KL(t)} \eta_t\label{eqn:smooth_over}\,,
\end{align}
and we use $L(t)$ and $\mathcal{T}_t$ as shorthands for $L(W_1(t), W_2(t))$ and $\mathcal{T}(\ \cdot\ ; W_1(t), W_2(t))$, resp.
\end{thm}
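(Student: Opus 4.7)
The plan is to apply the $K$-smoothness of $\ell$ to the iterate $W(t+1)=W_1(t+1)W_2(t+1)^\top$, recognizing that the linear-in-$\eta_t$ part of $W(t+1)-W(t)$ is exactly $\mathcal{T}^*\mathcal{T}(\nabla\ell)$. First I would expand the update explicitly. Writing $W_i(t+1)=W_i(t)-\eta_t\nabla_{W_i}L$ and multiplying out gives
\begin{equation*}
W(t+1)-W(t) \;=\; -\eta_t\,\tilde A \;+\; \eta_t^2\,\tilde B,\qquad \tilde A := W_1W_1^\top\nabla\ell + \nabla\ell\, W_2W_2^\top,\quad \tilde B := \nabla\ell\, W^\top\nabla\ell,
\end{equation*}
where I have used $W_2W_1^\top=W^\top$ to simplify $\tilde B$. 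The key observation is that $\tilde A = \mathcal{T}^*\mathcal{T}(\nabla\ell)$, where $\mathcal{T}^*$ is the adjoint of $\mathcal{T}$ (checked directly from the definitions). This yields two identities I will use repeatedly: $\langle\nabla\ell,\tilde A\rangle = \|\mathcal{T}(\nabla\ell)\|_F^2 = \|\nabla L\|_F^2$ and $\|\tilde A\|_F\le \sigma_{\max}(\mathcal{T})\|\nabla L\|_F$.

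The PL inequality is then immediate: $\|\nabla L(t)\|_F^2 = \|\mathcal{T}_t(\nabla\ell(t))\|_F^2 \ge \sigma_{\min}^2(\mathcal{T}_t)\|\nabla\ell(t)\|_F^2 \ge 2\mu\,\sigma_{\min}^2(\mathcal{T}_t)\,\ell(W(t)) = 2\mu_t L(t)$, by Assumption~\ref{assump_non_strong} and Assumption~\ref{assump_loss}. For the Descent Lemma, I apply the smoothness inequality \eqref{eqn:smooth} for $\ell$ at $W(t+1)$ and $W(t)$. Plugging in the expansion of $W(t+1)-W(t)$, the linear term in the smoothness inequality contributes $-\eta_t\|\nabla L\|_F^2 + \eta_t^2\langle\nabla\ell,\tilde B\rangle$, and the quadratic term expands into $\frac{K}{2}(\eta_t^2\|\tilde A\|_F^2 - 2\eta_t^3\langle\tilde A,\tilde B\rangle + \eta_t^4\|\tilde B\|_F^2)$. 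Collecting everything as $-\eta_t\|\nabla L\|_F^2 + \tfrac{K_t\eta_t^2}{2}\|\nabla L\|_F^2$ requires showing each of the four leftover pieces is bounded by the corresponding term of $K_t$ times $\tfrac{\eta_t^2}{2}\|\nabla L\|_F^2$.

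The core technical work is bounding those four pieces, and this is where I expect the real effort to go. The toolkit consists of: (i) $\|\tilde A\|_F\le\sigma_{\max}(\mathcal{T}_t)\|\nabla L\|_F$, giving the $K\sigma_{\max}^2(\mathcal{T}_t)$ term in $K_t$; (ii) setting $P=\nabla\ell\,W_2$, $Q=\nabla\ell^\top W_1$, so that $\tilde B=PQ^\top$ and $\|\tilde B\|_F\le\|P\|_F\|Q\|_F\le \tfrac{\|P\|_F^2+\|Q\|_F^2}{2}=\tfrac{1}{2}\|\nabla L\|_F^2$; (iii) the alternative bound $\|\tilde B\|_F\le \|\nabla\ell\|_2\,\sigma_{\max}(W)\,\|\nabla\ell\|_F$ coming from $\tilde B=\nabla\ell\,W^\top\nabla\ell$; (iv) the smoothness consequence $\|\nabla\ell\|_F^2\le 2KL$, which holds because $\ell^\star=0$ and $\ell$ is $K$-smooth. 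With (ii) and Cauchy--Schwarz, $|\langle\nabla\ell,\tilde B\rangle|\le \|\nabla\ell\|_F\cdot \tfrac{\|\nabla L\|_F^2}{2}\le \sqrt{2KL}\cdot \tfrac{\|\nabla L\|_F^2}{2}$, giving the $\sqrt{2KL(t)}$ term. Combining (i) and (ii) handles $|\langle\tilde A,\tilde B\rangle|\le \sigma_{\max}(\mathcal{T}_t)\|\nabla L\|_F\cdot \tfrac{\|\nabla L\|_F^2}{2}$, which after using $\|\nabla L\|_F\le \sigma_{\max}(\mathcal{T}_t)\sqrt{2KL}$ produces the $3K\sigma_{\max}^2(\mathcal{T}_t)\sqrt{2KL(t)}\,\eta_t$ term. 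Finally, multiplying the two bounds in (ii) and (iii) as $\|\tilde B\|_F^2\le \tfrac{\|\nabla L\|_F^2}{2}\cdot 2KL\,\sigma_{\max}(W)$ (using $\|\nabla\ell\|_2\le\|\nabla\ell\|_F\le\sqrt{2KL}$) produces the $6K^2\sigma_{\max}(W(t))L(t)\,\eta_t^2$ term. The main obstacle is precisely this last step: getting a single factor of $\sigma_{\max}(W)$ (rather than a loose $\sigma_{\max}(W_1)\sigma_{\max}(W_2)$) requires combining one "spectral" bound on $\|\tilde B\|_F$ with one "$\|\nabla L\|$-side" bound, rather than squaring either in isolation.

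Once the Descent Lemma and PL inequality are established, \eqref{eqn:upper_bound_loss} follows by substitution: when $\eta_t K_t<2$, the quantity $\eta_t-\tfrac{K_t\eta_t^2}{2}>0$, so the Descent Lemma combined with $\|\nabla L(t)\|_F^2\ge 2\mu_t L(t)$ yields $L(t+1)\le L(t)\bigl(1-2\mu_t\eta_t+\mu_t K_t\eta_t^2\bigr)=L(t)\rho(\eta_t,t)$, completing the proof.
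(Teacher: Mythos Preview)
Your proposal is correct, but it takes a genuinely different route from the paper's proof. The paper works in $(W_1,W_2)$-space: it writes the exact second-order Taylor expansion of $L$ with integral remainder, $L(t{+}1)=L(t)-\eta_t\|\nabla L(t)\|_F^2+\eta_t^2\|\nabla L(t)\|_F^2\int_0^1(1-\tau)\langle g_t,H(\tau)g_t\rangle\,d\tau$, and then bounds the directional Hessian $\langle g_t,H(\tau)g_t\rangle$ of $L$ uniformly over the segment $\tau\in[0,1]$; this requires controlling $\|\nabla\ell(A(\tau))-\nabla\ell(A(0))\|_F$ via the $K$-smoothness of $\ell$, which is where the extra $\eta_t$- and $\eta_t^2$-order terms in $K_t$ originate. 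You instead work in $W$-space, applying the global smoothness inequality of $\ell$ directly to $W(t{+}1),W(t)$ and expanding the product update $W(t{+}1)-W(t)=-\eta_t\tilde A+\eta_t^2\tilde B$. Your key step---recognizing $\tilde A=\mathcal{T}_t^*\mathcal{T}_t(\nabla\ell)$ and hence $\|\tilde A\|_F\le\sigma_{\max}(\mathcal{T}_t)\|\nabla L\|_F$---is exactly what lets you factor $\|\nabla L\|_F^2$ out of every term, which is the refinement over the ``prior-work'' product-space argument that the paper criticizes in Appendix~\ref{app:comparison} (there one bounds $\|\tilde A\|_F\le\sigma_{\max}^2(\mathcal{T}_t)\|\nabla\ell\|_F$ and incurs an extra $\kappa(\mathcal{T}_t)$). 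Your argument is more elementary (no Hessian of $L$, no integral remainder, no evaluation at intermediate points) and in fact yields sharper numerical constants on the two highest-order terms ($1$ and $1$ in place of the paper's $3$ and $6$), so it proves a slightly stronger descent lemma which a fortiori gives \eqref{eqn:smooth_over}. The paper's approach, on the other hand, supplies a genuine bound on the Hessian of $L$ along the GD segment, which has independent geometric meaning as a ``local smoothness'' of the overparametrized landscape and is not visible from your product-space computation.
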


The proof of the above theorem can be found in Appendix~\ref{app:kmu_tilde}. Notice that $\mu_t, K_t$ are not actually constants since they vary w.r.t. the iteration index $t$. In this work, we adopt the convention to call them local PL and smoothness constants to be consistent with the terminology in \S\ref{sec:prob_descrip}.

In \S\ref{subsec:gd_non_over}, we showed that as long as one chooses $\eta_t=\bar \eta$, with $0<\bar \eta<\frac{2}{K}$, GD in \eqref{eqn:gd} for ~\ref{eqn:obj_non} achieves linear convergence, with an optimal rate $(1-\frac{\mu}{K})$ given when $\eta_t=\frac{1}{K}$. However, we argue Theorem~\ref{thm:local_inequality_upper_bound} does not imply linear convergence of overparametrized GD even though there always exists sufficiently small $\eta_t>0$ such that $\eta_t K_t<2$.
The difference is due to the fact that $\mu_t$ and $K_t$ are changing w.r.t. the iterations. Specifically, if $\lim\limits_{t\to\infty}\frac{\mu_t}{K_t}=0$, one has $\lim\limits_{t\to\infty}\inf_{0<\eta_t<\frac{2}{K_t}}\rho(\eta_t ,t)=1$. Thus, \eqref{eqn:upper_bound_loss} does not necessarily imply that the product of the per-iterate descent $\Pi_{l=0}^t \rho(\eta_l,l)$ 
goes to zero. 

\myparagraph{Towards linear convergence} Nevertheless, 
if there exists $\eta_t>0$ that can simultaneously satisfy the constraint $\eta_t K_t < 2$ and the uniformly bound $1-2\mu_t\eta_t + \mu_t K_t\eta_t^2\leq\bar\rho<1$, for all $t$, one can expect the linear convergence 
\begin{align}\label{eqn:linear}
    L(t\!+\!1) \leq \rho(\eta_t, t) L(t) \leq \bar\rho L(t)\leq\bar\rho^{t+1}L(0)\,.
\end{align}
Guaranteeing a uniform bound as in \eqref{eqn:linear}, requires one to keep track and control the evolution of $W(t)$, $\mathcal{T}_t$, $\eta_t$ and $L(t)$. In the next section, we will address these issues. For the time being, we focus next on how the $\mu_t, K_t$ in Theorem \ref{thm:local_inequality_upper_bound} depend on the $\mu, K$, $L(t)$, $\eta_t$ and the current weights.

\myparagraph{Characterization of $\mu_t, K_t$} Theorem~\ref{thm:local_inequality_upper_bound} shows how overparametrization affects the local PL constant and smoothness constant, i.e., $\mu_t, K_t$, via a time-varying linear operator $\mathcal{T}_t$. Specifically, the PL constant in \eqref{eqn:pl_over} is the PL constant of $\ell(W)$, i.e., $\mu$, scaled by $\sigma_{\min}^2(\mathcal{T}_t)$. Moreover, the smoothness constant in \eqref{eqn:smooth_over} consists of two parts. The first one is $K\sigma_{\max}^2(\mathcal{T}_t)$, which represents the smoothness constant of $\ell(W)$, i.e., $K$, scaled by $\sigma_{\max}^2(\mathcal{T}_t)$. The rest of the terms decrease to zero as the loss $L(t)$ approaches zero.

\myparagraph{Effect of overparametrization on optimization}
\Eqref{eqn:upper_bound_loss} in Theorem\ref{thm:local_inequality_upper_bound} characterizes the rate of decrease per iteration of ~\ref{eqn:obj_over} trained via GD. Around global minimum, $\rho(\eta_t, t)$ takes a simplified form, i.e., $\rho(\eta_t, t)\!=\!1\!-\!2\mu\sigma_{\min}(\mathcal{T}_t)\eta_t \!+\! \mu K \sigma_{\min}(\mathcal{T}_t)\sigma_{\max}(\mathcal{T}_t)\eta_t^2 $. By minimizing the $\rho(\eta_t, t)$ over $\eta_t$, one can obtain the optimal local rate decrease per iteration
\begin{align}
    \min_{0<\eta_t<\frac{2}{K_t}} \rho(\eta_t, t)=1-\frac{\mu}{K}\cdot\frac{\sigma_{\min}(\mathcal{T}_t)}{\sigma_{\max}(\mathcal{T}_t)}:=1-\frac{\mu}{K}\cdot\frac{1}{\kappa(\mathcal{T}_t)}\,,
\end{align}
where we use $\kappa(\mathcal{T}_t)$ to denote the condition number of $\mathcal{T}_t$. Compared with the optimal convergence rate of non-overparametrized model in \S\ref{subsec:gd_non_over}, i.e., $1\!-\!\frac{\mu}{K}$, the local rate of decrease of overparametrized model is worsened by $\kappa(\mathcal{T}_t)$. Moreover, the local rate of decrease becomes faster as $\mathcal{T}_t$ is well-conditioned.

In the next section, we will show that proper choice of initialization and step sizes $\eta_t$ does indeed lead to linear convergence of overparametrized GD, and a sufficient condition for $\mathcal{T}_t$ to be well-conditioned.

\subsection{Linear Convergence of ~\ref{eqn:obj_over} with GD}\label{subsec:converg_over}

In this section, we first state a theorem which shows that GD in~\eqref{eqn:gd_over} converges linearly to a global minimum of ~\ref{eqn:obj_over} (See Theorem~\ref{thm:linear_convergence}) under certain constraints on $\eta_t$ and the initialization.
Then, based on the convergence rate in Theorem~\ref{thm:linear_convergence}, we propose an adaptive step size scheduler that accelerates the convergence.
We refer the reader to Table~\ref{notation-table} for the definition of various quantities appearing in this section.

\renewcommand{\arraystretch}{1.5}

\begin{table}[htp]
\footnotesize
\centering
\caption{Notation}
\vspace{5pt}
\label{notation-table}
\begin{tabular}{|C{1.2cm}|C{8cm}|C{6.2cm}|}
\hline
\textbf{Symbol} & \textbf{Definition} & \textbf{Description} \\ 
\hline

$D(t)$ &
$W_1^\top(t) \,W_1(t)\;-\;W_2^\top(t)\,W_2(t)$ &
Imbalance: (almost) training-invariant quantity
\\ \hline

$\lambda_{-}$ 
& $\max\!\bigl(\lambda_{\max}(-D(0)),\,0\bigr)$ 
& \multirow{5}{*}{\parbox{6.2cm}{
\textbf{Quantities are related to eigenvalues of \textit{imbalance}. They are used to define $\alpha_1, \alpha_2,\beta_1,\beta_2$}\\[4pt]
}}
\\ \cline{1-2}

$\lambda_{+}$ 
& $\max\!\bigl(\lambda_{\max}(D(0)),\,0\bigr)$  
& 
\\ \cline{1-2}

$\underline{\Delta}$ 
& $\max\!\bigl(\lambda_{n}(D(0)), 0\bigr)\;+\;\max\!\bigl(\lambda_{m}(-D(0)), 0\bigr)$
& 
\\ \cline{1-2}

$\Delta_{+}$ 
& $\lambda_{+} \;-\;\max\!\bigl(\lambda_{n}(D(0)), 0\bigr)$ 
& 
\\ \cline{1-2}

$\Delta_{-}$ 
& $\lambda_{-} \;-\;\max\!\bigl(\lambda_{m}(-D(0)), 0\bigr)$ 
& 
\\ \hline

$\alpha_1$
& 
\(\displaystyle
\frac{
  -\Delta_{+} - \Delta_{-} 
  + \sqrt{(\Delta_{+} + \underline{\Delta})^2 + 4\beta_1^2}
  + \sqrt{(\Delta_{-} + \underline{\Delta})^2 + 4\beta_1^2}
}{2}
\)
& 
Lower bound on $\sigma_{\min}^2(\mathcal{T}_t)$
\\[10pt] \hline

$\alpha_2$
& 
\(\displaystyle
\frac{\lambda_{+} + \sqrt{\lambda_{+}^2 + 4\beta_2^2}}{2}
\;+\;
\frac{\lambda_{-} + \sqrt{\lambda_{-}^2 + 4\beta_2^2}}{2}
\)
& 
Upper bound on $\sigma_{\max}^2(\mathcal{T}_t)$
\\ [10pt]\hline

$\beta_1$
& 
$\displaystyle
\max\bigl(0,\, \sigma_{\min}(W^*)
- \sqrt{\tfrac{K}{\mu}}\;\|W(0) - W^*\|_F\bigr)$
& 
Lower bound on $\sigma_{\min}\!\bigl(W_2(t)W_1(t)\bigr)$
\\ [6pt]\hline

$\beta_2$
& 
$\displaystyle
\sigma_{\max}(W^*)
+ \sqrt{\tfrac{K}{\mu}}\;\|W_1(0)W_2^\top(0) - W^*\|_F$
& 
Upper bound on $\sigma_{\max}\!\bigl(W_2(t)W_1(t)\bigr)$
\\ [6pt]\hline

\end{tabular}
\end{table}

We now present our main result on the linear convergence of GD for ~\ref{eqn:obj_over}.
\begin{thm}[Linear convergence of GD for ~\ref{eqn:obj_over}]\label{thm:linear_convergence}
Suppose $\ell$ satisfies Assumption~\ref{assump_non_strong} and Assumption~\ref{assump_loss}, and given $h \geq \min(m,n)$, we assume GD in \eqref{eqn:gd_over} is initialized so that $\alpha_1 > 0$. Then there exists $\eta_{\max}>0$ such that $\forall \eta_0, \eta_t$ that satisfies $0<\eta_0< \eta_{\max}$ and
\begin{align}\label{cond:eta_t}
    \eta_0\leq \eta_t \leq \min\bigl((1+\eta_0^2)^{\frac{t}{2}} \eta_0, \frac{1}{K_t}\bigl)\,,
\end{align}
one can derive the following bound for each iteration: $\bar \mu \leq \mu_t\leq K_t\leq \bar K_t$, and
\begin{align}\label{eqn:bound_per_iter}
    L(t\!+\!1) \leq L(t) \rho(\eta_t, t) \leq L(t)  \bar \rho(\eta_0, 0) \,.
\end{align}
Moreover, based on \eqref{eqn:bound_per_iter}, GD algorithm in \eqref{eqn:gd_over} converges linearly 
\begin{align}\label{eqn:bound_constant}
    L(t\!+\!1) 
    \leq L(0) \bar\rho(\eta_0, 0)^{t+1}\,,
\end{align}
where
\begin{align}
    \bar\rho(\eta_t, t) &= 1-2\bar\mu\eta_t+\bar\mu\bar K_t\eta_t^2\nonumber\,,\quad \bar\mu=\mu \bigl[\alpha_1+2\alpha_2\bigl(1-\exp(\sqrt{\eta_0})\bigl)\bigl]\,,\quad \Delta = (1+\eta_0^2)\bar\rho(\eta_0, 0) \,,\nonumber\\
    \bar K_t &= \sqrt{2KL(0)\bar\rho(\eta_0, 0)^t}\! +\! 6K^2\beta_2 L(0)\eta_0^2\Delta^t \!+\! K\exp(\sqrt{\eta_0})\alpha_2\bigl[1\!+\!3\sqrt{2KL(0)\Delta^t}\eta_0\bigl]\,.\nonumber  
\end{align}
\end{thm}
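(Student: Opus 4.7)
The plan is to reduce the theorem to Theorem~\ref{thm:local_inequality_upper_bound} by establishing uniform bounds $\mu_t \geq \bar\mu$ and $K_t \leq \bar K_t$ along the entire GD trajectory, and then iterating \eqref{eqn:upper_bound_loss}. Concretely, I would prove, by joint induction on $t$, the loss bound $L(t) \leq L(0)\bar\rho(\eta_0,0)^t$, a weighted bound $\eta_t^2 L(t) \leq \eta_0^2 L(0) \Delta^t$ with $\Delta = (1+\eta_0^2)\bar\rho(\eta_0,0)$, and the singular-value bounds $\alpha_1 \leq \sigma_{\min}^2(\mathcal{T}_t)$ and $\sigma_{\max}^2(\mathcal{T}_t) \leq \exp(\sqrt{\eta_0})\alpha_2$. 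The constant $\eta_{\max}$ will then be chosen as the largest $\eta_0$ consistent with $\Delta < 1$ and with $\bar\mu>0$.

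The first piece is approximate conservation of the imbalance $D(t) = W_1^\top(t)W_1(t) - W_2^\top(t)W_2(t)$. A direct computation from \eqref{eqn:gd_over}--\eqref{eqn:gradient_over} shows that $D(t+1) - D(t)$ is purely quadratic in $\eta_t \nabla\ell(W(t))$; combined with the smoothness bound $\|\nabla\ell(W(t))\|_F^2 \leq 2K L(t)$ and the inductive weighted estimate $\eta_t^2 L(t) \leq \eta_0^2 L(0)\Delta^t$, the cumulative drift $\|D(t) - D(0)\|$ is bounded by a geometric series in $\Delta$, and hence is of order $\eta_0$ provided $\Delta < 1$.

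Next I would translate imbalance control into singular-value bounds for $\mathcal{T}_t$. By \eqref{eqn:gradient_over}, $\sigma_{\min}^2(\mathcal{T}_t)$ and $\sigma_{\max}^2(\mathcal{T}_t)$ are determined by the spectra of $W_1 W_1^\top$ and $W_2 W_2^\top$, which in turn are controlled by the eigenvalues of $D(0)$ (giving $\lambda_+, \lambda_-, \underline{\Delta}, \Delta_+, \Delta_-$) together with the singular values of $W(t) = W_1(t)W_2^\top(t)$. Strong convexity of $\ell$ yields $\|W(t) - W^*\|_F \leq \sqrt{2L(t)/\mu}$, so the inductive loss bound implies $\sigma_{\min}(W(t)) \geq \beta_1$ and $\sigma_{\max}(W(t)) \leq \beta_2$; after absorbing the $O(\sqrt{\eta_0})$ correction from the drift estimate of Step~1 into the $\exp(\sqrt{\eta_0})$ and $1-\exp(\sqrt{\eta_0})$ factors, one recovers exactly the quantities $\alpha_1, \alpha_2$ used to define $\bar\mu$ and $\bar K_t$. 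Plugging these into \eqref{eqn:pl_over}--\eqref{eqn:smooth_over} and using $L(t) \leq L(0)\bar\rho^t$ and $\eta_t^2 L(t) \leq \eta_0^2 L(0) \Delta^t$ gives $\mu_t \geq \bar\mu$ and $K_t \leq \bar K_t$. Theorem~\ref{thm:local_inequality_upper_bound} then yields $L(t+1) \leq L(t)\rho(\eta_t,t)$, and closing the induction requires verifying $\rho(\eta_t,t) \leq \bar\rho(\eta_0,0)$ under \eqref{cond:eta_t}: this reduces to checking that the quadratic $\eta \mapsto 1 - 2\bar\mu\eta + \bar\mu\bar K_t\eta^2$ at $\eta_t$ does not exceed its value at $(\eta_0,0)$, which holds because $\bar K_t$ decays geometrically in $t$ while $\eta_t \in [\eta_0,(1+\eta_0^2)^{t/2}\eta_0]$.

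The hard part will be closing the induction in Step~1. Since $\eta_t$ is allowed to \emph{grow} with $t$, continuous-time conservation of $D$ under gradient flow is not available, and the per-iteration drift $O(\eta_t^2 L(t))$ must be summed against the loss decay in a way consistent with the very bounds on $\mathcal{T}_t$ used to define $\bar K_t$. This circular dependence is precisely why the schedule $\eta_t \leq (1+\eta_0^2)^{t/2}\eta_0$ appears: it is the largest growth compatible with geometric summability of the drift, $\eta_t^2 L(t) \leq \eta_0^2 L(0)\Delta^t$ with $\Delta<1$. Threading this estimate consistently through the joint induction is what forces the restriction $\eta_0 < \eta_{\max}$ and constitutes the technical core of the proof.
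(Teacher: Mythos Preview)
Your proposal is essentially the paper's proof: a joint induction on (i) the loss decay $L(t)\leq L(0)\bar\rho(\eta_0,0)^t$, (ii) control of the imbalance drift $\|D(t)-D(0)\|_F$ via the per-step bound $\|D(k+1)-D(k)\|_F\leq 2K\eta_k^2\sigma_{\max}^2(\mathcal{T}_k)L(k)$ summed geometrically in $\Delta$, (iii) spectral bounds on $W(t)$ via strong convexity, and (iv) spectral bounds on $\mathcal{T}_t$ via (ii) and (iii), feeding back into $\bar\mu\leq\mu_t$ and $K_t\leq\bar K_t$ through Theorem~\ref{thm:local_inequality_upper_bound}.

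Two small points to tighten. First, the inductive claim $\sigma_{\min}^2(\mathcal{T}_t)\geq\alpha_1$ cannot hold as stated, since the drift can only degrade $\sigma_{\min}^2(\mathcal{T}_t)$ below its initial bound $\alpha_1$; the correct induction hypothesis is the corrected version $\sigma_{\min}^2(\mathcal{T}_t)\geq\alpha_1+2\alpha_2(1-\exp(\sqrt{\eta_0}))$, which you do arrive at later. Second, your justification for $\rho(\eta_t,t)\leq\bar\rho(\eta_0,0)$ (``$\bar K_t$ decays while $\eta_t$ grows'') is not the operative mechanism. The actual argument is: since $\eta_t\leq 1/K_t$, the quadratic $\eta\mapsto 1-2\bar\mu\eta+\bar\mu K_t\eta^2$ is decreasing on $[\eta_0,\eta_t]$, so its value at $\eta_t$ is at most its value at $\eta_0$; then $K_t\leq\bar K_0$ finishes. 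Without invoking the upper constraint $\eta_t\leq 1/K_t$ in \eqref{cond:eta_t}, the growing step size could push the quadratic past its minimum and the bound would fail.
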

The proof of the above theorem is presented in Appendix~\ref{app:convergence}. The above theorem states GD enjoys linear convergence for ~\ref{eqn:obj_over} under the assumptions that $\alpha_1>0$ and certain constraints on $\eta_t$. We make the following remarks:

\myparagraph{Conditions on the initialization for linear convergence} From Theorem \ref{thm:linear_convergence}, we see that if the initialization $\{W_1(0),W_2(0)\}$ satisfies $\alpha_1>0$, then GD converges linearly with an appropriate choice of the step size. The constraints on $\eta_0$ ensure that $\bar\mu\!>\!0$ and $0\!<\!\bar\rho(\eta_0, 0)\!<\!1$. The assumptions on $\alpha_1$ has been studied in~\cite{minconvergence} where the authors show that $\alpha_1>0$ when there is either 1) sufficient imbalance $\underline{\Delta}>0$ or 2) sufficient margin $\beta_1>0$, where $\underline{\Delta}, \beta_1$ is defined in Table~\ref{notation-table}. In \S\ref{subsec:condition_tau}, we present two conditions that ensure $\alpha_1>0$ which covers commonly used Gaussian initialization, Xavier initialization, and He initialization. Please see \S\ref{subsec:condition_tau} for a detailed proof and discussions.

\myparagraph{Evolution of smoothness constant} One unique feature in our Theorem is the time-varying upper bound $\bar{K}_t$ on the local smoothness constant $K_t$ along GD iterates. The constraints on $\eta_0$ ensures that $0<\bar\rho(\eta_0, 0), \Delta<1$. Thus, $\bar K_t$ monotonically decrease to $K\exp(\sqrt{\eta_0})\alpha_2$ w.r.t. $t$. The fact that $\bar{K}_t$ is monotonically decreasing w.r.t. $t$ suggests that the local optimization landscape gets more benign as the training proceeds. Thus in order to achieve a fast rate of convergence, there is a need for a time-varying choice of step size that adapts to the changes in the local smoothness constant $K_t$ (because theoretically, the optimal choice is $\eta_t=\frac{1}{K_t}$, based on \eqref{eqn:upper_bound_loss}). 

\myparagraph{Requirement on the step size} We have mentioned in the previous remark that a time-varying step size could be beneficial for convergence. However, prior analyses~\citep{arora2018convergence, du2019width,xu2023linear} are all restricted to a constant or decaying step size. The main reason is that one requires a uniform spectral bound on $\mathcal{T}_t$ and $W(t)$ throughout the entire GD trajectory to establish linear convergence and such a uniform bound has only been shown under a constant or decaying step size. In our analysis, we show a similar spectral bound can be obtained even with a growing step size (See Appendix~\ref{app:convergence}), as long as $\eta_t\leq (1+\eta_0^2)^{\frac{t}{2}}\eta_0$, but not too much $\eta_t\leq \frac{1}{K_t}$ (ensures a sufficient decrease in the loss at every iteration).
The first bound diverges to infinity exponentially fast, and the second bound has a growing lower bound $\frac{1}{\bar{K}_t}$ which monotonically increases to $\frac{1}{K\exp(\sqrt{\eta_0})\alpha_2}$. Thus, initially, the step size is restricted to $[\eta_0,(1+\eta_0^2)^{\frac{t}{2}}\eta_0]$. As the training goes on, the binding constraint becomes $\eta_t \leq \frac{1}{K_t}$, suggesting that GD can take a step that achieves the theoretically largest descent in the loss.
In \cite{davis2024gradient}, the authors study GD with carefully designed adaptive step sizes (GDPolyak) applied to loss functions that exhibit quartic growth away from the solution set. Their analysis covers rank-overparameterized matrix sensing problems and student-teacher setups involving a single neuron. GDPolyak differs from our method in how the step sizes are determined. In our approach, the step size at each iteration follows a classic optimization strategy: we estimate the next iterate’s loss based on the current loss and weights (see \eqref{eqn:bound_per_iter}), then choose the step size by minimizing this upper bound (see \eqref{eqn:choice_eta}). As a result, our adaptive schedule requires accurate characterization of the local PL constant and local smoothness constant. By contrast, GDPolyak alternates between constant step sizes and Polyak step sizes, relying on the latter to adapt to local features of the loss landscape.

\myparagraph{Local rate of convergence } In Theorem~\ref{thm:linear_convergence}, we show $L(t\!+\!1)\!\leq\! L(t)\rho(\eta_t, t)$ and $\bar\mu\!\leq\! \mu_t\!\leq\! K_t\!\leq\! \bar K_t$ when $\eta_t$ satisfies certain constraints. When $t$ is sufficiently large, or equivalently around any global minimum of ~\ref{eqn:obj_over}, the optimal rate of convergence is achieved as (via a proper choice of $\eta_t$)
\begin{align}
    \min_{\eta_t} \rho(\eta_t, t) = 1\!-\!\frac{\mu}{K}\cdot\frac{\mu_t}{K_t} \leq\! 1\!-\!\frac{\mu}{K}\cdot\frac{\alpha_1\!+\!2\alpha_2\bigl(1\!-\!\exp(\sqrt{\eta_0})\bigl)}{\exp(\sqrt{\eta_0})\alpha_2}\nonumber\,.
\end{align}
Notice the optimal local rate of convergence can be arbitrarily close to $1-\frac{\mu}{K}\cdot\frac{\alpha_1}{\alpha_2}$ as $\eta_0$ decreases. Moreover, compared with the optimal convergence rate of the non-overparametrized model, the optimal local rate of convergence of the overparametrized model is worsened by $\frac{\alpha_1}{\alpha_2}$, which is an upper bound on $\kappa(\mathcal{T}_t)$. In \S\ref{subsec:condition_well_tau}, we provide a sufficient condition for $\mathcal{T}_0$ to be well-conditioned, i.e. $\frac{\alpha_1}{\alpha_2}$ to be close to one, and numerically verfity this condition in Appendix\ref{app:simulation}.

\begin{table}[h!]
    \caption{Comparison of convergence rates between prior work and our work.}
    \vspace{5pt}
    \label{tab:comparison_prior}
    \begin{center}
    \begin{tabular}{|>{\centering\arraybackslash}p{2.8cm}|>{\centering\arraybackslash}m{2.0cm}|>{\centering\arraybackslash}m{1.5cm}|>{\centering\arraybackslash}m{2.7cm}|>{\centering\arraybackslash}m{3.0cm}|}
        \hline
        & loss & step size & initialization &local rate of convergence \\
        \hline
        \cite{arora2018convergence} & squared loss& constant& $D(0)\approx 0, \beta_1>0$& $1-\Omega(\frac{\mu\alpha_1^2}{K\alpha_2^2})$ \\
        \hline
        \cite{du2018algorithmic} & squared loss& decreasing &$D(0)\approx 0, \beta_1>0$& no explicit rate \\
        \hline
        \cite{xu2023linear} & squared loss& constant& $\alpha_1>0$ & $1-\Omega(\frac{\mu\alpha_1^2}{K\alpha_2^2})$ \\
        \hline
        our work & general & adaptive & $\alpha_1>0$ & $1-\Omega(\frac{\mu\alpha_1}{K\alpha_2})$ \\
        \hline
    \end{tabular}
    \end{center}
\end{table}
\paragraph{Comparison in local rate of convergence with SOTA.} We compare our results with prior works studying the same problem~\cite{arora2018convergence, du2018algorithmic, xu2023linear} (See Table~\ref{tab:comparison_prior}). Moreover, we present a detailed discussion on the difference between proof techniques used in this work and prior work, and how it leads to different convergence rates. Please see Appendix~\ref{app:comparison} for details.

\paragraph{Choices of the step size. } 
Recall that for non-overparamterized GD, we have $\ell(t\!+\!1)\leq (1\!-\!2\mu \eta_t\!+\!\mu K\eta_t^2)\ell(t)$, there exists an optimal choice of $\eta_t^*=\frac{1}{K}$ that minimize the theoretical upper bound on $\ell(t\!+\!1)$.
In Theorem~\ref{thm:local_inequality_upper_bound} and Theorem~\ref{thm:linear_convergence}, we show $L(t\!+\!1) \leq h(\eta_t, t)L(t)$ under certain conditions on $\eta_t$ where $h(\eta_t, t)\in\{\rho(\eta_t, t), \bar\rho(\eta_t, t)\}$.
It is natural to use a similar approach to select step size at each iteration. To achieve the optimal step size, it suffices to minimize the upper bound on $L(t\!+\!1)$ to achieve the most decrease at each iteration. The difference is that we have a time-varying upper bound on $L(t\!+\!1)$ thus the minimizer $\eta^*_t$ depends on time, and our choice of $\eta^*_t$ must respect our constraint on step size in \eqref{cond:eta_t}. This leads to the following choice for $\eta^*_t$: 
\begin{align}\label{eqn:choice_eta}
    \eta_t^* = \argmin_{\eta_t \leq \min\{(1+\eta_0^2)^{t/2}\eta_0,\frac{1}{K_t}\}} h(\eta_t, t)\,.
\end{align}
Since $\rho(\eta_t, t), \bar\rho(\eta_t, t)$ are quadratic in terms of $\eta_t$, so $\eta_t^*$ takes the following closed-form solutions depending on which upper bound to use:
\begin{equation}\label{eqn:adaptive_step_size}
  \eta_t^* =
    \begin{cases}
      \min\bigl((1+\eta_0^2)^{t/2}\eta_0, \frac{1}{ K_t}\bigl) & \text{if $h(\eta_t) = \rho(\eta_t, t)$}\,,\\
      \min\bigl((1+\eta_0^2)^{t/2}\eta_0, \frac{1}{\bar K_t}\bigl) & \text{if $h(\eta_t) = \bar\rho(\eta_t, t)$}\,.
    \end{cases}       
\end{equation}
The above choices of the adaptive step sizes satisfy the constraints in Theorem~\ref{thm:linear_convergence}, so they both guarantee linear convergence for over-parametrized GD. Moreover, such choices of $\eta_t$ give us the following 
theoretical bound on $L(t\!+\!1)$, i.e.,
\begin{align}\label{eqn:theoretical_bounds_loss}
    L(t\!+\!1) \leq L(0) \prod_{k=1}^t h(\eta_k^*, k)\,.
\end{align}
In \S\ref{subsec:rate_vs_actual_simulations}, we provide numerical verification of the close alignment between the theoretical bounds stated above and the actual convergence rate. We also observe an accelerated convergence when employing the step sizes specified in~\eqref{eqn:choice_eta} compared with the one proposed in~\cite{xu2023linear} and Backtracking line search. We refer the readers to \S\ref{sec:experiment} for simulation results.

\subsection{Proof Sketch}\label{subsec:proof_sketch}
In this section, we present a proof sketch of Theorem~\ref{thm:linear_convergence} that highlights the key technical contributions. In Theorem~\ref{thm:local_inequality_upper_bound}, we established a local PL inequality and a local Descent Lemma. Moreover, one can show the upper bound
\begin{align}
    L(t \!+\! 1)\!\leq\! L(t)\,\bigl(1 \!-\! 2\,\mu_t\,\eta_t \!+\! \mu_t\,K_t\,\eta_t^2\bigr)
    \!=\! L(t)\,\rho(\eta_t, t)\,.
\end{align}
As discussed in \S\ref{subsec:local_inequality_gd}, this inequality alone does not guarantee linear convergence. For instance, if $\lim_{t\to\infty}\frac{\mu_t}{K_t}=0,$ then $\lim_{t\to\infty}\rho(\eta_t,t) = 1.$
To show there exists a $0<\bar\rho<1$ such that $\rho(\eta_t, t) \leq \bar\rho$ holds for all $t$, we use the following two-step approach in a similar way as it was done in~\cite{xu2023linear}. 

\myparagraph{Step one: uniform spectral bounds for $\mathcal{T}_t$ and $W_t$ } First, we show when $\eta_t$ is controlled, one can have the uniform spectral bounds on $\mathcal{T}_t$ and $W(t)$. The following lemma characterizes this property formally.
\begin{lem}[Uniform spectral bounds on $\mathcal{T}_t, W(t)$.]\label{lem:uniform_spectral_bounds} 
Under the same assumption and constraints in Theorem~\ref{thm:linear_convergence}, one has the following uniform spectral bounds on $\mathcal{T}_t, W(t)$
\begin{align}
    \alpha_1+2\alpha_2\bigl(1-\exp(\sqrt{\eta_0})\bigl)&\leq \sigma_{\min}^2(\mathcal{T}_t)\leq \sigma_{\max}^2(\mathcal{T}_t) \leq \alpha_2\exp(\sqrt{\eta_0})\label{eqn:bound_tau} \,, \\
     \beta_1 \leq \sigma_{\min}(W(t))&\leq \sigma_{\max}(W(t))\leq \beta_2\label{eqn:bound_W}\,.
\end{align}
\end{lem}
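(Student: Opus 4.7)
I would proceed by induction on $t$. The base case $t=0$ follows from the definitions of $\alpha_1,\alpha_2,\beta_1,\beta_2$ in Table~\ref{notation-table}, which are expressed purely in terms of the initial weights. The inductive step uses two ingredients provided by the per-iteration descent of Theorem~\ref{thm:linear_convergence} --- which has to be proven jointly with this lemma, since its hypothesis $\eta_t K_t<2$ already depends on the spectral bound we are trying to establish. First, the loss decays geometrically as $L(t)\leq L(0)\bar\rho(\eta_0,0)^t$. Second, by smoothness applied to the non-overparametrized $\ell$ with $\ell^*=0$, one has $\|\nabla\ell(t)\|_F\leq \sqrt{2KL(t)}$, so the gradient norm decays geometrically as well.

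\textbf{The bound on $W(t)$.}
This is the easy half. Combining $\mu$-strong convexity with $\ell(W^*)=0$ and $K$-smoothness gives $\tfrac{\mu}{2}\|W(t)-W^*\|_F^2\leq L(t)\leq L(0)\leq \tfrac{K}{2}\|W(0)-W^*\|_F^2$, so $\|W(t)-W^*\|_F\leq \sqrt{K/\mu}\,\|W(0)-W^*\|_F$. Applying Weyl's singular value perturbation inequality to $W(t)=W^*+(W(t)-W^*)$ immediately yields $\beta_1\leq\sigma_{\min}(W(t))$ and $\sigma_{\max}(W(t))\leq \beta_2$.

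\textbf{The bound on $\mathcal{T}_t$.}
The plan here is to first express $\sigma_{\min/\max}^2(\mathcal{T}_t)$ through the spectrum of the Sylvester-type operator $A\mapsto A W_2(t)W_2^\top(t)+W_1(t)W_1^\top(t)A$, whose eigenvalues are the pairwise sums $\sigma_i^2(W_1(t))+\sigma_j^2(W_2(t))$. Using the imbalance identity $W_1^\top W_1=D(t)+W_2^\top W_2$ and the bounds $\beta_1\leq\sigma_i(W(t))\leq\beta_2$ already established, one can jointly diagonalize along the compatible bases to control these singular values in terms of the eigenvalues of $D(t)$ and $W(t)$ --- this is precisely how $\alpha_1,\alpha_2$ are built from $\lambda_\pm,\underline{\Delta},\Delta_\pm,\beta_1,\beta_2$. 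What remains is to control the drift of $D(t)$ from $D(0)$. A direct computation on the GD update \eqref{eqn:gd_over} gives
\[D(t+1)=D(t)+\eta_t^2\bigl[W_2^\top(t)\nabla\ell(t)^\top\nabla\ell(t)W_2(t)-W_1^\top(t)\nabla\ell(t)\nabla\ell(t)^\top W_1(t)\bigr],\]
so each one-step perturbation is bounded by $\eta_t^2\|\nabla\ell(t)\|_F^2\beta_2^2$ under the inductive hypothesis. Summing against $\eta_t\leq(1+\eta_0^2)^{t/2}\eta_0$ and $\|\nabla\ell(t)\|_F^2\leq 2KL(0)\bar\rho(\eta_0,0)^t$ yields a geometric series with ratio $\Delta=(1+\eta_0^2)\bar\rho(\eta_0,0)<1$, whose total sum is $O(\eta_0)$. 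The $\exp(\sqrt{\eta_0})$ factors in the conclusion appear by converting the accumulated multiplicative drift $\prod_k(1+O(\eta_k^2))$ into an exponential envelope via a Gronwall/Taylor estimate, together with the Weyl inequality relating eigenvalues of $D(0)$ to those of $D(t)=D(0)+\sum_{k<t}O(\eta_k^2)$.

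\textbf{Main obstacle.}
The delicate point is accommodating a \emph{growing} step size: the envelope $(1+\eta_0^2)^{t/2}\eta_0$ could in principle blow up the drift of $D(t)$, which is unlike the constant or decaying step-size setting in prior work. The plan is to exploit the strict inequality $\Delta=(1+\eta_0^2)\bar\rho(\eta_0,0)<1$, guaranteed by the constraint $0<\eta_0<\eta_{\max}$, so that the geometric decay of $\|\nabla\ell(t)\|_F^2$ beats the step-size growth term-by-term in the drift series. A secondary difficulty is that the spectral bound on $\mathcal{T}_t$ feeds back into the constraint $\eta_t K_t<2$ needed for the loss descent, so this lemma and the loss-decrease claim of Theorem~\ref{thm:linear_convergence} have to be established by a joint induction rather than sequentially, propagating simultaneously the inequalities on $L(t)$, $W(t)$, and $\mathcal{T}_t$.
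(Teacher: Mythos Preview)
Your overall architecture is right and matches the paper: a joint induction that simultaneously propagates (i) the loss decay $L(t)\leq L(0)\bar\rho(\eta_0,0)^t$, (ii) the bound on $W(t)$ via strong convexity, smoothness, and Weyl, (iii) a bound on the imbalance drift $\|D(t)-D(0)\|_F$ summed as a geometric series with ratio $\Delta=(1+\eta_0^2)\bar\rho(\eta_0,0)<1$, and (iv) the spectral bound on $\mathcal{T}_t$ deduced from the imbalance drift. The identification of the ``main obstacle'' --- that the growing step size must be beaten term-by-term by the geometric loss decay --- is exactly the new ingredient compared to prior constant-step-size work.

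Two points need correction. First, your per-step imbalance bound $\eta_t^2\|\nabla\ell(t)\|_F^2\,\beta_2^2$ uses the wrong constant: $\beta_2$ bounds $\sigma_{\max}(W_1W_2^\top)$, the \emph{product}, whereas the increment $D(t+1)-D(t)$ involves $W_1^\top(\cdot)W_1$ and $W_2^\top(\cdot)W_2$ separately, so the correct factor is $\sigma_{\max}^2(W_1(t))+\sigma_{\max}^2(W_2(t))=\sigma_{\max}^2(\mathcal{T}_t)$. There is no way to control the individual factors from a bound on the product, so you must instead invoke the inductive hypothesis on $\mathcal{T}_t$ itself, bounding the increment by $2K\eta_t^2\,\alpha_2\exp(\sqrt{\eta_0})\,L(t)$.

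Second, the $\exp(\sqrt{\eta_0})$ envelope does not arise from a Gronwall/product accumulation. The paper uses an \emph{additive} perturbation lemma (quoted from \cite{xu2023linear}): $\sigma_{\max}^2(\mathcal{T}_t)\leq\alpha_2+2\|D(t)-D(0)\|_F$ and $\sigma_{\min}^2(\mathcal{T}_t)\geq\alpha_1-4\|D(t)-D(0)\|_F$, noting the algebraic identity $(\alpha_1-4x)+2(\alpha_2+2x)=\alpha_1+2\alpha_2$. The constraint $\eta_0<\eta_{\max}$ is engineered precisely so that $4KL(0)\eta_0^2\leq(1-\exp(-\sqrt{\eta_0}))(1-\Delta)$; plugging the summed drift $\|D(t)-D(0)\|_F\leq \frac{2K\eta_0^2\alpha_2\exp(\sqrt{\eta_0})L(0)}{1-\Delta}$ into the additive bound then yields $\alpha_2+2\|D(t)-D(0)\|_F\leq\alpha_2\exp(\sqrt{\eta_0})$ exactly. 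So the exponential factor is a target chosen to be achievable under the step-size constraint, not the output of an iterated product estimate.
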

The above lemma shows the uniform spectral bounds on $\mathcal{T}_t$ and $W_t$ depending on $\alpha_1,\: \alpha_2,\: \beta_1,\: \beta_2$ and $\eta_0$. Moreover, the bounds on the singular values of $\mathcal{T}_t$ can be arbitrarily close to $\alpha_1, \alpha_2$ as $\eta_0$ approaches zero.

Similar results have been derived in~\cite{xu2023linear} where the authors show uniform spectral bounds of $\mathcal{T}_t, W_t$ for constant step size GD. Our proof strategy is similar to~\cite{xu2023linear} which relies on the fact that when the GD enjoys linear convergence, the change of imbalance during the training is small. For constant step size GD, we can characterize the change using the step size $\eta$ and the convergence rate $\bar\rho$ i.e. $\lVert D(t)-D(0) \rVert_F \leq \mathcal{O}(\frac{\eta^2}{1-\bar\rho})$. In this work, we discover when we allow step size to grow but not too fast, i.e. $\eta_t \leq (1+\eta_0^2)^{\frac{t}{2}}\eta_0$, we still can control the change of imbalance, i.e. $\lVert D(t)-D(0) \rVert_F \leq \mathcal{O}(\frac{\eta_0^2}{1-\Delta})$ where $\Delta = (1+\eta_0^2)\bar\rho(\eta_0, 0)$. This observation helps us derive the uniform spectral bounds for $\mathcal{T}_t, W(t)$ while allowing the step size to grow. The bound on the change of imbalance is not restricted to scenarios when loss converges linearly. In \cite{ghoshlearning}, the authors demonstrate that the imbalance gap decreases at a linear rate, even in the edge-of-stability regime. We mention this work here as an additional reference for readers interested in such phenomena.

\myparagraph{Step two} Second, we employ an induction-based argument to show that based on Lemma~\ref{lem:uniform_spectral_bounds}, one can show $\mu\geq\bar\mu, K_t\leq \bar K_t$ and $L(t)$ converges linearly with the rate $\bar\rho_0$.
\begin{lem}[Induction step to show $\mu_t, K_t$ is bounded and $L(t)$ converges linearly.]\label{lem:induction}
Under the same assumption and constraints in Theorem~\ref{thm:linear_convergence}, assume $L(t)$ enjoys linear convergence with rate $\bar\rho(\eta_0, 0)$ until iteration $k$, then the following holds for iteration $k\!+\!1$
\begin{align}\label{eqn:tilde_k_bar_k}
    \mu_{k\!+\!1}\geq\bar\mu\,,\quad
    K_{k\!+\!1} \leq \bar K_{k\!+\!1}\,,
\end{align}
with $\bar\mu, \bar K_{k\!+\!1}$ defined in Theorem~\ref{thm:linear_convergence}.
Moreover, one can show 
\begin{align}\label{eqn:induction_loss}
    \rho(\eta_{k\!+\!1}, k\!+\!1)\leq\bar\rho(\eta_{k\!+\!1}, k\!+\!1) \leq \bar\rho(\eta_0, 0)\,.
\end{align}
\end{lem}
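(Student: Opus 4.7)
The plan is to prove the lemma by extending, at iteration $k+1$, the joint induction hypothesis that for all $t\leq k$ one has $L(t)\leq L(0)\bar\rho(\eta_0,0)^t$, $\mu_t\geq\bar\mu$, $K_t\leq\bar K_t$, and $\rho(\eta_t,t)\leq\bar\rho(\eta_0,0)$. The key tools are Lemma~\ref{lem:uniform_spectral_bounds} (uniform spectral bounds on $\mathcal{T}_t$ and $W(t)$), Theorem~\ref{thm:local_inequality_upper_bound} (local Descent Lemma plus explicit forms for $\mu_t$ and $K_t$), and two elementary monotonicity facts for the function $\rho$.

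First I would propagate the loss bound: Theorem~\ref{thm:local_inequality_upper_bound} together with the step-size constraint $\eta_k\leq 1/K_k$ yields $L(k+1)\leq L(k)\rho(\eta_k,k)\leq L(k)\bar\rho(\eta_0,0)\leq L(0)\bar\rho(\eta_0,0)^{k+1}$, using the induction hypothesis at iteration $k$ for the middle inequality. Next, invoking Lemma~\ref{lem:uniform_spectral_bounds} at $t=k+1$ gives $\sigma_{\min}^2(\mathcal{T}_{k+1})\geq\alpha_1+2\alpha_2(1-\exp(\sqrt{\eta_0}))$, $\sigma_{\max}^2(\mathcal{T}_{k+1})\leq\alpha_2\exp(\sqrt{\eta_0})$, and $\sigma_{\max}(W(k+1))\leq\beta_2$. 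Substituting the first bound into $\mu_{k+1}=\mu\sigma_{\min}^2(\mathcal{T}_{k+1})$ yields $\mu_{k+1}\geq\bar\mu$ directly. For $K_{k+1}$, I would plug the three spectral bounds together with $L(k+1)\leq L(0)\bar\rho(\eta_0,0)^{k+1}$ and $\eta_{k+1}^2\leq(1+\eta_0^2)^{k+1}\eta_0^2$ into the four summands of~\eqref{eqn:smooth_over}; noting $\Delta=(1+\eta_0^2)\bar\rho(\eta_0,0)$, the resulting expression matches $\bar K_{k+1}$ summand-by-summand.

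For $\rho(\eta_{k+1},k+1)\leq\bar\rho(\eta_{k+1},k+1)$, define $q(\mu,K):=1-\mu\eta_{k+1}(2-K\eta_{k+1})$ and observe that $q$ is decreasing in $\mu>0$ whenever $K\eta_{k+1}<2$, and increasing in $K>0$ for any $\mu,\eta_{k+1}>0$. The constraint $\eta_{k+1}\leq 1/K_{k+1}$ enforces $K_{k+1}\eta_{k+1}\leq 1$, so applying the first monotonicity with $\mu_{k+1}\geq\bar\mu$ and then the second with $K_{k+1}\leq\bar K_{k+1}$ gives $\rho(\eta_{k+1},k+1)=q(\mu_{k+1},K_{k+1})\leq q(\bar\mu,\bar K_{k+1})=\bar\rho(\eta_{k+1},k+1)$.

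The main obstacle is the final inequality $\bar\rho(\eta_{k+1},k+1)\leq\bar\rho(\eta_0,0)$, which is equivalent to $\eta_{k+1}(2-\bar K_{k+1}\eta_{k+1})\geq\eta_0(2-\bar K_0\eta_0)$. I would first verify from the explicit formula that $\bar K_t$ is non-increasing in $t$, using that the constraints on $\eta_0$ force $\bar\rho(\eta_0,0)\in(0,1)$ and $\Delta\in(0,1)$. On the subinterval $\eta_{k+1}\in[\eta_0,1/\bar K_{k+1}]$ the concave parabola $\eta\mapsto\eta(2-\bar K_{k+1}\eta)$ is non-decreasing, and the bound $\bar K_{k+1}\leq\bar K_0$ makes its value at any $\eta$ at least $\eta(2-\bar K_0\eta)$, so the claim is immediate. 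On the complementary subinterval $(1/\bar K_{k+1},1/K_{k+1}]$, one uses that $\bar K_{k+1}\eta_{k+1}\leq \bar K_{k+1}/K_{k+1}$ together with an algebraic estimate controlling the loss of concavity; here the choice of $\eta_{\max}$ in Theorem~\ref{thm:linear_convergence} is implicitly chosen as a sufficient condition for this comparison to hold. The difficulty lies precisely in this step, since step sizes are allowed to grow and one must carefully balance the growth budget $(1+\eta_0^2)^{t/2}$ against the decay of $\bar K_t$.
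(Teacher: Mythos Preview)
Your treatment of $\mu_{k+1}\geq\bar\mu$, $K_{k+1}\leq\bar K_{k+1}$, and $\rho(\eta_{k+1},k+1)\leq\bar\rho(\eta_{k+1},k+1)$ is correct and matches the paper. The gap is exactly where you flag it: the second subinterval $\eta_{k+1}\in(1/\bar K_{k+1},\,1/K_{k+1}]$. Your claimed resolution (``an algebraic estimate controlling the loss of concavity'' together with ``the choice of $\eta_{\max}$ \ldots\ is implicitly chosen as a sufficient condition'') is not justified. If you inspect the actual definition of $\eta_{\max}$ in the paper (the three conditions yielding $\eta_0^{(1)},\eta_0^{(2)},\log(1+\alpha_1/(2\alpha_2))^{1/c}$), none of them bounds the ratio $\bar K_{k+1}/K_{k+1}$, and nothing prevents $\bar K_{k+1}\eta_{k+1}$ from exceeding~$2$ on that subinterval, in which case your comparison fails outright.

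The paper avoids this obstacle entirely by \emph{not} passing through $\bar\rho(\eta_{k+1},k+1)$ on the way to $\bar\rho(\eta_0,0)$. Instead it keeps the \emph{true} local smoothness constant $K_t$ in the quadratic until after the step size has been reduced to $\eta_0$:
\[
\rho(\eta_t,t)\;\leq\;1-2\bar\mu\eta_t+\bar\mu K_t\eta_t^2\;\leq\;1-2\bar\mu\eta_0+\bar\mu K_t\eta_0^2\;\leq\;1-2\bar\mu\eta_0+\bar\mu\bar K_0\eta_0^2\;=\;\bar\rho(\eta_0,0)\,.
\]
The middle inequality uses only that the parabola $\eta\mapsto 1-2\bar\mu\eta+\bar\mu K_t\eta^2$ is decreasing on $[\eta_0,1/K_t]$, which is guaranteed because the step-size constraint in Theorem~\ref{thm:linear_convergence} is $\eta_t\leq 1/K_t$ (not $1/\bar K_t$). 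Only \emph{after} evaluating at $\eta_0$ does one replace $K_t$ by $\bar K_0$ via $K_t\leq\bar K_t\leq\bar K_0$. This ordering is the whole trick: it sidesteps any need to compare $\eta_{k+1}$ with $1/\bar K_{k+1}$ and makes your case split unnecessary.
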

\Eqref{eqn:tilde_k_bar_k} is a direct consequence of Lemma~\ref{lem:uniform_spectral_bounds} and the induction that $L(t)$ enjoys linear convergence until iteration $k$. We can lower bound $\mu_k$ by subsituting $\sigma_{\min}(\mathcal{T}_k)$ with the bound in~\eqref{eqn:bound_tau} and upper bound $K_t$ by subsituting $\sigma_{\max}(\mathcal{T}_k), \sigma_{\max}(W(k)), L(k)$ with the bound in~\eqref{eqn:bound_tau}, \eqref{eqn:bound_W} and $L(0)\bar\rho(\eta_0, 0)^t$ respectively. Based on these results, one can derive the following upper bound on $L(k\!+\!1)$ under the same constraints on $\eta_t$ in Theorem~\ref{thm:linear_convergence}
\begin{align}
    L(k\!+\!1) &\leq L(k) -\bigl(\eta_k -\frac{K_k\eta_k^2}{2}\bigl)\lVert \nabla L(k)\rVert_F^2 &&\textbf{Local Descent lemma in Theorem~\ref{thm:local_inequality_upper_bound}} \\
    &\leq L(k) -2\bar\mu \bigl(\eta_k -\frac{K_k\eta_k^2}{2}\bigl)L(k) &&\textbf{Local PL inequality with $\bar\mu$}\\
    &=(1-2\bar\mu\eta_k+\bar\mu K_k\eta_k^2) L(k) &&\textbf{Use $K_k\leq \bar K_k\leq \bar K_0$} \\
    &\leq (1-2\bar\mu\eta_k+\bar\mu \bar K_0\eta_k^2) L(k) &&\textbf{Use constraints on $\eta_0$ and \eqref{cond:eta_t}} \\
    &\leq \bar\rho(\eta_0, 0) L(k)\,.
\end{align}
In summary, we show GD in \eqref{eqn:gd_over} achieves linear convergence with the rate $\bar\rho(\eta_0, 0)$ under the constraints on $\eta_t$ and the assumption $\alpha_1>0$ in this section. Moreover, we show the local rate of convergence depends on $\frac{\alpha_1}{\alpha_2}$. In the next section, we theoretically show that commonly used random initialization leads to $\alpha_1>0$, and large width and large initialization scale leads to faster local rate of convergence, i.e., $\frac{\alpha_1}{\alpha_2}\to 1$.

\subsection{Proper initialization ensures \texorpdfstring{$\alpha_1>0$}{α₁>0}}\label{subsec:condition_tau}
In \S\ref{subsec:converg_over}, we see that under the assumption $\alpha_1>0$, \ref{eqn:obj_over} trained via GD in \eqref{eqn:gd_over} converges linearly to a global minimum under certain constraints on the step sizes. In this section, we present two conditions on the initialization that ensure $\alpha_1>0$.

We first show two conditions on the initialization that ensure $\alpha_1>0$.
\begin{lem}[Mild overparametrization ensures $\alpha_1>0$]\label{lem:mild}
Let $W_1(0), W_2(0)$ are initialized entry-wise i.i.d. from a continuous distribution $\mathbb{P}$. When $h\geq m+n$, $\alpha_1>0$ holds almost surely over random initialization with $W_1(0), W_2(0)$.
\end{lem}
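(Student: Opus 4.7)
The plan is to prove $\alpha_1 > 0$ almost surely in two steps: first show that the imbalance $D(0)$ has strictly positive ``$n$-th'' and ``$m$-th (from below)'' eigenvalues almost surely, so that $\underline{\Delta} > 0$; then verify that $\underline{\Delta} > 0$ forces the explicit formula for $\alpha_1$ to be strictly positive.

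Step 1 (inertia of $D(0)$). I would rewrite the imbalance as a signed quadratic form
\[
D(0) \;=\; M^\top J M, \qquad M \;:=\; \begin{bmatrix} W_1(0) \\ W_2(0) \end{bmatrix} \in \R^{(n+m)\times h}, \qquad J \;:=\; \mathrm{diag}(I_n,\,-I_m).
\]
Since the entries of $M$ are i.i.d.\ from a continuous distribution and $h \geq n+m$, each $(n+m)\times(n+m)$ minor of $M$ is a non-identically-zero polynomial in the entries, so $M$ has full row rank $n+m$ almost surely; equivalently, $M : \R^h \to \R^{n+m}$ is surjective. A short argument via Sylvester's law of inertia---any subspace of $\R^h$ on which $M^\top J M$ is positive definite injects via $M$ into the positive eigenspace of $J$ (dimension $n$), and conversely the preimage under $M$ of a basis for the $\pm$ eigenspaces of $J$ realizes the matching dimensions on which $M^\top JM$ is $\pm$-definite---shows that the inertia of $D(0)$ exactly matches that of $J$, namely $(n,\,m,\,h-n-m)$. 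In particular, almost surely $\lambda_n(D(0)) > 0$ and $\lambda_{h-m+1}(D(0)) < 0$, i.e.\ $\lambda_m(-D(0)) > 0$.

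Step 2 (from inertia to $\alpha_1$). Consequently $\underline{\Delta} = \max(\lambda_n(D(0)),0) + \max(\lambda_m(-D(0)),0) > 0$ almost surely. From the definitions in Table~\ref{notation-table}, $\Delta_{+},\Delta_{-},\beta_1 \geq 0$ in all cases, so combining with $\underline{\Delta} > 0$ gives the strict bounds
\[
\sqrt{(\Delta_{\pm} + \underline{\Delta})^2 + 4\beta_1^2} \;\geq\; \Delta_{\pm} + \underline{\Delta} \;>\; \Delta_{\pm}.
\]
Summing the two instances and dividing by two yields $\alpha_1 > 0$, completing the proof.

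The main obstacle is the inertia computation in Step 1: the crucial nontrivial ingredient is that genericity of $M$ forces the inertia of $D(0)$ to be \emph{exactly} $(n,m,h-n-m)$, not merely ``at least one positive and one negative eigenvalue''; without the sharp count, one cannot conclude that the specific indexed eigenvalues $\lambda_n(D(0))$ and $\lambda_m(-D(0))$ are strictly positive. Once this inertia count is in hand, the rest---full rank of a random rectangular matrix with continuous entries, and the arithmetic propagation $\underline{\Delta} > 0 \Rightarrow \alpha_1 > 0$---are routine.
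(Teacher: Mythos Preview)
Your proof is correct. Both your argument and the paper's hinge on the same core fact---that $D(0)$ has exactly $m+n$ nonzero eigenvalues almost surely when $h\geq m+n$---but reach it and use it differently. The paper factors $D(0)=[W_1^\top(0),W_2^\top(0)]\begin{bmatrix}W_1(0)\\-W_2(0)\end{bmatrix}$, observes that the left factor has full column rank so $\ker D(0)=\ker\begin{bmatrix}W_1(0)\\-W_2(0)\end{bmatrix}$ has dimension $h-m-n$, and then invokes an external lemma (from \cite{min2021explicit}) stating that $\sigma_{m+n}(D(0))>0$ suffices for $\alpha_1>0$. You instead use the symmetric factorization $D(0)=M^\top J M$ and a Sylvester-inertia argument to get the full signature $(n,m,h-n-m)$ directly, then read off $\underline{\Delta}>0$ and verify $\alpha_1>0$ from the explicit formula in Table~\ref{notation-table}. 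Your route is more self-contained (no black-box lemma) and yields slightly more information (the signs, not just the rank); the paper's route is a touch shorter once one is willing to cite the external sufficient condition. Note, incidentally, that the two conclusions are equivalent: since $D(0)$ can have at most $n$ positive and at most $m$ negative eigenvalues (because $W_1^\top W_1$ has rank $\leq n$ and $W_2^\top W_2$ has rank $\leq m$), rank $=m+n$ already forces the inertia to be exactly $(n,m,h-n-m)$.
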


\begin{lem}[Lemma 1 in \citep{min2021explicit}]\label{lem:sufficient_over}
Let $W_1(0), W_2(0)$ are initialized entry-wise i.i.d. from $\mathcal{N}(0, \frac{1}{h^{2p}})$ with $\frac{1}{4}\leq p\leq \frac{1}{2}$. For $\forall \delta>0$ and $h\geq \text{poly}(n, m, \frac{1}{\delta})$, with probability $1-\delta$ over random initialization with $W_1(0), W_2(0)$, the following holds $\alpha_1 \geq h^{1-2p}\,.$
\end{lem}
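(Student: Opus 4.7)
The result is quoted from \citet{min2021explicit}, so my plan is to reproduce its core argument, specialized to the quantity $\alpha_1$ defined in Table~\ref{notation-table}. The idea is to show that, with high probability over the Gaussian initialization, the imbalance matrix $D(0)=W_1^\top(0)W_1(0)-W_2^\top(0)W_2(0)\in\R^{h\times h}$ has roughly $n$ eigenvalues close to $+h^{1-2p}$ and $m$ eigenvalues close to $-h^{1-2p}$, with the remaining $h-n-m$ eigenvalues essentially zero, and then to push these spectral estimates through the piecewise-linear definitions of $\lambda_\pm$, $\underline{\Delta}$, $\Delta_\pm$.

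First, I would invoke standard non-asymptotic Wishart concentration (e.g.\ Davidson--Szarek or matrix Bernstein) on each $W_j(0)W_j^\top(0)\in\R^{d_j\times d_j}$, where $d_1=n$ and $d_2=m$. Each has mean $h^{1-2p}I_{d_j}$, so for $h$ polynomially large in $n,m,1/\delta,1/\varepsilon$, $(1-\varepsilon)h^{1-2p}\leq\sigma_{\min}^2(W_j(0))\leq\sigma_{\max}^2(W_j(0))\leq(1+\varepsilon)h^{1-2p}$ with probability $\geq 1-\delta/3$. Since $W_j^\top(0)W_j(0)$ shares the same nonzero spectrum, this immediately controls the top $d_j$ eigenvalues of the two $h\times h$ Gram blocks that make up $D(0)$. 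Next, I would bound the cross-term $W_1(0)W_2^\top(0)\in\R^{n\times m}$: its entries are mean-zero sub-exponential with variance $h\cdot h^{-4p}=h^{1-4p}$, so by a standard operator-norm bound for Gaussian products, $\|W_1(0)W_2^\top(0)\|_2=O(\sqrt{(n+m)h^{1-4p}})$ with probability $\geq 1-\delta/3$, which is $o(h^{1-2p})$ for $h=\text{poly}(n,m,1/\delta)$.

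With the row-spaces $U_j=\operatorname{range}(W_j^\top(0))\subseteq\R^h$ thereby shown to be quantitatively near-orthogonal, I would decompose $D(0)$ in a basis adapted to $U_1$, $U_2$, and $(U_1+U_2)^\perp$: the blocks along $U_1$ and $U_2$ reduce to $W_1^\top W_1$ and $-W_2^\top W_2$ plus perturbations of order $\|W_1(0)W_2^\top(0)\|_2=o(h^{1-2p})$, while the orthogonal complement contributes zero eigenvalues. A Weyl perturbation bound then yields $\lambda_n(D(0))\geq(1-\varepsilon')h^{1-2p}$ and symmetrically $\lambda_m(-D(0))\geq(1-\varepsilon')h^{1-2p}$ for some $\varepsilon'=\varepsilon+o(1)$. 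Substituting into the formulas in Table~\ref{notation-table} gives $\underline{\Delta}\geq 2(1-\varepsilon')h^{1-2p}$ and $\Delta_+,\Delta_-\geq 0$. Using $\sqrt{(\Delta_\pm+\underline{\Delta})^2+4\beta_1^2}\geq\Delta_\pm+\underline{\Delta}$ in the $\alpha_1$ formula collapses it to $\alpha_1\geq\underline{\Delta}\geq 2(1-\varepsilon')h^{1-2p}\geq h^{1-2p}$ once $\varepsilon'\leq 1/2$. Enforcing $\varepsilon'\leq 1/2$ requires $h$ polynomial in $n,m,1/\delta$, and a union bound over the three concentration events yields the stated $1-\delta$ probability.

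The main obstacle is the spectral-separation step: propagating the small relative errors from the Gram-block and cross-term concentration through a Weyl-type perturbation bound for the nonzero spectrum of $D(0)$, and in particular certifying that the \emph{signs} of the top $n$ and bottom $m$ eigenvalues are preserved, so that the piecewise $\max(\cdot,0)$ clipping in the definitions of $\underline{\Delta},\Delta_\pm$ does not kill the positive lower bound on $\alpha_1$. Once near-orthogonality of $U_1,U_2$ is quantified and fed into Weyl, the remaining algebra on Table~\ref{notation-table} quantities is routine.
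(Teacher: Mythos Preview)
The paper does not prove this lemma; it is quoted verbatim from \cite{min2021explicit}, and Appendix~\ref{app:alpha} merely restates it with the citation. Your plan is sound and would go through: the algebraic reduction $\alpha_1\geq\underline{\Delta}$ is exact (both $\Delta_\pm\geq 0$ and $\underline{\Delta}\geq 0$ always hold by their definitions, so the square roots dominate $\Delta_\pm+\underline{\Delta}$), and $\lambda_n(D(0)),\lambda_m(-D(0))\gtrsim h^{1-2p}$ can indeed be extracted from your cross-term bound via Courant--Fischer with the test subspace $U_1=\operatorname{range}(W_1^\top(0))$.

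That said, the explicit bound $\alpha_1\geq 2h^{-2p}(\sqrt{h}-B)^2$ with $B=\sqrt{m+n}+\tfrac{1}{2}\log\tfrac{2}{\delta}$, quoted later as Lemma~\ref{app_lem:initial_weights_bound} from \cite{minconvergence}, points to a cleaner argument in the source that sidesteps the perturbation obstacle you flag. Write $D(0)=ZJZ^\top$ with $Z=[W_1^\top(0),\,W_2^\top(0)]\in\R^{h\times(n+m)}$ and $J=\operatorname{diag}(I_n,-I_m)$. Via the thin SVD $Z=U\Sigma V^\top$, the nonzero spectrum of $D(0)$ coincides with that of $\Sigma(V^\top J V)\Sigma$. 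Sylvester's law of inertia forces exactly $n$ positive and $m$ negative eigenvalues (so no separate sign certification is needed), and Ostrowski's theorem pins each in absolute value to $[\sigma_{\min}^2(Z),\sigma_{\max}^2(Z)]$. Hence $\lambda_n(D(0))\geq\sigma_{\min}^2(Z)$ and $\lambda_m(-D(0))\geq\sigma_{\min}^2(Z)$ directly, giving $\underline{\Delta}\geq 2\sigma_{\min}^2(Z)$. A single Davidson--Szarek bound on the stacked Gaussian matrix $h^pZ$ (Lemma~\ref{app_lem:sv_random_matrix}) then yields $\sigma_{\min}^2(Z)\geq h^{-2p}(\sqrt{h}-B)^2$, which exceeds $\tfrac12 h^{1-2p}$ once $h$ is polynomially large in $n,m,1/\delta$. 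Your route reaches the same destination but needs three concentration events, a near-orthogonality quantification, and a Weyl step; the Ostrowski route needs one concentration event and no perturbation, and it delivers the exact constant appearing in the cited bound.
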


We refer the readers to Appendix~\ref{app:alpha} for detailed proof. Both lemmas presented above ensure $\alpha_1>0$ under different conditions on the width. Compared with Lemma~\ref{lem:sufficient_over}, Lemma~\ref{lem:mild} considers a wider range of distributions that include Gaussian distribution and uniform distribution. Thus, commonly used random initialization schemes, such as Xavier initialization~\citep{glorot2010understanding} and He initialization~\citep{he2015delving}, lead to $\alpha_1>0$. Moreover, the requirement of overparametrization in Lemma~\ref{lem:mild} is mild compared with the one in Lemma~\ref{lem:sufficient_over}, i.e., $h\geq m+n$ versus $h\geq \text{poly}(n, m, \frac{1}{\delta})$. As a result, Lemma~\ref{lem:mild} can be applied to more general overparametrization.
On the other hand, the conclusion of Lemma~\ref{lem:mild} is weaker than Lemma~\ref{lem:sufficient_over} in the sense that Lemma~\ref{lem:mild} only proves $\alpha_1>0$ but does not characterize its magnitude while Lemma~\ref{lem:sufficient_over} characterizes the lower bound on $\alpha_1$ will increase as $h$ increases.

\subsection{Large width and proper choices of the variance lead to well-conditioned \texorpdfstring{$\mathcal{T}_0$}{T₀}}\label{subsec:condition_well_tau}

In this section, we show that a proper choice of initialization and width can lead to well-conditioned $\mathcal{T}_0$, i.e., $\frac{\alpha_1}{\alpha_2}\to 1$. Based on the results in \S\ref{subsec:converg_over}, we show that the local convergence rate of overparametrized model trained via GD can match the rate of the non-overparametrized model.

\begin{thm}\label{thm:well_condition_tau}
Let $W_1(0), W_2(0)$ are initialized entry-wise i.i.d. from $\mathcal{N}(0, \frac{1}{h^{2p}})$ with $\frac{1}{4}\!<\!p\!<\!\frac{1}{2}$. $\forall \delta\in(0, 1), h \geq \text{poly}(m,n,\frac{1}{\delta})$, with probability $1-\delta$ over random initialization $W_1(0), W_2(0)$, the following holds
\begin{align}
    \frac{\sigma_{\min}(\mathcal{T}_0)}{\sigma_{\max}(\mathcal{T}_0)}\geq \frac{\alpha_1}{\alpha_2} \geq 1-\Omega(h^{2p-1})\,.
\end{align}
\end{thm}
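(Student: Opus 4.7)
The plan is to bound $\alpha_1$ from below and $\alpha_2$ from above separately, then compare. The elementary inequalities $x \leq \sqrt{x^2+4\beta^2} \leq x+2\beta$ (valid for $x,\beta \geq 0$), applied to the formulas in Table~\ref{notation-table}, immediately give $\alpha_1 \geq \underline{\Delta}$ and $\alpha_2 \leq \lambda_+ + \lambda_- + 2\beta_2$, so
\begin{align}
    \frac{\alpha_1}{\alpha_2} \;\geq\; \frac{\underline{\Delta}}{\lambda_+ + \lambda_- + 2\beta_2}\,.
\end{align}
It therefore suffices to show, with probability $1-\delta$, that (a) $\underline{\Delta}$ and $\lambda_+ + \lambda_-$ both concentrate at $2h^{1-2p}$ with multiplicative error $O(\sqrt{(n+m)/h})$, and (b) $\beta_2 = O(1)$ uniformly in $h$. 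The advertised rate $h^{2p-1}$ will emerge from the correction $\beta_2/h^{1-2p}$ in the denominator above.

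For (a), I first apply standard Gaussian concentration (Davidson--Szarek) to obtain $\sigma_i^2(W_k(0)) \in h^{1-2p}\bigl[1 \pm c_1\sqrt{(n+m+\log(1/\delta))/h}\,\bigr]$ for $k \in \{1,2\}$ and every $i$, with probability $1-\delta/3$. Passing from these to the eigenvalues of $D(0)$ is the crux: a direct Weyl bound on $D(0) = W_1^\top W_1 - W_2^\top W_2$ only gives $\lambda_n(D(0)) \geq \sigma_{\min}^2(W_1) - \sigma_{\max}^2(W_2)$, which is essentially zero in our regime. Instead, I change basis: let $Y_1 \in \mathbb{R}^{h\times n}$ be an orthonormal basis of the row space of $W_1(0)$ and $Y_2 \in \mathbb{R}^{h\times m}$ an orthonormal basis of the part of the row space of $W_2(0)$ orthogonal to $Y_1$; writing $W_2(0) = C_1 Y_1^\top + C_2 Y_2^\top$ with $C_1 := W_2(0) Y_1$, a direct computation shows that in any orthonormal basis of $\mathbb{R}^h$ beginning with $[Y_1, Y_2]$, the only nonzero block of $D(0)$ is
\begin{align}
    M \;=\; \begin{bmatrix} \Sigma_1^2 - C_1^\top C_1 & -C_1^\top C_2 \\ -C_2^\top C_1 & -C_2^\top C_2 \end{bmatrix}\,,
\end{align}
where $\Sigma_1^2 := \mathrm{diag}(\sigma_i^2(W_1(0)))$. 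Since $W_2(0)$ is independent of $W_1(0)$ and $Y_1$ is a deterministic function of $W_1(0)$, conditionally on $W_1(0)$ the matrix $C_1 \in \mathbb{R}^{m\times n}$ is Gaussian with i.i.d.\ entries $\mathcal{N}(0, h^{-2p})$, so $\|C_1\|_2 \leq c_2 h^{-p}\sqrt{n+m+\log(1/\delta)}$ with probability $1-\delta/3$. Applying Weyl's inequality to $M$ versus $M_0 := \mathrm{diag}(\Sigma_1^2, -C_2^\top C_2)$, whose eigenvalues are exactly $\{\sigma_i^2(W_1(0))\} \cup \{-\sigma_j^2(W_2(0))\}$, then yields $|\lambda_i(M) - \lambda_i(M_0)| \leq \|M - M_0\|_2 = O(h^{1/2-2p}\sqrt{n+m})$, which combined with the singular-value concentration gives (a).

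For (b), the entries of $W(0) = W_1(0) W_2^\top(0)$ have variance $h^{1-4p}$, so $\|W(0)\|_F = O(\sqrt{nm}\,h^{1/2-2p})$ with probability $1-\delta/3$; since $p > 1/4$ this vanishes, and for $h$ large enough $\beta_2 \leq \sigma_{\max}(W^*) + \sqrt{K/\mu}(\|W^*\|_F + 1) =: \bar\beta$, a constant independent of $h$. Assembling everything,
\begin{align}
    \frac{\alpha_1}{\alpha_2} \;\geq\; \frac{1 - O(\sqrt{(n+m)/h})}{1 + O(\sqrt{(n+m)/h}) + \bar\beta\,h^{2p-1}} \;\geq\; 1 - O(h^{2p-1})\,,
\end{align}
where the last step uses $\sqrt{(n+m)/h} = o(h^{2p-1})$ for $p > 1/4$ and $h \geq \mathrm{poly}(n,m,1/\delta)$; a union bound over the three failure events finishes the argument.

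I expect the main obstacle to be the block-matrix reduction in step (a). Since $W_1^\top W_1$ and $W_2^\top W_2$ have comparable operator norm of order $h^{1-2p}$, any direct spectral bound on their difference suffers from cancellation and is vacuous at our target precision. The change of basis instead turns the problem into a Hermitian perturbation around a block-diagonal $M_0$ whose spectrum is already correct to leading order, and the independence of $W_1(0)$ and $W_2(0)$ is what keeps the coupling $C_1$ at the sharp scale $O(h^{-p}\sqrt{n+m})$; without this independence the bound on $\|M - M_0\|_2$ would be comparable to the spectrum of $M_0$ itself and the argument would collapse.
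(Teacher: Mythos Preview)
Your proposal is correct, and both proofs share the same outer skeleton: bound $\alpha_1$ from below, $\alpha_2$ from above via $\alpha_2\leq\lambda_++\lambda_-+2\beta_2$, control $\beta_2$ through $\|W_1(0)W_2^\top(0)\|_F=O(h^{1/2-2p})$, and then read off the $h^{2p-1}$ gap. Where you diverge from the paper is in how you reach the eigenvalues of $D(0)$.

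The paper never analyzes $\underline{\Delta}$ or the individual eigenvalues $\lambda_n(D(0)),\lambda_m(-D(0))$ at all. For the lower bound on $\alpha_1$ it simply invokes an external result (Lemma~11 of \cite{minconvergence}) giving $\alpha_1\geq 2h^{1-2p}+2B^2h^{-2p}-4Bh^{1/2-2p}$ with $B=\sqrt{m+n}+\tfrac12\log(2/\delta)$ as a black box. For the upper bound it observes $D(0)=[W_1^\top,\;W_2^\top]\bigl[\begin{smallmatrix}W_1\\-W_2\end{smallmatrix}\bigr]$ so that $\lambda_\pm\leq\sigma_{\max}(D(0))\leq\sigma_{\max}^2([W_1,W_2])$, and then applies Davidson--Szarek once to the concatenated $h\times(n+m)$ Gaussian. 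Your approach instead establishes $\alpha_1\geq\underline{\Delta}$ from the explicit formula and then pins down all of $\lambda_n(D(0)),\lambda_m(-D(0)),\lambda_\pm$ simultaneously by the change-of-basis reduction to the block $M$, exploiting that $C_1=W_2(0)Y_1$ remains i.i.d.\ Gaussian by rotational invariance. This is more work but yields a self-contained argument that actually explains why the spectrum of $D(0)$ sits near $\{\sigma_i^2(W_1)\}\cup\{-\sigma_j^2(W_2)\}$, whereas the paper's route is shorter precisely because it outsources that insight.

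One small imprecision to fix when you write it up: the eigenvalues of $M_0=\mathrm{diag}(\Sigma_1^2,-C_2^\top C_2)$ are $\{\sigma_i^2(W_1)\}\cup\{-\sigma_j^2(C_2)\}$, not $\{-\sigma_j^2(W_2)\}$ exactly. Since $C_2Y_2^\top=W_2(I-Y_1Y_1^\top)$, Weyl gives $|\sigma_j(C_2)-\sigma_j(W_2)|\leq\|C_1\|_2$, so the resulting error in the squared singular values is again $O(h^{1/2-2p}\sqrt{n+m})$, the same order as $\|M-M_0\|_2$, and the conclusion is unaffected.
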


We refer the readers to Appendix~\ref{app:large_width_well_conditioned} for detailed proof. The above theorem states the condition number of $\mathcal{T}_0$ approaches one when the width increases to infinity under suitable choices of the variance of Gaussian initialization. Therefore, increasing the width can lead to a fast convergence rate. 

In Appendix~\ref{app:simulation}, we will numerically show that with proper choices of the variance of the initialization and the adaptive step size proposed in \S\ref{subsec:converg_over}, a large width will lead to well-conditioned $\mathcal{T}_t$ and the convergence rate of the overparametrized model can asymptotically match the rate of the non-overparametrized model.
\section{Experiments}\label{sec:experiment}
In this section, we first present empirical evidence that Theorem~\ref{thm:linear_convergence} provides a good characterization of the actual convergence rate under different initialization in \S\ref{subsec:rate_vs_actual_simulations}. Then, in \S\ref{subsec:compare_simulations}, we compare the convergence rate of GD using the adaptive step size proposed in~\eqref{eqn:adaptive_step_size}, in Section 3.3 of
~\cite{xu2023linear}, and backtracking line search. 
Throughout the experiments, we consider ~\ref{eqn:obj_over} with squared loss
\begin{align}
    L(W_1, W_2)=\frac{1}{2}\lVert Y-XW_1W_2^\top\rVert_F^2\,,
\end{align}
where $X, Y\in\R^{10\times 10}$ are data matrices and $W_1, W_2\in\R^{10\times h}$ are the weights. This can be viewed as a two-layer linear network with input and output dimensions 10 and the width of the hidden layer to be $h$. Throughout the simulations, we choose $h\in\{500, 1000, 4000\}$. We choose $c=0.5, d=1.01$ in Theorem~\ref{app_thm:linear_convergence}. The initialization of the weights and generation of data matrices are as follows: $W_1(0), W_2(0) \in \R^{10\times h}$, and have entry-wise i.i.d. samples drawn from $\mathcal{N}(0,1)$. We generate $X$ as a random orthogonal matrix, and $Y=XW_1(0)W_2(0)+ \sigma^2 \epsilon$ where $\epsilon \in \R^{10\times 10}$ and are entry-wise i.i.d. samples drawn from $\mathcal{N}(0, 1)$. When $\sigma^2$ is large, the initial loss is large, thus the margin is small. Moreover, we experimentally observe that the initial imbalance grows w.r.t. $h$. The choices of $h$ and $\sigma$ allow us to test our results in different regimes.
\subsection{Evaluation of the Tightness of the Theoretical Bound on the Convergence Rate}\label{subsec:rate_vs_actual_simulations}
\begin{figure*}[h!]
\vspace{-4mm}
    \centering
    \includegraphics[width=0.88\textwidth]{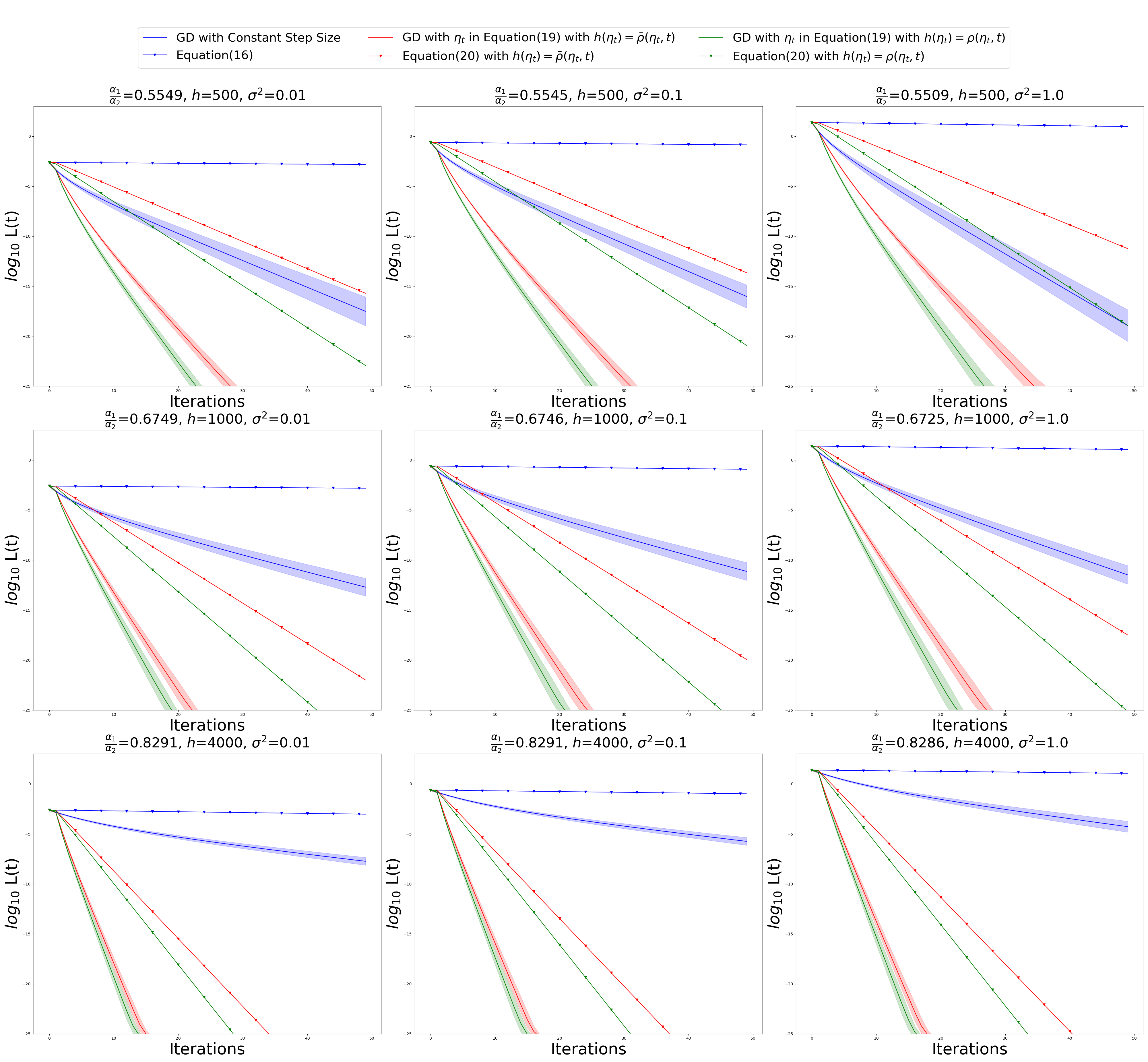}
    \caption{Tightness of the theoretical upper bound versus reconstruction error $L(t)$ for different choices of step size in \S\ref{subsec:converg_over}, shown in different colors. We run the simulations for nine different settings of initialization and data generation. For each setting, we repeat the simulation thirty times. The triangle lines represent the theoretical upper bound on the training loss in \eqref{eqn:bound_constant} and \eqref{eqn:theoretical_bounds_loss}.
    The solid lines represent the mean of the $\log_{10}$ of the reconstruction error $L(t)$. The shaded area is the mean of $\log_{10}L(t)$ plus and minus one standard deviation.
    }
    \label{fig:actual_theoretical}
\end{figure*}
Figure~\ref{fig:actual_theoretical} compares the actual convergence rate of $L(t)$ versus the theoretical upper bound in \S\ref{subsec:converg_over} for different choices of $\sigma$ and $h$, and dissimilar $\frac{\alpha_1}{\alpha_2}$. In all cases, the theoretical upper bound follows the actual loss well. Moreover, we observe for each adaptive step size scheme, the theoretical bounds and the actual rate of convergence become slower as $\frac{\alpha_1}{\alpha_2}$ decreases. This is because our bounds on the local rate of convergence depend on $\frac{\alpha_1}{\alpha_2}$, and the smaller $\frac{\alpha_1}{\alpha_2}$, the slower the convergence rate. 

\subsection{Comparison with Prior Work and Backtracking Line Search}\label{subsec:compare_simulations}
In this subsection, we compare the adaptive step sizes proposed in ~\cite{xu2023linear}, backtracking line search with the step sizes proposed in~\eqref{eqn:adaptive_step_size} with $h(\eta_t) = \rho(\eta_t, t)$. We set the hyperparameters of the adaptive step size scheme proposed in~\cite{xu2023linear} to be $c_1=0.5, c_2=1.5$, which is the same setting in their simulations. We refer the readers to Appendix~\ref{app:simulation} for detailed descriptions of backtracking line search.
\begin{figure*}[h!]
\vspace{-4mm}
    \centering
    \includegraphics[width=0.88\textwidth]{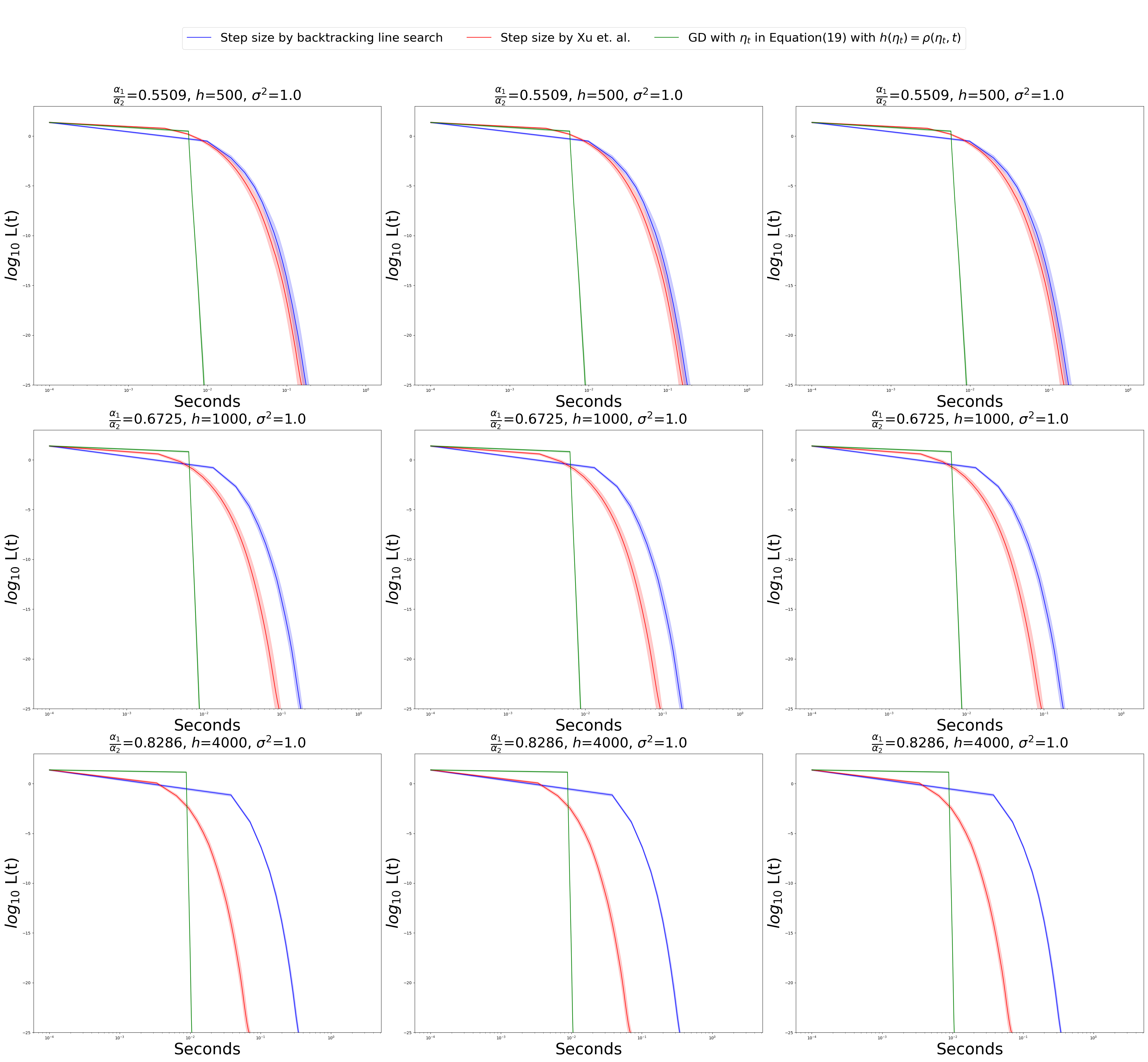}
    \caption{Evolution of the loss and of the step size for different choices of the step size schedule under different initialization and data generation. We run the simulations thirty times. For each setting, we repeat the simulation thirty times. 
    The solid lines represent the mean of $\log_{10}$ of the reconstruction error $L(t)$. The shaded area is the mean of $\log_{10}L(t)$ plus and minus one standard deviation.
    }
    \label{fig:comparison}
\end{figure*}
Figure~\ref{fig:comparison} shows that the step size choice proposed in~\eqref{eqn:adaptive_step_size} achieves the fastest convergence compared with~\cite{xu2023linear} and backtracking line search in different settings. 
This is because for the adaptive step size scheduler in this work, the step size at each iteration has closed form (See~\eqref{eqn:adaptive_step_size}), thus the time for picking the optimal step size per iteration is negligible. The only time-consuming part is to find $\eta_0$ since one needs to solve \eqref{app_eqn:eta_equation1} and \eqref{app_eqn:eta_equation2} to get $\eta_{\max}$.  For the step size proposed in~\cite{xu2023linear}, the algorithm consists of solving a third-order polynomial at each iteration, which results in larger computational time. Moreover, the adaptive scheduler proposed in our work follows a sharper characterization of the local convergence rate than~\cite{xu2023linear}, and the adaptive step size scheduler in this work theoretically converges of order $\frac{\alpha_1}{\alpha_2}$ faster than the one proposed in~\cite{xu2023linear}. For the backtracking line search algorithm, since the algorithm iteratively searches for the step size at each iteration. Therefore, the time cost for each iteration is high as well.
\section{Conclusion}
This paper studies the convergence of GD for optimizing two-layer linear networks with general loss functions. Specifically, we derive a linear convergence rate for finite-width networks initialized outside the NTK regime. We use a common framework for studying the convergence of GD for the non-convex optimization problem, i.e. PL condition and Descent lemma. 
Although the loss landscape of neural networks does not satisfy the PL condition and Descent lemma with global constants, we show that when the step size is small, both conditions satisfy locally with constants depending on the singular value of the weights, the current loss, and the singular value of the products. Furthermore, we prove that the local PL and smoothness constants can be uniformly bounded by the initial imbalance, margin, PL constant, and smoothness constant of the non-overparameterized model. 
Finally, we provide an explicit convergence rate dependent on margin, imbalance, and the condition number of the non-overparameterized model. Based on this rate, we propose an adaptive step size scheme that accelerates convergence compared to a constant step size.


\subsubsection*{Acknowledgements}
The authors acknowledge the support of 
the Office of Naval Research (grant 503405-78051),
the National Science Foundation (grants 2031985, 2212457 and 2330450),
and the Simons Foundation (grant 814201).

\bibliography{main}
\bibliographystyle{tmlr}
\clearpage
\appendix
\section{Preliminary lemmas}\label{app:premilinary_lem}
In this section, we present a preliminary lemma which will be used in the following sections.
\begin{lem}[Inequality on the Frobenius norm]\label{lem:base}
For matrix $A, B, C, D$, we have
\begin{align}
    &\langle A, B\rangle \leq \|A\|_F\cdot\|B\|_F\,,\\
    &2\|AB\|_F \leq \|A\|_F^2+\|B\|_F^2\,,\label{app_eqn:cauchy_swartz_frobenius} \\
    &\|AB+CD\|_F^2\leq [\sigma_{\max}^2(A)+\sigma_{\max}^2(C)]^2\cdot [\|B\|_F^2+\|D\|_F^2]\,, \\
    &\|A\|_F^2 + \|B\|_F^2 \leq 2\|A+B\|_F^2\,, \\
    &\sigma_{\min }^2(A)\|B\|_{F}^{2} \leq\|A B\|_{F}^{2} \leq \sigma_{\max }^2(A)\|B\|_{F}^{2}\label{app_eqn:frob_operator_norm}\,, \\
    &\sigma_{\min }^2(B)\|A\|_{F}^{2} \leq\|A B\|_{F}^{2} \leq \sigma_{\max}^2(B)\|A\|_{F}^{2}\,.
\end{align}
\end{lem}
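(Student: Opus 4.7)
The plan is to prove each of the six inequalities separately using elementary properties of the Frobenius inner product and the SVD; the proofs are essentially independent, and I would simply take them in order, leveraging the earlier items to shorten the later ones.

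For (1), I would identify $\langle A,B\rangle = \Tr(A^\top B)$ with the Euclidean inner product $\mathrm{vec}(A)^\top \mathrm{vec}(B)$ and $\|A\|_F$ with $\|\mathrm{vec}(A)\|_2$, so that the matrix Cauchy--Schwarz inequality reduces to its vector counterpart in $\mathbb{R}^{nm}$. For (2), I would first establish submultiplicativity $\|AB\|_F \leq \|A\|_F\|B\|_F$ by writing the $(i,j)$ entry of $AB$ as an inner product of a row of $A$ with a column of $B$ and applying vector Cauchy--Schwarz entrywise, and then invoke AM--GM in the form $2xy\leq x^2+y^2$ with $x=\|A\|_F$, $y=\|B\|_F$.

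The key tool for (5) is the SVD $A=U\Sigma V^\top$, which gives $\|AB\|_F^2 = \Tr(B^\top V \Sigma^2 V^\top B)$; since $\sigma_{\min}^2(A)\,I \preceq V\Sigma^2 V^\top \preceq \sigma_{\max}^2(A)\,I$ on the appropriate range, applying the trace inequality $\lambda_{\min}(M)\Tr(X)\leq\Tr(MX)\leq\lambda_{\max}(M)\Tr(X)$ for $X\succeq 0$ with $X=BB^\top$ yields the stated two-sided bound. Inequality (6) then follows from (5) applied to $B^\top$ and $A^\top$ together with $\|AB\|_F = \|B^\top A^\top\|_F$. For (3), I would first use the triangle inequality $\|AB+CD\|_F\leq\|AB\|_F+\|CD\|_F$ together with the operator-norm bound $\|AB\|_F\leq\sigma_{\max}(A)\|B\|_F$ from (5), and then apply vector Cauchy--Schwarz of the form
\begin{equation*}
\bigl(\sigma_{\max}(A)\|B\|_F+\sigma_{\max}(C)\|D\|_F\bigr)^2\leq \bigl(\sigma_{\max}^2(A)+\sigma_{\max}^2(C)\bigr)\bigl(\|B\|_F^2+\|D\|_F^2\bigr).
\end{equation*}
For (4), I would expand $\|A+B\|_F^2 = \|A\|_F^2+2\langle A,B\rangle+\|B\|_F^2$ and bound the cross term using the elementary consequence $|2\langle A,B\rangle|\leq \|A\|_F^2+\|B\|_F^2$ of (1) and AM--GM.

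Since each item is a one- or two-line manipulation once the right identification is made, there is no substantial obstacle; the only care that is needed is bookkeeping --- making sure the dimensions of $A,B,C,D$ are compatible for every product that appears, and clarifying how $\sigma_{\min}(A)$ should be interpreted when $A$ is non-square or rank-deficient (in the non-square case one reads $\sigma_{\min}$ as the smallest of the $\min(\text{rows},\text{cols})$ singular values, and the bound is trivial when it vanishes). I would therefore state these dimensional conventions at the start of the proof and then present the six arguments in a compact, uniform format.
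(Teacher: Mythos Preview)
The paper does not give its own proof of this lemma; it simply cites Appendix~C of \cite{xu2023linear}. Your arguments for items (1), (2), (5), and (6) are standard and correct, and your argument for (3) is also sound---but note that it yields the bound with $[\sigma_{\max}^2(A)+\sigma_{\max}^2(C)]$ \emph{unsquared}, whereas the lemma statement has it squared. The squared version is in fact false (take $A=C=\epsilon I$ and $B=D=I$ with small $\epsilon$: then $\|AB+CD\|_F^2=4\epsilon^2 n$ while $[\sigma_{\max}^2(A)+\sigma_{\max}^2(C)]^2[\|B\|_F^2+\|D\|_F^2]=8\epsilon^4 n$), so this is a typo in the paper; the unsquared form you prove is exactly what the paper invokes downstream in the proof of Lemma~\ref{app_lem:bound_tau=0}.

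There is, however, a genuine gap in your handling of item (4): the inequality $\|A\|_F^2+\|B\|_F^2\leq 2\|A+B\|_F^2$ is simply false in general---set $B=-A$ and the right side vanishes---so no proof can exist. Your sketch hides the failure by not tracking the direction of the cross-term bound: from $|2\langle A,B\rangle|\leq\|A\|_F^2+\|B\|_F^2$ one only obtains $2\|A+B\|_F^2\geq 2\|A\|_F^2+2\|B\|_F^2-2(\|A\|_F^2+\|B\|_F^2)=0$, which is trivial and not the claim. The intended inequality is almost certainly the reverse, $\|A+B\|_F^2\leq 2(\|A\|_F^2+\|B\|_F^2)$, which your cross-term argument \emph{does} establish; you should have caught the counterexample and flagged the stated version as incorrect rather than outlining a ``proof'' of a false statement.
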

%
Lemma~\ref{lem:base} has been derived and used multiple times in prior work. We refer the readers to Appendix C in \cite{xu2023linear} for detailed proof.

\begin{lem}[Singular values of $\mathcal{T}$]\label{lem:base_tau_operator}
The largest and smallest singular values of $\mathcal{T}$ are given as
\begin{align} 
\sigma_{\min}^2(\mathcal{T})&=\sigma_{\min}^2(W_1)+\sigma_{\min}^2(W_2)\nonumber\,, \\  \sigma_{\max}^2(\mathcal{T})&=\sigma_{\max}^2(W_1)+\sigma_{\max}^2(W_2)\,.
\end{align}
\end{lem}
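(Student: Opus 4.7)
The plan is to identify the extreme singular values of the linear operator $\mathcal{T}: \R^{n\times m} \to \R^{(n+m)\times h}$ by first forming the self-adjoint positive semidefinite operator $\mathcal{T}^*\mathcal{T}$ acting on $\R^{n\times m}$ (with the Frobenius inner product), and then diagonalizing it explicitly via the singular value decompositions of $W_1$ and $W_2$. Since $\sigma_{\max}^2(\mathcal{T})$ and $\sigma_{\min}^2(\mathcal{T})$ are by definition the largest and smallest eigenvalues of $\mathcal{T}^*\mathcal{T}$, producing its full eigenstructure immediately yields the claimed formulas.

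First, I would compute $\mathcal{T}^*\mathcal{T}$ directly from the definition in \eqref{eqn:gradient_over}. For any $G \in \R^{n\times m}$, cyclicity of the trace gives $\|\mathcal{T}(G)\|_F^2 = \|G W_2\|_F^2 + \|G^\top W_1\|_F^2 = \Tr(G^\top G\, W_2 W_2^\top) + \Tr(G G^\top W_1 W_1^\top) = \langle G,\; W_1 W_1^\top G + G W_2 W_2^\top\rangle$. Since this holds for every $G$ and the bracketed expression is linear in $G$, we read off $\mathcal{T}^*\mathcal{T}(G) = W_1 W_1^\top G + G W_2 W_2^\top$, which is a familiar Sylvester/Lyapunov-type operator.

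Next, I would diagonalize this operator via the full SVDs $W_1 = U_1 \Sigma_1 V_1^\top$ and $W_2 = U_2 \Sigma_2 V_2^\top$, which yield $W_1 W_1^\top = U_1 D_1 U_1^\top$ and $W_2 W_2^\top = U_2 D_2 U_2^\top$ where $D_i := \Sigma_i \Sigma_i^\top$ is diagonal with entries $\sigma_j^2(W_i)$ (padded with zeros whenever $h$ is smaller than the ambient dimension). Performing the isometric change of variable $\tilde G := U_1^\top G U_2$ on $\R^{n\times m}$ (which preserves Frobenius norm and hence the singular values of the operator), the operator becomes $\tilde G \mapsto D_1 \tilde G + \tilde G D_2$, which acts entrywise by multiplication: $(\tilde G)_{ij} \mapsto \bigl(\sigma_i^2(W_1) + \sigma_j^2(W_2)\bigr)(\tilde G)_{ij}$. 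Therefore the full spectrum of $\mathcal{T}^*\mathcal{T}$ is $\{\sigma_i^2(W_1) + \sigma_j^2(W_2)\}_{i,j}$, and taking extrema over $(i,j)$ immediately gives $\sigma_{\max}^2(\mathcal{T}) = \sigma_{\max}^2(W_1) + \sigma_{\max}^2(W_2)$ and $\sigma_{\min}^2(\mathcal{T}) = \sigma_{\min}^2(W_1) + \sigma_{\min}^2(W_2)$.

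I do not anticipate a genuine obstacle here; the argument is a short, standard computation whose key insight is recognizing that $\mathcal{T}^*\mathcal{T}$ has Sylvester/Lyapunov structure and is therefore simultaneously diagonalized by the left singular vectors of $W_1$ (acting on rows of $G$) and of $W_2$ (acting on columns). The only point requiring care is the dimensional bookkeeping when $h$ differs from $n$ or $m$: the matrices $\Sigma_i$ are rectangular, but $D_i = \Sigma_i \Sigma_i^\top$ remains square and diagonal, and the convention that $\sigma_{\min}$ of a rectangular matrix refers to the $\min(p,q)$-th singular value is consistent with identifying the smallest value read off from $D_i$.
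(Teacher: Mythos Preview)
Your proof is correct and starts exactly as the paper does, by computing $\mathcal{T}^*\mathcal{T}(G)=W_1W_1^\top G+GW_2W_2^\top$. From there, however, you take a different and somewhat cleaner route. The paper proceeds variationally: it bounds $\lambda_{\min}(\mathcal{T}^*\mathcal{T})$ below by separately minimizing $\langle U,W_1W_1^\top U\rangle$ and $\langle U,UW_2W_2^\top\rangle$ over unit-norm $U$, and then exhibits a single rank-one matrix (built from the extremal left singular vectors of $W_1$ and $W_2$) that achieves equality; the $\lambda_{\max}$ case is handled symmetrically. You instead perform the orthogonal change of variables $\tilde G=U_1^\top GU_2$, which reduces $\mathcal{T}^*\mathcal{T}$ to the diagonal (entrywise) operator $\tilde G_{ij}\mapsto(\sigma_i^2(W_1)+\sigma_j^2(W_2))\tilde G_{ij}$ and therefore yields the \emph{entire} spectrum of $\mathcal{T}^*\mathcal{T}$ at once, not just its extremes. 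Your approach is more structural (it recognizes and exploits the Sylvester form), while the paper's is a direct Rayleigh-quotient argument; both are short, but yours gives slightly more information and avoids having to separately verify that the lower and upper bounds are attained. Your remark about the $\sigma_{\min}$ convention when $h$ is strictly between $n$ and $m$ is appropriate and matches how the paper implicitly uses $\sigma_{\min}$.
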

\begin{proof}
First, one can see
\begin{align}
    \mathcal{T}^*\circ\mathcal{T}(U;W_1,W_2) = U W_2W_2^\top +  W_1W_1^\top U\,,
\end{align}
where $\mathcal{T}^*$ is the adjoint of $\mathcal{T}$. Then, we use Min-max theorem to show
\begin{align}
    \lambda_{\min}(\mathcal{T}^*\circ\mathcal{T}) = \sigma_{\min}^2(W_1)+\sigma_{\min}^2(W_2)\,, \quad
    \lambda_{\max}(\mathcal{T}^*\circ\mathcal{T}) = \sigma_{\max}^2(W_1)+\sigma_{\max}^2(W_2)\,.
\end{align}
Let the singular value decompositions of $W_1, W_2$ be
\begin{align}
    W_1 = U_1\Sigma_1V_1^\top=\sum_{i=1}^{r_1} \sigma_{1, i}u_{1,i} v_{1,i}^\top \,,\quad
    W_2 = U_2\Sigma_2V_2^\top=\sum_{i=1}^{r_2} \sigma_{2, i}u_{2,i} v_{2,i}^\top\,,
\end{align}
where $r_1=\text{rank}(W_1), r_2=\text{rank}(W_2)$, and $\{\sigma_{1, i}\}_{i=1}^{r_1}, \{\sigma_{2, i}\}_{i=1}^{r_2}$ are of descending order. Then, one has the following 
\begin{align}
    \lambda_{\min}(\mathcal{T}^*\circ\mathcal{T}) &= \min_{\|U\|_F=1}\langle U, U W_2W_2^\top +  W_1W_1^\top U\rangle\nonumber\\
    &=\min_{\|U\|_F=1}\langle U, U W_2W_2^\top\rangle+\min_{\|U\|_F=1}\langle U,  W_1W_1^\top U\rangle\nonumber\\
    &\geq \sigma_{\min}^2(W_1)+\sigma_{\min}^2(W_2)\label{app_eqn_sigmamin_lower}\,.
\end{align}
On the other hand, if one choose $U=v_{1, r_1}u_{2,r_2}^\top$,the following equation holds
\begin{align}
    &\langle U, U W_2W_2^\top +  W_1W_1^\top U\rangle\nonumber \\
    =& \bigl\langle v_{1, r_1}u_{2,r_2}^\top, v_{1, r_1}u_{2,r_2}^\top W_2W_2^\top +  W_1W_1^\top v_{1, r_1}u_{2,r_2}^\top\bigl\rangle\nonumber\\
    =& \bigl\langle v_{1, r_1}u_{2,r_2}^\top, v_{1, r_1}u_{2,r_2}^\top\sum_{i=1}^{r_2} \sigma^2_{2, i}u_{2,i} v_{2,i}^\top\bigl\rangle  +  \bigl\langle v_{1, r_1}u_{2,r_2}^\top,  \sum_{i=1}^{r_1} \sigma^2_{1, i}u_{1,i} v_{1,i}^\top v_{1, r_1}u_{2,r_2}^\top\bigl\rangle\nonumber\\
    =&\sum_{i=1}^{r_2}\sigma^2_{2, i} \text{tr}(u_{2,r_2}v_{1, r_1}^\top v_{1, r_1}u_{2,r_2}^\top u_{2,i} v_{2,i}^\top)+\sum_{i=1}^{r_2}\sigma^2_{1, i} \text{tr}(u_{2,r_2}v_{1, r_1}^\top u_{1,i} v_{1,i}^\top v_{1, r_1}u_{2,r_2}^\top)\nonumber\\
    =&\sigma^2_{1, r_1}+\sigma^2_{2, r_2}\label{app_eqn_sigmamin_equal}\,,
\end{align}
where the last line is based on the fact that $v_{1, i}^\top v_{i, r_1}\!=\!0, u_{2, j}^\top u_{2, r_2}\!=\!0$ holds for all $i\!\neq\! r_1, j\!\neq\! r_2$. Therefore, based on \eqref{app_eqn_sigmamin_lower} and \eqref{app_eqn_sigmamin_equal}, one has
\begin{align}
    \lambda_{\min}(\mathcal{T}^*\circ\mathcal{T}) = \sigma_{\min}^2(W_1)+\sigma_{\min}^2(W_2)\,.
\end{align}
Similarly, we can show 
\begin{align}
    \lambda_{\max}(\mathcal{T}^*\circ\mathcal{T}) = \sigma_{\max}^2(W_1)+\sigma_{\max}^2(W_2)\,.
\end{align}
\end{proof}

\begin{lem}[Singular values of random matrix]\label{app_lem:sv_random_matrix}
Given $m,n \in\mathbb{N}$ with $m\leq n$. Let $A\in\R^{n\times m}$ be a random matrix with i.i.d. standard normal entries. For any $\delta>0$, with probability at least $1-2\exp(\delta^2)$, one has
\begin{align}
    \sqrt{n}-\sqrt{m}-\delta\leq \sigma_{\min}(A)\leq \sigma_{\max}(A)\leq \sqrt{n}+\sqrt{m}+\delta\,.
\end{align}
\end{lem}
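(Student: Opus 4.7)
The plan is to prove this in two classical steps: first bound the expected extremal singular values, then apply Gaussian concentration around those expectations. This is a standard result (sometimes attributed to Davidson--Szarek) and the probability exponent as written is presumably intended to be $-\delta^2/2$.

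First, I would establish that $\mathbb{E}[\sigma_{\max}(A)]\leq \sqrt{n}+\sqrt{m}$ and $\mathbb{E}[\sigma_{\min}(A)]\geq \sqrt{n}-\sqrt{m}$. The cleanest route is Gordon's comparison inequality for Gaussian processes, applied to the min-max (resp.\ max-max) representations
\begin{align*}
\sigma_{\max}(A)&=\max_{\|u\|=1,\|v\|=1} u^\top A v, &\sigma_{\min}(A)&=\min_{\|v\|=1}\max_{\|u\|=1} u^\top A v.
\end{align*}
By embedding the two-parameter Gaussian process $X_{u,v}=u^\top A v$ and comparing with the decoupled process $Y_{u,v}=g^\top u+h^\top v$, where $g\sim\mathcal{N}(0,I_n)$ and $h\sim\mathcal{N}(0,I_m)$ are independent, Slepian/Gordon gives $\mathbb{E}\max_{u,v} X_{u,v}\leq \mathbb{E}[\|g\|+\|h\|]\leq\sqrt{n}+\sqrt{m}$ and, for the min-max form, $\mathbb{E}\min_v\max_u X_{u,v}\geq \mathbb{E}[\|g\|-\|h\|]\geq \sqrt{n}-\sqrt{m}$ (using $\mathbb{E}\|g\|\geq $ a quantity lower bounded by $\sqrt{n}-o(1)$; more precisely one uses Jensen together with $\mathbb{E}\|g\|^2=n$ and the analogous upper bound for $\mathbb{E}\|h\|$).

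Second, I would invoke Gaussian Lipschitz concentration (Borell--TIS). The maps $A\mapsto \sigma_{\max}(A)$ and $A\mapsto \sigma_{\min}(A)$ are $1$-Lipschitz with respect to the Frobenius norm, since singular values satisfy Weyl-type perturbation bounds $|\sigma_k(A)-\sigma_k(B)|\leq \|A-B\|_2\leq \|A-B\|_F$. Viewing $A$ as a standard Gaussian vector in $\mathbb{R}^{nm}$, the concentration inequality for $1$-Lipschitz functions of standard Gaussians yields, for any $t>0$,
\begin{align*}
\Pr\bigl[\sigma_{\max}(A)\geq \mathbb{E}\sigma_{\max}(A)+t\bigr]&\leq \exp(-t^2/2),\\
\Pr\bigl[\sigma_{\min}(A)\leq \mathbb{E}\sigma_{\min}(A)-t\bigr]&\leq \exp(-t^2/2).
\end{align*}
Setting $t=\delta$ and combining with the step-one bounds gives $\sigma_{\max}(A)\leq\sqrt{n}+\sqrt{m}+\delta$ and $\sigma_{\min}(A)\geq \sqrt{n}-\sqrt{m}-\delta$, each failing with probability at most $\exp(-\delta^2/2)$. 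A union bound over the two events yields the claimed two-sided statement with probability at least $1-2\exp(-\delta^2/2)$.

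The main obstacle is the expectation bound in the first step: the upper bound on $\mathbb{E}\sigma_{\max}(A)$ follows from Slepian's lemma in fairly elementary fashion, but the lower bound on $\mathbb{E}\sigma_{\min}(A)$ requires Gordon's min-max refinement, which must be set up carefully (one needs the right sign conventions so that the comparison process bounds the minimax in the correct direction). The rest is bookkeeping: verifying $1$-Lipschitzness of the singular-value functions and applying Gaussian concentration are textbook. If a fully self-contained proof is preferred, the expectation bounds could alternatively be derived directly from the Marchenko--Pastur-type estimates for Gaussian ensembles, but Gordon's route is the most transparent.
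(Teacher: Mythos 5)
Your argument is correct and is essentially the proof the paper implicitly relies on: the paper gives no proof of this lemma and simply cites \cite{vershynin2018high}, whose standard Davidson--Szarek argument is exactly what you outline (Gordon/Slepian comparison for the expected extremal singular values, Borell--TIS concentration for the $1$-Lipschitz maps $A\mapsto\sigma_{\max}(A),\sigma_{\min}(A)$, and a union bound), including your correct reading of the misprinted probability as $1-2\exp(-\delta^2/2)$. The only step to tighten is your parenthetical justification of $\mathbb{E}\|g\|-\mathbb{E}\|h\|\ge\sqrt{n}-\sqrt{m}$: Jensen only gives the upper bounds $\mathbb{E}\|g\|\le\sqrt{n}$ and $\mathbb{E}\|h\|\le\sqrt{m}$ (the wrong direction for the $\|g\|$ term), and the inequality instead follows from the monotonicity fact that $k\mapsto\sqrt{k}-\mathbb{E}\|g_k\|$ is nonincreasing, after which everything goes through.
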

The proof of this lemma can be found in \cite{vershynin2018high}.

\section{Non-existence of Global PL Constant and Smoothness Constant for ~\ref{eqn:obj_over}}\label{app:mild_condition_smoothness}
In this section, we show that under mild assumptions, the PL inequality and smoothness inequality can only hold with constants $\mu_{\text{over}}=0$ and $K_{\text{over}}=\infty$ for ~\ref{eqn:obj_over}.

We then make the following assumption on ~\ref{eqn:obj_over}.
\begin{asmp}\label{assump_zero}
    $(W_1, W_2) = (0, 0)$ is not a global minimizer of ~\ref{eqn:obj_over}.
\end{asmp}
Based on the above assumption, one has the following proposition. 
%
\begin{prop}[Non-existence of global PL constant and smoothness constant]\label{app:prop}
Under Assumption~\ref{assump_zero}, the PL inequality and smoothness inequality can only hold with constants $\mu_{\text{over}}=0$ and $K_{\text{over}}=\infty$ for $L(W_1, W_2)$.
\end{prop}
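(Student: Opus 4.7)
The plan is to produce contradictions for any candidate global constants $\mu_{\text{over}} > 0$ and $K_{\text{over}} < \infty$ by evaluating the relevant inequalities at (or between) the origin and a carefully chosen point. The common mechanism I would exploit is the homogeneity identity $L(cW_1, cW_2) = \ell(c^2 W_1 W_2^\top)$, which forces $L$ to grow quartically along rays through the origin; this is incompatible with the quadratic growth allowed by the smoothness inequality, and the gradient formula \eqref{eqn:gradient_over} is incompatible with the PL inequality at the origin itself.

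For the PL part, the chain-rule expression $\nabla L(W_1, W_2) = \mathcal{T}(\nabla\ell(W); W_1, W_2)$ vanishes identically at $(W_1, W_2) = (0, 0)$ regardless of the value of $\nabla\ell(0)$, because both blocks of the gradient are postmultiplied by $W_1$ or $W_2$. By Assumption~\ref{assump_loss} the global minimum of $L$ equals $0$, and by Assumption~\ref{assump_zero} the point $(0,0)$ is not a minimizer, so $L(0,0) = \ell(0) > 0$. A global PL inequality would then read $0 = \tfrac{1}{2}\|\nabla L(0,0)\|_F^2 \geq \mu_{\text{over}}\, L(0,0) > 0$, forcing $\mu_{\text{over}} = 0$.

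For the smoothness part, I would fix any unit vectors $u\in\R^n$, $v\in\R^m$, $e\in\R^h$ and set $V_1 := c\,u e^\top$, $V_2 := c\,v e^\top$, so that $V_1 V_2^\top = c^2 u v^\top$ and $\|V_1\|_F^2 + \|V_2\|_F^2 = 2c^2$. Using $\nabla L(0,0) = 0$, a global smoothness inequality between $(0,0)$ and $(V_1, V_2)$ combined with the $\mu$-strong convexity of $\ell$ at $W=0$ from Assumption~\ref{assump_non_strong} yields
\begin{equation*}
\ell(0) + c^2\langle \nabla\ell(0), uv^\top\rangle + \tfrac{\mu}{2}\,c^4 \;\le\; L(V_1, V_2) \;\le\; \ell(0) + K_{\text{over}}\, c^2.
\end{equation*}
Letting $c\to\infty$, the quartic term on the left dominates the quadratic term on the right, contradicting any finite $K_{\text{over}}$.

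The only subtlety, and the step I would be most careful with, is the choice of perturbation direction: a generic radial scaling of an arbitrary $(W_1, W_2)$ would also work for smoothness, but would require that $W_1 W_2^\top \neq 0$, which is not automatic. Aligning the two rank-one factors through the same hidden coordinate $e$ guarantees $V_1 V_2^\top = c^2 uv^\top \neq 0$ irrespective of the shapes $n, m, h$, and reduces the entire argument to comparing the growth rates $c^2$ and $c^4$, which is elementary.
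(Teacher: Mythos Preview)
Your proof is correct. The PL argument is identical to the paper's: evaluate at the origin, where the gradient vanishes by the factorized form \eqref{eqn:gradient_over} but the loss is strictly positive by Assumption~\ref{assump_zero}.

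For smoothness, you take a genuinely different route. The paper anchors the smoothness inequality at a \emph{global minimizer} $(W_1^*,W_2^*)$ (where $\nabla L=0$ and $L=0$), applies strong convexity of $\ell$, and after some algebra with singular values obtains the quantitative lower bound $K_{\text{over}}\ge \tfrac{\mu}{4}\min\bigl(\sigma_{\min}^2(W_1),\sigma_{\min}^2(W_2)\bigr)$, which can be driven to infinity by scaling $W_1,W_2$. You instead anchor at the \emph{origin} (also a stationary point of $L$), pick the explicit rank-one ray $(c\,ue^\top,\,c\,ve^\top)$, and compare the quartic growth $\tfrac{\mu}{2}c^4$ of $L$ coming from strong convexity with the quadratic ceiling $K_{\text{over}}c^2$ imposed by smoothness. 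Your argument is shorter and more elementary: it sidesteps the singular-value manipulations and does not rely on the existence of a minimizer $(W_1^*,W_2^*)$ (hence does not need $h\ge\min(m,n)$). The paper's route, in exchange, produces the explicit pointwise bound on $K_{\text{over}}$ in terms of $\sigma_{\min}(W_i)$, which conveys \emph{how} the local smoothness constant scales with the weights---information that motivates the later local analysis.
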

\begin{proof}
We first show $\mu_{\text{over}}=0$. The gradient of $L$ is given as follows
\begin{align}
    \lVert \nabla L(W_1, W_2)\rVert_F^2 = \Vert\nabla\ell(W)W_2\rVert_F^2 + \Vert\nabla\ell(W)^\top W_1\rVert_F^2\,.
\end{align}
Notice when $W_1, W_2$ are zero matrices, the RHS of the above equation is zero. Therefore, we have $\lVert \nabla L(W_1, W_2)\rVert_F^2\!=\!0$. On the other hand, under Assumption~\ref{assump_zero}, since $(W_1, W_2) = (0, 0)$ is not a minimizer of ~\ref{eqn:obj_over}, we have $L(W_1, W_2)\!\neq\! 0$. Thus, the PL inequality can only hold globally with $\mu_{\text{over}}=0$,
\begin{align}
    \lVert \nabla L(W_1, W_2)\rVert_F^2 = \Vert\nabla\ell(W)W_2\rVert_F^2 + \Vert\nabla\ell(W)^\top W_1\rVert_F^2 \geq 2\mu_{\text{over}}L(W_1, W_2)\,.
\end{align}
Then, we show $K_{\text{over}}=\infty$ for ~\ref{eqn:obj_over}. We consider the smoothness inequality evaluated on arbitrary $(W_1, W_2)$ and the minimizer $(W_1^*, W_2^*)$ of ~\ref{eqn:obj_over}:
\begin{align}\label{app_eqn:smooth}
    L(W_1, W_2) \leq L(W_1^*, W_2^*) + \langle\nabla L(W_1^*, W_2^*), Z^*-Z \rangle+\frac{K_{\text{over}}}{2}\lVert Z^*-Z \rVert_F^2\,, 
\end{align}
where we use $Z, Z^*$ in short for $(W_1, W_2), (W_1^*, W_2^*)$. Since $(W_1^*, W_2^*)$ minimizes ~\ref{eqn:obj_over}, we have $\nabla L(W_1^*, W_2^*)=0_{(m+n)\times h}$ and $L(W_1^*, W_2^*)=0$. Thus, ~\eqref{app_eqn:smooth} is equivalent to the following
\begin{align}\label{app_eqn:lower_bound_K_over}
    K_{\text{over}} \geq \frac{2L(W_1, W_2)}{\lVert Z_W-Z_{W^*} \rVert_F^2}=\frac{2\ell(W_1W_2^\top)}{\lVert Z_W-Z_{W^*} \rVert_F^2}\,.
\end{align}
On the other hand, since $\ell(W)$ is $\mu$-strongly convex w.r.t. $W$, the following inequality holds for arbitrary $U, V$
\begin{align}\label{app_eqn:strong_convexity}
    \ell(U) \geq \ell(V) + \langle \nabla \ell(V), U-V\rangle + \frac{\mu}{2}\lVert U-V \rVert_F^2\,.
\end{align}
We substitute $U, V$ with $W_1W_2^\top, W_1^*(W_2^*)^\top$ in \eqref{app_eqn:strong_convexity}, we have
\begin{align}\label{app_eqn:lower_bound_ell}
    \ell(W_1W_2^\top) \geq \frac{\mu}{2}\lVert W_1W_2^\top\!-\!W_1^*(W_2^*)^\top \rVert_F^2 \,.
\end{align}
Finally, we combine \eqref{app_eqn:lower_bound_K_over} and \eqref{app_eqn:lower_bound_ell}, and derive the following lower bound on $K_{\text{over}}$
\begin{align}\label{app_eqn:lower_bound_K_over_w1*}
    K_{\text{over}} &\geq \frac{2\ell(W_1W_2^\top)}{\lVert Z_W-Z_{W^*} \rVert_F^2} \quad &&\text{Based on \eqref{app_eqn:lower_bound_K_over}}\nonumber \\
    &\geq \frac{\mu \lVert W_1W_2^\top\!-\!W_1^*(W_2^*)^\top \rVert_F^2}{\lVert W_1-W_1^* \rVert_F^2 + \lVert W_2-W_2^* \rVert_F^2} \quad &&\text{Apply \eqref{app_eqn:lower_bound_ell} to $\ell(W_1W_2^*)$}\nonumber \\
    &=\frac{\mu \lVert W_1W_2^\top\!-\!W_1^*W_2^\top\!+\! W_1^*W_2^\top\!-\!W_1^*(W_2^*)^\top \rVert_F^2}{\lVert W_1-W_1^* \rVert_F^2 + \lVert W_2-W_2^* \rVert_F^2}\nonumber \\
    &=\frac{\mu \lVert (W_1-W_1^*)W_2^\top\!+\! W_1^*(W_2-W_2^*)^\top)\rVert_F^2}{\lVert W_1-W_1^* \rVert_F^2 + \lVert W_2-W_2^* \rVert_F^2}\nonumber \\
    &\geq \frac{\mu}{2}\cdot\frac{\lVert (W_1-W_1^*)W_2^\top\rVert_F^2\!+\! \lVert W_1^*(W_2-W_2^*)^\top)\rVert_F^2}{2\lVert W_1-W_1^* \rVert_F^2 + \lVert W_2-W_2^* \rVert_F^2} \quad &&\text{Apply Lemma~\ref{lem:base}}\nonumber \\
    &\geq \frac{\mu}{2}\cdot\frac{\sigma^2_{\min}(W_2)\lVert W_1-W_1^*\rVert_F^2\!+\! \sigma^2_{\min}(W_1^*)\lVert W_2-W_2^*\rVert_F^2}{\lVert W_1-W_1^* \rVert_F^2 + \lVert W_2-W_2^* \rVert_F^2} \quad &&\text{Apply Lemma~\ref{lem:base}}\,.
\end{align}
Similarly, one can also derive the following lower bound on $K_{\text{over}}$
\begin{align}\label{app_eqn:lower_bound_K_over_w2*}
    K_{\text{over}} \geq \frac{\mu}{2}\cdot\frac{\sigma^2_{\min}(W_2^*)\lVert W_1-W_1^*\rVert_F^2\!+\! \sigma^2_{\min}(W_1)\lVert W_2-W_2^*\rVert_F^2}{\lVert W_1-W_1^* \rVert_F^2 + \lVert W_2-W_2^* \rVert_F^2}
\end{align}
We take the average of the lower bound on $K_{\text{over}}$ in \eqref{app_eqn:lower_bound_K_over_w1*} and \eqref{app_eqn:lower_bound_K_over_w2*},
\begin{align}
    K_{\text{over}} &\geq \frac{\mu}{4}\cdot\frac{\bigl(\sigma^2_{\min}(W_2)+\sigma^2_{\min}(W_2^*)\bigl)\lVert W_1-W_1^*\rVert_F^2\!+\! \bigl(\sigma^2_{\min}(W_1)+\sigma^2_{\min}(W_1^*)\bigl)\lVert W_2-W_2^*\rVert_F^2}{\lVert W_1-W_1^* \rVert_F^2 + \lVert W_2-W_2^* \rVert_F^2}\nonumber \\
    &\geq \frac{\mu}{4}\cdot \min\biggl(\sigma^2_{\min}(W_1)+\sigma^2_{\min}(W_1^*), \sigma^2_{\min}(W_2)+\sigma^2_{\min}(W_2^*)\biggl)\nonumber\\
    &\geq \frac{\mu}{4}\cdot\min\biggl(\sigma^2_{\min}(W_1), \sigma^2_{\min}(W_2)\biggl)\,.
\end{align}
Due to the arbitrary choices of $W_1, W_2$, we can let $\sigma_{\min}(W_1)$ and $\sigma_{\min}(W_2)$ to be arbitrarily large, thus the smoothness inequality for ~\ref{eqn:obj_over} can only hold globally with $K_{\text{over}}=\infty$.

\begin{figure}
    \centering
    \includegraphics[width=0.9\textwidth]{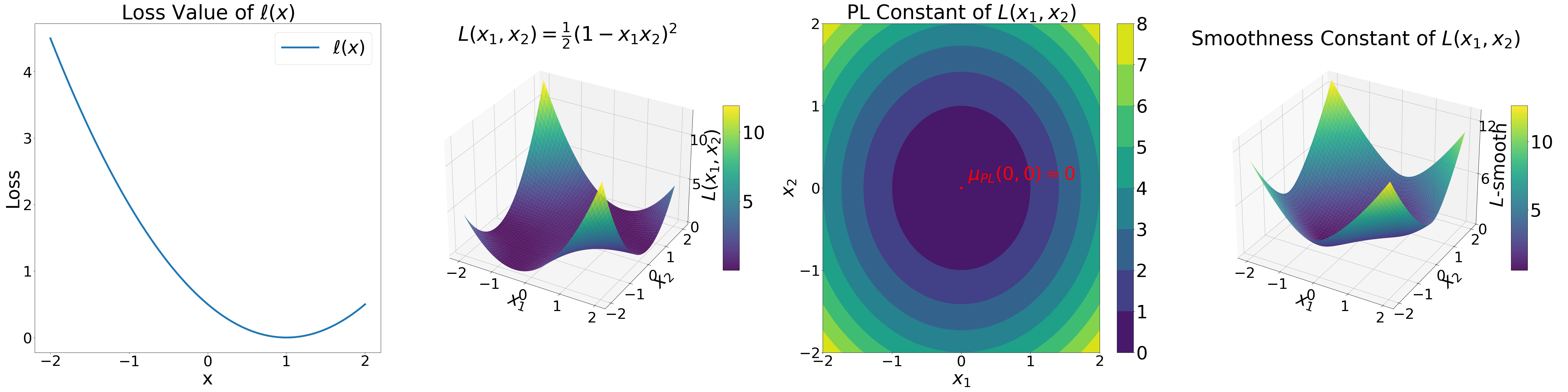}
    \caption{Plot of a toy example illustrating the loss function $\ell(x)$ and its overparametrized version $L(x_1,x_2)$, along with the corresponding local PL constant $\mu_{\mathrm{over}}(x_1,x_2)$ and smoothness constant $K_{\mathrm{over}}(x_1,x_2)$. The definitions of $\ell(x)$, $L(x_1,x_2)$, $\mu_{\mathrm{over}}(x_1,x_2)$, and $K_{\mathrm{over}}(x_1,x_2)$ are given in \eqref{eqn:toy1}, \eqref{eqn:toy2}, and \eqref{eqn:toy3}.}
    \label{fig:constant_example}
\end{figure}

To illustrate Proposition~\ref{app:prop}, we examine how the PL constant and smoothness constant evolve in a simple one-dimensional setting. Specifically, we compare the \emph{non-overparametrized} loss
\begin{align}\label{eqn:toy1}
   \ell(x) \!=\! \tfrac{1}{2}\,\bigl(x \!-\! 1\bigr)^2 
\end{align}
with the \emph{overparametrized} loss
\begin{align}\label{eqn:toy2}
    L(x_1,x_2) \!=\! \tfrac{1}{2}\,\bigl(x_1x_2 \!-\! 1\bigr)^2.
\end{align}
In this simple example, one can exactly compute local smoothness constants and PL constants as follows 
\begin{align}\label{eqn:toy3}
    \mu_{\mathrm{over}}(x_1,x_2)&\!=\!\frac{(\nabla L(x_1,x_2))^2}{2L(x_1,x_2)}\!=\!x_1^2+x_2^2,\nonumber\\
    K_{\mathrm{over}}(x_1,x_2)&\!=\!\lambda_{\max}(\nabla^2 L(x_1,x_2))\!=\!\frac{x_1^2\!+\!x_2^2+\sqrt{(x_1^2\!+\!x_2^2)^2\!-\!4x_1^2x_2^2\!+\!(2x_1x_2-1)^2}}{2}\,.
\end{align}
Since the local smoothness constants and PL constants of $L(x_1,x_2)$ now depend on the input, we use $\mu_{\mathrm{over}}(x_1,x_2), K_{\mathrm{over}}(x_1,x_2)$ to denote them.

In Figure~\ref{fig:constant_example}, we plot the loss landscapes for both $\ell(x)$ and $L(x)$. We observe that while $\ell(x)$ is strongly convex and smooth, with global PL and smoothness constants $\mu \!=\! K \!=\! 1$, the overparameterized loss $L(x)$ is non-convex. Moreover, its local PL constant $\mu_{\mathrm{over}}(x_1,x_2)$ vanishes when $x_1 \!=\! x_2 \!=\! 0$, and its local smoothness constant $K_{\mathrm{over}}(x_1,x_2)$ diverges as $\|x_1\|^2\!+\!\|x_2\|^2 \to \infty$.
Consequently, if one attempts to enforce the PL inequality and Descent Lemma with \emph{global} constants in the overparameterized setting, the only possibilities are 
\begin{align}
\mu_{\mathrm{over}} 
\!=\! 
\min_{x_1,x_2}\,\mu_{\mathrm{over}}(x_1,x_2) 
\!=\! 
0
\quad\text{and}\quad
K_{\mathrm{over}}
\!=\!
\max_{x_1,x_2}\,K_{\mathrm{over}}(x_1,x_2)
\!=\!
\infty.
\end{align}

\end{proof}
\section{A Detailed Comparison with Prior Work}\label{app:comparison}
In this section, we present a detailed comparison to \cite{arora2018convergence, du2018algorithmic,xu2023linear} to highlight the difference in technical details and improvement on the convergence rate.

\myparagraph{Summary of the strategy of proof in \citep{arora2018convergence, du2018algorithmic, xu2023linear}} Based on GD update in \eqref{eqn:gd_over}, one can derive the following update on the product $W(t)$
\begin{align}\label{eqn:update_product_over}
    W(t\!+\!1)=W(t)-\eta_t \mathcal{T}_t^*\circ \mathcal{T}_t(\nabla\ell(t))+\eta_t^2 \nabla\ell(t) W(t)^\top\nabla\ell(t)\,,
\end{align}
where $\mathcal{T}_t^*$ is the adjoint of $\mathcal{T}_t$. Then, substituting \eqref{eqn:update_product_over} into the smoothness inequality of the non-overparametrized model in \eqref{eqn:smooth}, we can derive the following upper bound on the loss at iteration $t\!+\!1$ using the loss at iteration $t$ (Also see Lemma3.1 in \cite{xu2023linear}).
\begin{lem}
If at the $t$-th iteration of GD applied to the over-parametrized loss $L$, the step size $\eta_t$ satisfies
\begin{equation}\label{eqn:lem_cond_eta}
\begin{split}
    \sigma^2_{\min}(\mathcal{T}_t)-\eta_t\|\nabla\ell(t)\|_F\|W(t)\|_F
    -\frac{K\eta_t}{2}\bigl[\sigma^2_{\max}(\mathcal{T}_t)+\eta_t \|\nabla\ell(t)\|_F\|W(t)\|_F\bigl]^2\geq 0\,,
\end{split}
\end{equation}
then the following inequality holds
\begin{equation}\label{eqn:upper_lt}
L(t\!+\!1) \leq \rho(\eta,t) L(t)\,,
\end{equation}
where 
\begin{align}
    \rho(\eta,t)=1&-2\eta_t\mu\sigma^2_{\min}(\mathcal{T}_t)+K\mu \eta_t^2 \sigma^4_{\max}(\mathcal{T}_t)+2\eta_t^2 \mu\sigma_{\max}(W(t))\|\nabla \ell(t)\|_F \nonumber\\
    & +2\eta_t^3 \mu K \sigma^2_{\max}(\mathcal{T}_t)\sigma_{\max}(W(t))\|\nabla \ell(t)\|_F+\eta_t^4\mu K\sigma^2_{\max}(W(t))\|\nabla \ell(t)\|_F^2. \label{eqn:rhot}
\end{align}
\end{lem}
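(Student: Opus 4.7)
The plan is to combine the global $K$-smoothness of the non-overparametrized loss $\ell$ with the explicit update formula \eqref{eqn:update_product_over} for the product $W(t)=W_1(t)W_2(t)^\top$. First, I would apply the smoothness inequality \eqref{eqn:smooth} to the pair $(W(t+1),W(t))$, yielding
\begin{align}
\ell(t+1) \leq \ell(t) + \langle \nabla\ell(t), W(t+1)-W(t)\rangle + \tfrac{K}{2}\|W(t+1)-W(t)\|_F^2,
\end{align}
and then substitute $W(t+1)-W(t) = -\eta_t\,\mathcal{T}_t^*\!\circ\!\mathcal{T}_t(\nabla\ell(t)) + \eta_t^2\,\nabla\ell(t)W(t)^\top\nabla\ell(t)$ on the right-hand side. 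Since $L(t)=\ell(W_1(t)W_2(t)^\top)=\ell(t)$, controlling $\ell(t+1)$ immediately controls $L(t+1)$.

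Next I would bound each resulting piece using the singular-value inequalities in Lemma~\ref{lem:base}. The inner-product term contributes $-\eta_t\|\mathcal{T}_t(\nabla\ell(t))\|_F^2 \leq -\eta_t\sigma_{\min}^2(\mathcal{T}_t)\|\nabla\ell(t)\|_F^2$, together with a cubic cross term of the form $\eta_t^2\sigma_{\max}(W(t))\|\nabla\ell(t)\|_F^3$ obtained by Cauchy-Schwarz and \eqref{app_eqn:frob_operator_norm}. For the quadratic piece, the triangle inequality combined with the same singular-value bounds gives
\begin{align}
\|W(t+1)-W(t)\|_F \leq \eta_t\|\nabla\ell(t)\|_F\bigl[\sigma_{\max}^2(\mathcal{T}_t)+\eta_t\|\nabla\ell(t)\|_F\,\sigma_{\max}(W(t))\bigr].
\end{align}
Squaring and collecting everything, I would factor out $\|\nabla\ell(t)\|_F^2$ to write $\ell(t+1)\leq \ell(t)+\|\nabla\ell(t)\|_F^2\,C(\eta_t,t)$, where $C$ is a polynomial in $\eta_t$ whose linear term is $-\eta_t\sigma_{\min}^2(\mathcal{T}_t)$ and whose remaining positive terms match exactly the quantity inside the bracket of condition \eqref{eqn:lem_cond_eta} once $\sigma_{\max}(W(t))$ is replaced by the upper bound $\|W(t)\|_F$. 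Hence \eqref{eqn:lem_cond_eta} forces $C(\eta_t,t)\leq 0$, and applying the PL inequality $\|\nabla\ell(t)\|_F^2\geq 2\mu\,\ell(t)$ to this nonpositive coefficient yields $\ell(t+1)\leq \ell(t)\bigl(1+2\mu\, C(\eta_t,t)\bigr)$; a term-by-term expansion of $1+2\mu C$ then recovers the definition \eqref{eqn:rhot} of $\rho(\eta_t,t)$.

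The main obstacle is the self-referential nature of $\rho$: $C$ is a degree-two polynomial in $\|\nabla\ell(t)\|_F$, so $\rho(\eta_t,t)$ is not a constant but itself depends on the current gradient, and the step-size condition \eqref{eqn:lem_cond_eta} must be checked self-consistently at every iteration. A secondary subtlety is the mild slack introduced by using $\|W(t)\|_F$ in the hypothesis but $\sigma_{\max}(W(t))$ in the conclusion; this asymmetry is what allows the sign-check on $C$ to go through cleanly despite the mixed second- and third-order terms in $\|\nabla\ell(t)\|_F$ that arise after expanding the square $[\sigma_{\max}^2(\mathcal{T}_t)+\eta_t\|\nabla\ell(t)\|_F\sigma_{\max}(W(t))]^2$.
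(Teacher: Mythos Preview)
Your proposal is correct and follows exactly the approach the paper describes: substitute the product update \eqref{eqn:update_product_over} into the smoothness inequality \eqref{eqn:smooth} of the non-overparametrized loss, bound the inner-product and quadratic terms via Lemma~\ref{lem:base}, use condition \eqref{eqn:lem_cond_eta} (with the slack $\sigma_{\max}(W(t))\leq\|W(t)\|_F$) to certify the coefficient of $\|\nabla\ell(t)\|_F^2$ is nonpositive, and then apply the PL inequality. The paper itself does not spell out the proof but attributes it to Lemma~3.1 of \cite{xu2023linear}, and your term-by-term expansion of $1+2\mu C$ reproduces \eqref{eqn:rhot} exactly.
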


\myparagraph{Improvement of the local rate of decrease} First, one can see the local rates of decrease in both work are polynomials of degree four and depend on $\eta_t, \nabla\ell(t)$ and singular values of $\mathcal{T}_t, W_t$. Moreover, around any global minimum, i.e., $L(t)\approx 0, \lVert\nabla\ell(t)\rVert_F\approx 0$, we have the following local rate of decrease per iteration
\begin{align}
    &1-2\eta_t\mu\sigma^2_{\min}(\mathcal{T}_t)+\eta_t^2 K\mu \sigma^4_{\max}(\mathcal{T}_t) && \text{local rate of decrease in prior work}\,,\nonumber\\
    &1-2\eta_t\mu\sigma^2_{\min}(\mathcal{T}_t)+\eta_t^2 K\mu \sigma^2_{\min}(\mathcal{T}_t)\eta_t^2 \sigma^2_{\max}(\mathcal{T}_t) && \text{local rate of decrease in this work}\,,
\end{align}
and the optimal local rates of decrease regardless of the constraints on the $\eta_t$ are
\begin{align}
    &1-\frac{\mu}{K}\cdot \frac{\sigma^4_{\min}(\mathcal{T}_t)}{\sigma^4_{\max}(\mathcal{T}_t)} && \text{optimal local rate of decrease in prior work}\,,\nonumber\\
    &1-\frac{\mu}{K}\cdot \frac{\sigma^2_{\min}(\mathcal{T}_t)}{\sigma^2_{\max}(\mathcal{T}_t)} && \text{optimal local rate of decrease in this work}\label{app_eqn:local_rate}\,.
\end{align}
Thus, one can see our characterization of local Descent lemma and PL inequality leads to faster local rates of decrease compared with prior results by $\frac{\sigma^2_{\min}(\mathcal{T}_t)}{\sigma^2_{\max}(\mathcal{T}_t)}$. Nevertheless, \eqref{app_eqn:local_rate} does not imply linear convergence since if $\lim\limits_{t\to\infty}\frac{\sigma^2_{\min}(\mathcal{T}_t)}{\sigma^2_{\max}(\mathcal{T}_t)}=0$, one would not expect sufficient decrease per iteration. In order to show linear convergence, one needs to provide a uniform lower bound on $\frac{\sigma^2_{\min}(\mathcal{T}_t)}{\sigma^2_{\max}(\mathcal{T}_t)}, \forall t$.

\myparagraph{Improvement of the local rate of convergence} In this work, we show when the step sizes satisfy certain constraints (See Theorem~\ref{thm:linear_convergence}), there exist uniform spectral bounds for the condition number of $\mathcal{T}_t$, i.e., $\frac{\sigma^2_{\min}(\mathcal{T}_t)}{\sigma^2_{\max}(\mathcal{T}_t)}\leq c(\eta_0)\frac{\alpha_1}{\alpha_2}, \forall t$ where $\alpha_1, \alpha_2$ only depend on the initial weights and $c(\eta_0)$ is a constant approaching one as $\eta_0$ decreases. Thus, the optimal final rate of convergence derived in this work is
\begin{align}
    1-\frac{\mu}{K}\cdot \frac{\alpha_1}{\alpha_2} && \text{optimal local rate of convergence in this work}\label{app_eqn:rate_this_work}\,.
\end{align}
In prior work, the rates in \citep{du2018algorithmic, arora2018convergence} are extremely slow in practice (See Section 4 in \citep{xu2023linear}). In \citep{xu2023linear}, the authors introduce two auxiliary constants $0\!<\!c_1\!<\! 1, c_2\!>\!1$, and show that one can uniformly bound the condition number of $\mathcal{T}_t$ during training, i.e.,  $\frac{\sigma^2_{\min}(\mathcal{T}_t)}{\sigma^2_{\max}(\mathcal{T}_t)}\leq \frac{c_1\alpha_1}{c_2\alpha_2}, \forall t$. Moreover, they enforce problem-dependent assumptions on the choices of $c_1, c_2$. According to Claim E.1 in \citep{xu2023linear}, $\frac{c_1}{c_2}$ is at most $\frac{1}{3}$ and can be arbitrarily small when the initial loss is large. Thus, the local rate of convergence in \citep{xu2023linear} is at most, in our notation, 
\begin{align}
    1-\frac{\mu}{K}\cdot \frac{c_1^2\alpha_1^2}{c_2^2\alpha_2^2} && \text{optimal local rate of convergence in \citep{xu2023linear}}\label{app_eqn:rate_xu}\,.
\end{align}
When comparing \eqref{app_eqn:rate_this_work} and \eqref{app_eqn:rate_xu}, one can directly conclude the local rate of convergence derived in this work is much faster than the rate derived in \citep{xu2023linear}. Moreover, the optimal local rate of convergence of the overparametrized model in this work is different from the optimal rate of convergence of the non-overparametrized model up to a factor of $\frac{\alpha_1}{\alpha_2}$, which shows overparametrization has a benign effect if one can control $\frac{\alpha_1}{\alpha_2}$ through properly initialization of the weights. However, such results are not shown in the work of \citep{arora2018convergence, du2018algorithmic, xu2023linear}.
\section{Proof of Theorem~\ref{thm:local_inequality_upper_bound}}\label{app:kmu_tilde}
In this section, we present the proof of Theorem~\ref{thm:local_inequality_upper_bound}.
\begin{thm}[Restate of Theorem~\ref{thm:local_inequality_upper_bound}]\label{app_thm:local_inequality_upper_bound}
At the $t$-th iteration of GD applied to the ~\ref{eqn:obj_over}, the Descent lemma and PL inequality hold with local smoothness constant $K_t$ and PL constant $\mu_t$, i.e., 
\begin{align}
    L(t\!+\!1) \leq L(t) -\bigl(\eta_t -\frac{K_t\eta_t^2}{2}\bigl)\lVert \nabla L(t)\rVert_F^2\,, \quad
    \frac{1}{2}\lVert \nabla L(t)\rVert_F^2 \geq \mu_t L(t)\,.
\end{align}
Moreover, if the step size $\eta_t$ satisfies $ \eta_t>0$ and $\eta_t K_t<2$,
then the following inequality holds
\begin{align}
    L(t\!+\!1) \leq L(t)(1-2\mu_t\eta_t + \mu_t K_t\eta_t^2):= L(t)\rho(\eta_t, t)\,,
\end{align}
where
\begin{align}
    \mu_t &= \mu\sigma_{\min}^2(\mathcal{T}_t)\,, \\
    K_t &= K\sigma_{\max}^2(\mathcal{T}_t)\! +\! \sqrt{2KL(t)}\! +\! 6K^2 \sigma_{\max}(W(t)) L(t)\eta_t^2\! +\! 3K\sigma_{\max}^2(\mathcal{T}_t)\sqrt{2KL(t)} \eta_t\,,
\end{align}
and we use $L(t), \mathcal{T}_t$ as a shorthand for $L(W_1(t), W_2(t)), \mathcal{T}(\ \cdot\ ;W_1(t), W_2(t))$ resp.
\end{thm}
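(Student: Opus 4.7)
I would treat the PL inequality and the Descent lemma separately, since the former is essentially immediate from the chain rule, while the latter contains all of the technical work.

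For the PL inequality, observe that~\eqref{eqn:gradient_over} gives $\nabla L(t)=\mathcal{T}_t(\nabla\ell(t))$, so that $\|\nabla L(t)\|_F^2\geq \sigma_{\min}^2(\mathcal{T}_t)\|\nabla\ell(t)\|_F^2$. Combining this with the global PL inequality $\|\nabla\ell(t)\|_F^2\geq 2\mu\,\ell(W(t))=2\mu L(t)$, which is implied by Assumption~\ref{assump_non_strong} together with Assumption~\ref{assump_loss}, immediately yields $\mu_t=\mu\sigma_{\min}^2(\mathcal{T}_t)$.

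For the Descent lemma, the plan is to apply the $K$-smoothness of $\ell$, which holds globally on the $W$ variable, along the trajectory of the product $W(t)=W_1(t)W_2(t)^\top$. A direct expansion of $W(t+1)=W_1(t+1)W_2(t+1)^\top$ using the update~\eqref{eqn:gd_over} gives
\begin{align}
\Delta W := W(t+1)-W(t)=-\eta_t\,\mathcal{T}_t^{*}\mathcal{T}_t(\nabla\ell(t))+\eta_t^2\,\nabla\ell(t)\,W(t)^{\top}\nabla\ell(t),
\end{align}
where $\mathcal{T}_t^{*}$ is the adjoint of $\mathcal{T}_t$. Plugging this into $\ell(W(t+1))\leq\ell(W(t))+\langle\nabla\ell(t),\Delta W\rangle+\tfrac{K}{2}\|\Delta W\|_F^2$ and using the identity $\langle\nabla\ell(t),\mathcal{T}_t^{*}\mathcal{T}_t(\nabla\ell(t))\rangle=\|\mathcal{T}_t(\nabla\ell(t))\|_F^2=\|\nabla L(t)\|_F^2$ extracts the leading term $-\eta_t\|\nabla L(t)\|_F^2$, leaving remainders of orders $\eta_t^2$, $\eta_t^3$, and $\eta_t^4$.

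The heart of the argument is to bound each remainder by an appropriate multiple of $\|\nabla L(t)\|_F^2$, so that the total equals $\tfrac{1}{2}K_t\eta_t^2\|\nabla L(t)\|_F^2$ with $K_t$ as stated. Two estimates do almost all of the work: the operator bound $\|\mathcal{T}_t^{*}\mathcal{T}_t(\nabla\ell(t))\|_F\leq \sigma_{\max}(\mathcal{T}_t)\|\nabla L(t)\|_F$, which produces the $K\sigma_{\max}^2(\mathcal{T}_t)$ piece; and the factorization bound $\|\nabla\ell(t)W(t)^{\top}\nabla\ell(t)\|_F=\|(\nabla\ell(t)W_2(t))(W_1(t)^{\top}\nabla\ell(t))\|_F\leq \tfrac{1}{2}\|\nabla L(t)\|_F^2$, which follows from submultiplicativity of the Frobenius norm, AM-GM, and the decomposition $\|\nabla L(t)\|_F^2=\|\nabla\ell(t)W_2(t)\|_F^2+\|W_1(t)^{\top}\nabla\ell(t)\|_F^2$. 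Combined with the global smoothness consequence $\|\nabla\ell(t)\|_F\leq\sqrt{2KL(t)}$ (from Assumption~\ref{assump_non_strong} together with $\ell^{*}=0$), these two inequalities produce the $\sqrt{2KL(t)}$, the $3K\sigma_{\max}^2(\mathcal{T}_t)\sqrt{2KL(t)}\,\eta_t$, and the $6K^2\sigma_{\max}(W(t))L(t)\eta_t^2$ contributions to $K_t$, which arise from the $\eta_t^2$ cross term $\eta_t^2\langle\nabla\ell(t),\nabla\ell(t)W(t)^{\top}\nabla\ell(t)\rangle$, from the $\eta_t^3$ cross term inside $\tfrac{K}{2}\|\Delta W\|_F^2$, and from the $\eta_t^4$ term $\tfrac{K}{2}\eta_t^4\|\nabla\ell(t)W(t)^{\top}\nabla\ell(t)\|_F^2$, respectively.

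The main obstacle is the bookkeeping of these higher-order remainders: each contains several copies of $\nabla\ell(t)$ paired with $W(t)$, $W_1(t)$, or $W_2(t)$, and for each factor one must choose whether to trade it for $\sigma_{\max}(\mathcal{T}_t)\|\nabla L(t)\|_F$, for $\sigma_{\max}(W(t))\|\nabla\ell(t)\|_F$, or for the absolute bound $\sqrt{2KL(t)}$ coming from smoothness. Different allocations produce different summands of $K_t$, and the specific constants $3$ and $6$ appearing there reflect these choices (plus standard Young/AM-GM slack when bounding the $\eta_t^3$ cross term). Once both local inequalities are in place, the contraction $L(t+1)\leq L(t)\rho(\eta_t,t)$ follows by substituting the PL bound into the Descent lemma, provided $\eta_t>0$ and $\eta_t K_t<2$ so that $\eta_t-\tfrac{1}{2}K_t\eta_t^2>0$.
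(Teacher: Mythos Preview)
Your argument is correct, but it follows a genuinely different route from the paper. The paper proves the Descent Lemma by a second-order Taylor expansion of $L$ in the $(W_1,W_2)$ variables with the integral Hessian remainder,
\[
L(t{+}1)=L(t)-\eta_t\|\nabla L(t)\|_F^2+\eta_t^2\|\nabla L(t)\|_F^2\int_0^1(1-\tau)\langle g_t,H(\tau)g_t\rangle\,d\tau,
\]
and then bounds the directional Hessian $\langle g_t,H(\tau)g_t\rangle$ uniformly in $\tau\in[0,1]$ via two lemmas (one at $\tau=0$, one controlling the drift $\langle g_t,(H(\tau)-H(0))g_t\rangle$). You instead work entirely in the product variable: apply the global $K$-smoothness of $\ell$ to $W(t)\mapsto W(t{+}1)$, expand $\Delta W=-\eta_t\mathcal{T}_t^{*}\mathcal{T}_t(\nabla\ell(t))+\eta_t^2\nabla\ell(t)W(t)^\top\nabla\ell(t)$, and bound the remainders. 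This is exactly the template the paper identifies in Appendix~\ref{app:comparison} as the ``prior work'' route; what makes your version succeed where the earlier analyses lose a factor $\kappa(\mathcal{T}_t)$ is that you keep all estimates in terms of $\|\nabla L(t)\|_F$ rather than $\|\nabla\ell(t)\|_F$, using the exact identity $\langle\nabla\ell,\mathcal{T}^{*}\mathcal{T}\nabla\ell\rangle=\|\nabla L\|_F^2$ and the adjoint bound $\|\mathcal{T}^{*}\mathcal{T}\nabla\ell\|_F=\|\mathcal{T}^{*}(\nabla L)\|_F\le\sigma_{\max}(\mathcal{T})\|\nabla L\|_F$. With these two moves you recover the leading term $K\sigma_{\max}^2(\mathcal{T}_t)$ of $K_t$; in fact, carrying your bounds through gives subleading constants $1$ and $1$ rather than the paper's $3$ and $6$ (the latter arise from the extra drift terms in the Hessian at $\tau>0$), so your Descent Lemma is slightly stronger and the stated one follows a fortiori. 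The trade-off is interpretive: the paper's route literally bounds the Hessian of the overparametrized loss along the GD segment, which matches the narrative of a ``local smoothness constant'' of $L$, whereas your route is more elementary but leans on the smoothness of the non-overparametrized $\ell$.
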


\begin{proof}
We first show that the local PL inequality holds.
\begin{align}
    \lVert \nabla L(t)\rVert_F^2 &= \biggl\|\begin{bmatrix}
\nabla \ell(t)W_2(t)\\
\nabla \ell(t)^\top W_1(t)
\end{bmatrix}\biggl\|_F^2\nonumber \\
&= \|\ell(t)W_2(t)\|_F^2 + \|\ell(t)^\top W_1(t)\|_F^2\nonumber\\
&\geq \sigma^2_{\min}(W_2(t))\|\nabla \ell(t)\|_F^2+\sigma^2_{\min}(W_1(t))\|\nabla \ell(t)\|_F^2 \quad &&\text{Apply Lemma~\ref{lem:base}}\nonumber\\
&\geq 2\mu \sigma^2_{\min}(W_2(t))L(t) + 2\mu \sigma^2_{\min}(W_1(t))L(t) \quad && \text{Apply PL inequality of $\ell$}\nonumber \\
&=2\mu_t L(t)\,,\nonumber
\end{align}
where the last equality uses the fact that $\sigma_{\min}^2(\mathcal{T}_t)=\sigma_{\min}^2(W_1(t))+\sigma_{\min}^2(W_2(t))$. Then, we show that Descent lemma holds with local smoothness constant $K_t$. We can view $L(t\!+\!1)$ using the following second Taylor approximation,
\begin{align}\label{app_eqn:taylor_remainder}
    L(t\!+\!1) &= L(t) + \langle \nabla L(t), Z_{t+1}\!-\!Z_t\rangle + \int_{0}^1(1-\tau)\bigl\langle Z_{t+1}\!-\!Z_t, H(\tau)(Z_{t+1}\!-\!Z_t)\bigl\rangle d\tau\,,\nonumber\\
    &= L(t)-\eta_t \|\nabla L(t)\|_F^2 + \eta_t^2\|\nabla L(t)\|_F^2\int_{0}^1(1-\tau)\langle g_t, H(\tau)g_t\rangle d\tau\,,
\end{align}
where we use $Z_{t+1},Z_t$ in short for $(W_1(t\!+\!1), W_2(t\!+\!1)), (W_1(t), W_2(t))$ respectively, and $g_t=\frac{\nabla L(t)}{\|\nabla L(t)\|_F}$ to denote the unit vector of the gradient direction. Moreover, the $H(\tau)$ is defined as follows,
\begin{align}
    H(\tau) &= \nabla^2 L\bigl((1-\tau) W_1(t)\!+\!\tau W_1(t\!+\!1), (1-\tau) W_2(t)\!+\!\tau W_2(t\!+\!1)\bigl)\nonumber\\
    &=\nabla^2 L\bigl(W_1(t)\!-\!\eta_t\tau\nabla_{W_1}L(t), W_2(t)\!-\!\eta_t\tau\nabla_{W_2}L(t)\bigl)\,.
\end{align}
Notice the integral in the \eqref{app_eqn:taylor_remainder} does not have a closed-form solution. We use the following two-step approach to derive an upper bound on the RHS of \eqref{app_eqn:taylor_remainder}.

\myparagraph{Step one} We first show that one can upper bound $\langle g_t, H(0)g_t\rangle$ using the singular values of $\mathcal{T}_t$, $K$ and $L(t)$.
The following lemma characterizes it formally.
\begin{lem}[Upper bound on $\langle g_t, H(0)g_t\rangle$]\label{app_lem:bound_tau=0}
We have the following upper bound 
\begin{align}
    \langle g_t, H(0)g_t\rangle \leq K\sigma_{\max}^2(\mathcal{T}_t) + \sqrt{2KL(t)}\,.
\end{align}
\end{lem}
The proof of Lemma~\ref{app_lem:bound_tau=0} is presented at the end of this section. 

\myparagraph{Step two}
Then, for any $\tau\in[0, 1)$, we can show $|\langle g_t, \bigl(H(0)-H(\tau)\bigl)g_t\rangle|$ is bounded, which leads to an upper bound on $\langle g_t, H(\tau)g_t\rangle$. The following lemma characterizes the upper bound on $\langle g_t, H(\tau)g_t\rangle$.
\begin{lem}[Uniform upper bound on $\langle g_t, H(\tau)g_t\rangle$]\label{app_lem:bound_tau}
For any $\tau\in[0, 1)$, we have 
%
\begin{align}
    |\langle g_t, H(\tau)g_t\rangle| \!\leq\! K\sigma_{\max}^2(\mathcal{T}_t) \!+\! \sqrt{2KL(t)} \!+\! 6 K^2\sigma_{\max}(W(t))L(t)\eta_t^2 \!+\! 3K\sigma_{\max}^2(\mathcal{T}_t)\sqrt{2KL(t)}\eta_t:=K_t\,.\nonumber
\end{align}
\end{lem}
The proof of Lemma~\ref{app_lem:bound_tau} is presented at the end of this section. 

Based on \eqref{app_eqn:taylor_remainder} and Lemma~\ref{app_lem:bound_tau}, one can derive Descent lemma
\begin{align}
    L(t\!+\!1) 
    &= L(t)-\eta_t \|\nabla L(t)\|_F^2 + \eta_t^2\|\nabla L(t)\|_F^2\int_{0}^1(1-\tau)\langle g_t, H(\tau)g_t\rangle d\tau \quad && \text{\Eqref{app_eqn:taylor_remainder}}\nonumber\\
    &\leq L(t)-\eta_t \|\nabla L(t)\|_F^2 + \eta_t^2\|\nabla L(t)\|_F^2\int_{0}^1(1-\tau)\max_{\tau}|\langle g_t, H(\tau)g_t\rangle| d\tau \nonumber \\
    &\leq L(t)-\eta_t \|\nabla L(t)\|_F^2+ \eta_t^2\|\nabla L(t)\|_F^2\int_{0}^1(1-\tau)K_t d\tau \quad && \text{Lemma~\ref{app_lem:bound_tau}}\nonumber \\
    &=L(t)-\eta_t \|\nabla L(t)\|_F^2+ \frac{\eta_t^2K_t}{2}\|\nabla L(t)\|_F^2\nonumber \\
    &=L(t)-(\eta_t-\frac{\eta_t^2K_t}{2})\|\nabla L(t)\|_F^2\,.
\end{align}
Therefore, Descent lemma is proved.
\end{proof}
Now we present the proof of Lemma~\ref{app_lem:bound_tau=0} and Lemma~\ref{app_lem:bound_tau}. We first define the following quantity which will be used in the proof
\begin{align}\label{app_eqn:M_A}
    M(s) &= L(W_1(t)-s\eta_t\nabla_{W_1}L(t), W_2(t)-s\eta_t\nabla_{W_2}L(t))\,,\\
    A(s)&=W(t)\!-\!s\eta_t\bigl(\nabla_{W_2} L(t)W_2(t)^\top\!+\!W_1(t)\nabla_{W_1}L(t)^\top\bigl)+s^2\eta_t^2\nabla_{W_1} L(t)\nabla_{W_2} L(t)^\top\,,
\end{align}
where $A(s)$ is the product of $W_1(t)\!-\!s\eta_t\nabla_{W_1}L(t)$ and $W_2(t)\!-\!s\eta_t\nabla_{W_2}L(t)$. Moreover, we have $M(0)\!=\!L(t)$, $M(\eta_t)\!=\! L(t\!+\!1)$ and $M(s)\!=\!\ell\bigl(A(s)\bigl)$.

Then, we present several lemmas that will be used in the proof of Lemma~\ref{app_lem:bound_tau=0} and Lemma~\ref{app_lem:bound_tau}.
\begin{lem}\label{app_lem_base1}
Given $W_1(t) \in \R^{n\times h}, W_2(t)\in\R^{m\times h}$ at $t$-th iteration, the following holds
\begin{align}
    2\|\nabla_{W_1} L(t)\nabla_{W_2} L(t)^\top\|_F &\leq \|\nabla_{W_1} L(t)\|_F^2+\|\nabla_{W_2} L(t)\|_F^2\label{app_eqn:w1w2_bound1}\\
    \|\nabla_{W_1} L(t)\nabla_{W_2} L(t)^\top\|_F &\leq 2K\sigma_{\max}(W(t))L(t)\label{app_eqn:w1w2_bound2}\,.
\end{align}
\end{lem}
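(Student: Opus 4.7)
The plan is to handle the two inequalities separately, since each uses a different elementary fact.

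For \eqref{app_eqn:w1w2_bound1}, I would simply invoke the second inequality of Lemma~\ref{lem:base}, namely $2\|AB\|_F \leq \|A\|_F^2 + \|B\|_F^2$, applied to $A = \nabla_{W_1}L(t)$ and $B = \nabla_{W_2}L(t)^\top$. Since $\|\nabla_{W_2}L(t)^\top\|_F = \|\nabla_{W_2}L(t)\|_F$, this yields the bound immediately. This is a one-line argument with no real content.

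For \eqref{app_eqn:w1w2_bound2}, the plan is to first rewrite the product using the expression for the gradient in \eqref{eqn:gradient_over}. Specifically, $\nabla_{W_1}L(t) = \nabla\ell(W(t))W_2(t)$ and $\nabla_{W_2}L(t) = \nabla\ell(W(t))^\top W_1(t)$, so
\begin{align}
\nabla_{W_1}L(t)\,\nabla_{W_2}L(t)^\top
= \nabla\ell(W(t))\,W_2(t)W_1(t)^\top\,\nabla\ell(W(t))
= \nabla\ell(W(t))\,W(t)^\top\,\nabla\ell(W(t)),\nonumber
\end{align}
using $W(t) = W_1(t)W_2(t)^\top$. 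Then I would apply the standard submultiplicative bounds $\|AB\|_F \leq \|A\|_F\|B\|_2$ and $\|AB\|_F \leq \|A\|_2\|B\|_F$ to obtain
\begin{align}
\|\nabla\ell(W(t))\,W(t)^\top\,\nabla\ell(W(t))\|_F
\leq \sigma_{\max}(W(t))\,\|\nabla\ell(W(t))\|_F^2.\nonumber
\end{align}

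Finally, I would use the consequence of Assumption~\ref{assump_non_strong} and Assumption~\ref{assump_loss} that a $K$-smooth function with minimum value zero satisfies $\|\nabla\ell(W)\|_F^2 \leq 2K\ell(W)$; combined with $\ell(W(t)) = L(t)$, this gives the desired bound $2K\sigma_{\max}(W(t))L(t)$. The only place where anything subtle happens is in invoking the smoothness-to-gradient-norm bound, but this is a textbook consequence of \eqref{eqn:smooth} together with $\ell^*=0$, so I do not anticipate any real obstacle; the lemma is essentially a bookkeeping step used to simplify later Hessian estimates.
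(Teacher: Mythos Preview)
Your proposal is correct and follows essentially the same approach as the paper: for \eqref{app_eqn:w1w2_bound1} you both invoke the inequality $2\|AB\|_F \leq \|A\|_F^2 + \|B\|_F^2$ from Lemma~\ref{lem:base}, and for \eqref{app_eqn:w1w2_bound2} you both expand the product via \eqref{eqn:gradient_over} as $\nabla\ell(W(t))W(t)^\top\nabla\ell(W(t))$, bound by $\sigma_{\max}(W(t))\|\nabla\ell(W(t))\|_F^2$, and then use $\|\nabla\ell\|_F^2 \leq 2K\ell$ from smoothness with $\ell^*=0$. The only difference is cosmetic: you are slightly more explicit about the transpose in the first inequality and about why the gradient-norm bound follows from Assumptions~\ref{assump_non_strong}--\ref{assump_loss}.
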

\begin{proof}
Based on Lemma~\ref{lem:base}, one has $2\|AB\|_F\!\leq\! \|A\|_F^2\!+\!\|B\|_F^2$. Thus, let $A=\nabla_{W_1} L(t), B=\nabla_{W_2} L(t)$, and we complete the proof of \eqref{app_eqn:w1w2_bound1}.

For \eqref{app_eqn:w1w2_bound2}, one has the following
\begin{align}
    \|\nabla_{W_1} L(t)\nabla_{W_2} L(t)^\top\|_F &= \|\nabla\ell(t)W(t)^\top\nabla\ell(t)^\top\|_F\nonumber\\
    &\leq \sigma_{\max}(W(t)) \|\nabla\ell(t)\|_F^2&&\text{\eqref{app_eqn:frob_operator_norm} in Lemma~\ref{lem:base}} \nonumber\\
    &\leq 2K\sigma_{\max}(W(t))L(t)&& \text{$K$-smooth of $\ell$}\,,
\end{align}
which completes the proof.
\end{proof}
\begin{lem}\label{app_lem_base2}
Given $W_1(t) \in \R^{n\times h}, W_2(t)\in\R^{m\times h}$ at $t$-th iteration, the following holds
\begin{align}
    \|\nabla_{W_2} L(t)W_2(t)^\top\!+\!W_1(t)\nabla_{W_1}L(t)^\top\|_F \leq \sigma_{\max}^2(\mathcal{T}_t)\sqrt{2KL(t)}\,.
\end{align}
\end{lem}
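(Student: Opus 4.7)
The plan is to recognize the matrix inside the Frobenius norm as the composite operator $\mathcal{T}_t^*\circ\mathcal{T}_t$ applied to $\nabla\ell(t)$, and then combine the spectral bound on this self-adjoint PSD operator from Lemma~\ref{lem:base_tau_operator} with the standard co-coercivity consequence of $K$-smoothness of $\ell$.

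First I would substitute $\nabla_{W_1} L(t) = \nabla\ell(t)\,W_2(t)$ and $\nabla_{W_2} L(t) = \nabla\ell(t)^\top W_1(t)$ from \eqref{eqn:gradient_over}. The two summands inside the norm then become $\nabla\ell(t)\,W_2(t)W_2(t)^\top$ and $W_1(t)W_1(t)^\top\nabla\ell(t)$, so that the argument of $\|\cdot\|_F$ is exactly
\begin{align*}
\nabla\ell(t)\,W_2(t)W_2(t)^\top + W_1(t)W_1(t)^\top\,\nabla\ell(t)
\;=\; \mathcal{T}_t^*\circ\mathcal{T}_t\bigl(\nabla\ell(t)\bigr),
\end{align*}
where the last equality is the explicit form of $\mathcal{T}^*\circ\mathcal{T}$ derived at the beginning of the proof of Lemma~\ref{lem:base_tau_operator}.

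Next, since $\mathcal{T}_t^*\circ\mathcal{T}_t$ is self-adjoint and PSD with largest eigenvalue $\sigma_{\max}^2(\mathcal{T}_t)$ (Lemma~\ref{lem:base_tau_operator}), I would apply the operator-norm bound to get $\bigl\|\mathcal{T}_t^*\circ\mathcal{T}_t(\nabla\ell(t))\bigr\|_F \le \sigma_{\max}^2(\mathcal{T}_t)\,\|\nabla\ell(t)\|_F$. Finally, $K$-smoothness of $\ell$ (Assumption~\ref{assump_non_strong}) together with $\min_W\ell(W) = 0$ (Assumption~\ref{assump_loss}) yields the standard co-coercivity inequality $\tfrac{1}{2K}\|\nabla\ell(t)\|_F^2 \le \ell(W(t)) = L(t)$, hence $\|\nabla\ell(t)\|_F \le \sqrt{2KL(t)}$. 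Chaining the two bounds gives the claimed $\sigma_{\max}^2(\mathcal{T}_t)\sqrt{2KL(t)}$.

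There is no real obstacle: the argument is essentially a one-liner once the $\mathcal{T}_t^*\circ\mathcal{T}_t$ structure is recognized. The only point requiring care is a minor typographic reconciliation: as written, the two summands $\nabla_{W_2}L(t)W_2(t)^\top$ and $W_1(t)\nabla_{W_1}L(t)^\top$ have mismatched shapes ($m\times m$ versus $n\times n$), whereas the expression that actually appears in the expansion of $A(s) = \bigl(W_1(t)-s\eta_t\nabla_{W_1}L(t)\bigr)\bigl(W_2(t)-s\eta_t\nabla_{W_2}L(t)\bigr)^\top$ and that yields a matrix in $\R^{n\times m}$ is $\nabla_{W_1}L(t)\,W_2(t)^\top + W_1(t)\,\nabla_{W_2}L(t)^\top$. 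The proof above targets this corrected expression, which after substitution is precisely $\mathcal{T}_t^*\circ\mathcal{T}_t(\nabla\ell(t))$.
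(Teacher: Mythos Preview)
Your proof is correct and follows essentially the same route as the paper: substitute the gradient expressions, bound spectrally to extract $\sigma_{\max}^2(\mathcal{T}_t)$, and finish with the $K$-smoothness inequality $\|\nabla\ell(t)\|_F\le\sqrt{2KL(t)}$. The only minor difference is that you recognize the expression directly as $\mathcal{T}_t^*\circ\mathcal{T}_t(\nabla\ell(t))$ and apply a single operator-norm bound, whereas the paper first splits via the triangle inequality and bounds $\|\nabla\ell(t)W_2W_2^\top\|_F$ and $\|W_1W_1^\top\nabla\ell(t)\|_F$ separately before summing; both yield the identical constant, and your remark about the swapped subscripts in the lemma statement is also correct.
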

\begin{proof}
We prove this lemma using the results from Lemma~\ref{lem:base} and Lemma~\ref{lem:base_tau_operator}
\begin{align}
    &\|\nabla_{W_2} L(t)W_2(t)^\top\!+\!W_1(t)\nabla_{W_1}L(t)^\top\|_F\nonumber\\
    \leq& \|\nabla_{W_2} L(t)W_2(t)^\top\|_F\!+\!\|W_1(t)\nabla_{W_1}L(t)^\top\|_F && \text{Property of norm}\nonumber\\
    =&\|\nabla\ell(t)W_2 W_2(t)^\top\|_F\!+\!\|W_1(t)W_1(t)^\top\nabla\ell(t)^\top\|_F && \text{See definition of $\mathcal{T}_t$}\nonumber\\
    \leq& \sigma_{\max}^2(W_2(t))\|\nabla\ell(t)\|_F^2+\sigma_{\max}^2(W_1(t))\|\nabla\ell(t)\|_F^2&&\text{\eqref{app_eqn:frob_operator_norm} in Lemma~\ref{lem:base}}\nonumber\\
    =&\sigma_{\max}^2(\mathcal{T}_t)\|\nabla\ell(t)\|_F^2 && \text{Lemma~\ref{lem:base_tau_operator}}\nonumber\\
    \leq&\sigma_{\max}^2(\mathcal{T}_t)\sqrt{2KL(t)} && \text{$K$-smooth of $\ell$}\,.
\end{align}
\end{proof}
\begin{lem}\label{app_lem_base3}
Given $W_1(t) \in \R^{n\times h}, W_2(t)\in\R^{m\times h}$ at $t$-th iteration, for any $s\in(0, 1]$, the following holds
\begin{align}
    \|\nabla\ell\bigl(A(s)\bigl)-\nabla\ell\bigl(A(0)\|_F\leq \eta_tK\sqrt{2KL(t)}\sigma_{\max}^2(\mathcal{T}_t)+2\eta_t^2 K^2\sigma_{\max}(W(t))L(t)
\end{align}
\end{lem}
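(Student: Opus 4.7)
The plan is to reduce the bound on the gradient difference to a bound on $\|A(s)-A(0)\|_F$ via the $K$-smoothness of $\ell$, and then control the latter by expanding $A(s)-A(0)$ into its linear and quadratic (in $s\eta_t$) parts and applying the preceding lemmas to each piece.

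First, using Assumption~\ref{assump_non_strong} ($\ell$ is $K$-smooth), I would write
\begin{align}
\|\nabla\ell(A(s)) - \nabla\ell(A(0))\|_F \;\leq\; K\,\|A(s) - A(0)\|_F.
\end{align}
Next, from the definition of $A(s)$, the difference telescopes cleanly:
\begin{align}
A(s) - A(0) \;=\; -s\eta_t\bigl(\nabla_{W_2} L(t)\,W_2(t)^\top + W_1(t)\,\nabla_{W_1} L(t)^\top\bigr) + s^2\eta_t^2\,\nabla_{W_1} L(t)\,\nabla_{W_2} L(t)^\top.
\end{align}
A triangle-inequality step, combined with $s\in(0,1]$ so that $s\leq 1$ and $s^2\leq 1$, then yields
\begin{align}
\|A(s)-A(0)\|_F \;\leq\; \eta_t\,\bigl\|\nabla_{W_2} L(t)\,W_2(t)^\top + W_1(t)\,\nabla_{W_1} L(t)^\top\bigr\|_F + \eta_t^2\,\bigl\|\nabla_{W_1} L(t)\,\nabla_{W_2} L(t)^\top\bigr\|_F.
\end{align}

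At this point the two preparatory lemmas do all the work: Lemma~\ref{app_lem_base2} bounds the first term by $\sigma_{\max}^2(\mathcal{T}_t)\sqrt{2KL(t)}$, and the second inequality of Lemma~\ref{app_lem_base1} bounds the second term by $2K\,\sigma_{\max}(W(t))\,L(t)$. Substituting these in and multiplying through by $K$ produces exactly
\begin{align}
\|\nabla\ell(A(s))-\nabla\ell(A(0))\|_F \;\leq\; \eta_t K\sqrt{2KL(t)}\,\sigma_{\max}^2(\mathcal{T}_t) + 2\eta_t^2 K^2\,\sigma_{\max}(W(t))\,L(t),
\end{align}
which is the claimed bound.

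There is essentially no hard step here; this is a routine assembly of the two preceding lemmas after invoking smoothness. The only point that needs a bit of care is verifying that the linear and quadratic pieces of $A(s)-A(0)$ line up with the precise expressions controlled by Lemmas~\ref{app_lem_base1} and \ref{app_lem_base2} (in particular, that the cross-term $\nabla_{W_2} L(t) W_2(t)^\top + W_1(t)\nabla_{W_1} L(t)^\top$ is exactly the object bounded in Lemma~\ref{app_lem_base2}, which one confirms from the definition of $\mathcal{T}_t$ in \eqref{eqn:gradient_over}), and that the factor $s\leq 1$ is used uniformly to drop the $s$-dependence in favor of the stated bound.
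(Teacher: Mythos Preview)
Your proof is correct and follows essentially the same approach as the paper: invoke $K$-smoothness to reduce to bounding $\|A(s)-A(0)\|_F$, expand $A(s)-A(0)$, apply the triangle inequality together with $s\in(0,1]$, and then plug in Lemma~\ref{app_lem_base2} and the second inequality of Lemma~\ref{app_lem_base1}. There is no meaningful difference between your argument and the paper's.
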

\begin{proof}
Based on Lemma~\ref{lem:base} and the assumption that $\ell$ is $K$-smooth, one has the following 
\begin{align}
    &\|\nabla\ell\bigl(A(s)\bigl)-\nabla\ell\bigl(A(0)\|_F \nonumber\\
    \leq& K\|A(s)-A(0)\|_F\nonumber\\
    =&K\|\!-\!s\eta_t\bigl(\nabla_{W_2} L(t)W_2(t)^\top\!+\!W_1(t)\nabla_{W_1}L(t)^\top\bigl)+s^2\eta_t^2\nabla_{W_1} L(t)\nabla_{W_2} L(t)^\top\|_F\nonumber\\
    \leq & s\eta_tK\|\nabla_{W_2} L(t)W_2(t)^\top\!+\!W_1(t)\nabla_{W_1}L(t)^\top\|_F +s^2\eta_t^2 K\|\nabla_{W_1} L(t)\nabla_{W_2} L(t)^\top\|_F \nonumber\\
    \leq & \eta_tK\|\nabla_{W_2} L(t)W_2(t)^\top\!+\!W_1(t)\nabla_{W_1}L(t)^\top\|_F +\eta_t^2 K\|\nabla_{W_1} L(t)\nabla_{W_2} L(t)^\top\|_F\,,
\end{align}
where the last line is due to the fact that $s\in(0,1]$.

Then, based on Lemma~\ref{app_lem_base1} and Lemma~\ref{app_lem_base2}, one has the following,
\begin{align}
    &\|\nabla\ell\bigl(A(s)\bigl)-\nabla\ell\bigl(A(0)\|_F \nonumber\\
    \leq& K\|A(s)-A(0)\|_F\nonumber\\
    \leq & \eta_tK\|\nabla_{W_2} L(t)W_2(t)^\top\!+\!W_1(t)\nabla_{W_1}L(t)^\top\|_F +\eta_t^2 K\|\nabla_{W_1} L(t)\nabla_{W_2} L(t)^\top\|_F\nonumber\\
    \leq&\eta_t K\sigma_{\max}^2(\mathcal{T}_t)\sqrt{2KL(t)}+\eta_t^2 K\cdot 2K\sigma_{\max}(W(t))L(t)\,,
\end{align}
which completes the proof.
\end{proof}
\begin{mylem}{D.1}[Upper bound on $\langle g_t, H(0)g_t\rangle$]
We have the following upper bound 
\begin{align}
    \langle g_t, H(0)g_t\rangle \leq K\sigma_{\max}^2(\mathcal{T}_t) + \sqrt{2KL(t)}\,.
\end{align}
\end{mylem}
\begin{proof}
First, we notice $\langle g_t, H(0)g_t\rangle$ is the second-order directional derivative of $L(t)$ w.r.t. the gradient direction, 
\begin{align}
    \langle g_t, H(0)g_t\rangle = \frac{1}{\|\nabla L(t)\|_F^2}\cdot\left.\frac{d^2}{ds^2}M(s)\right\vert_{s=0}\,.
\end{align}
Moreover, we can compute $\left.\frac{d^2}{ds^2}M(s)\right\vert_{s=0}$ as follows
\begin{align}\label{app_eqn:M_second_order_tau=0}
    &\left.\frac{d^2}{ds^2}M(s)\right\vert_{s=0} \nonumber\\
    =& \left.\frac{d^2}{ds^2} L\biggl(\bigl(W_1(t)\!-\!s\nabla_{W_1}L(t)\bigl)\bigl(W_2(t)\!-\!s\nabla_{W_2}L(t)\bigl)^\top\biggl)\right\vert_{s=0} \nonumber\\
    =&\left.\frac{d^2}{ds^2}\ell\bigl(A(s)\bigl)\right\vert_{s=0} &&\text{Definition of $A(s)$}\nonumber\\
    =&\left.\frac{d}{ds}\bigl\langle \nabla\ell\bigl(A(s)\bigl), \frac{d}{ds} A(s)\bigl\rangle\right\vert_{s=0} \nonumber\\
    =& \left.\bigl\langle \nabla\ell\bigl(A(s)\bigl), \frac{d^2}{ds^2} A(s)\bigl\rangle + \bigl\langle \frac{d}{ds} A(s), \nabla^2\ell(A(s))\frac{d}{ds} A(s)\bigl\rangle\right\vert_{s=0}\,.
\end{align}
Under the assumption that $\ell$ is $K$-smooth and Lemma~\ref{lem:base}, one can derive the following upper bound on $\left.\frac{d^2}{ds^2}M(s)\right\vert_{s=0}$
\begin{align}
    &\left.\frac{d^2}{ds^2}M(s)\right\vert_{s=0} \nonumber\\
    =& \left.\bigl\langle \nabla\ell\bigl(A(s)\bigl), \frac{d^2}{ds^2} A(s)\bigl\rangle + \bigl\langle \frac{d}{ds} A(s), \nabla^2\ell(A(s))\frac{d}{ds} A(s)\bigl\rangle\right\vert_{s=0}\nonumber\\
    \leq& \left.\langle \nabla\ell\bigl(A(s)\bigl), \frac{d^2}{ds^2} A(s)\bigl\rangle+ K\|\frac{d}{ds} A(s)\|_F^2\right\vert_{s=0} && \text{$\ell$ is $K$-smooth}\nonumber\\
    =&2\langle \nabla\ell(t), \nabla_{W_1} L(t)\nabla_{W_2} L(t)^\top\bigl\rangle\!+\! K\|\nabla_{W_2} L(t)W_2(t)^\top\!+\!W_1(t)\nabla_{W_1}L(t)^\top\|_F^2\nonumber\\
    \leq& 2\|\nabla\ell(t)\|_F\cdot \|\nabla_{W_1} L(t)\nabla_{W_2} L(t)^\top\|_F\nonumber\\
    &+K[\sigma_{\max}^2(W_1(t))+\sigma_{\max}^2(W_2(t))]\cdot[\|\nabla_{W_1} L(t)\|_F^2+\|\nabla_{W_2} L(t)\|_F^2] && \text{Lemma~\ref{lem:base}}\nonumber\\
    \leq&\|\nabla\ell(t)\|_F\cdot [\|\nabla_{W_1} L(t)\|_F^2+\|\nabla_{W_2} L(t)\|_F^2]\nonumber\\
    &+K[\sigma_{\max}^2(W_1(t))+\sigma_{\max}^2(W_2(t))]\cdot[\|\nabla_{W_1} L(t)\|_F^2+\|\nabla_{W_2} L(t)\|_F^2] && \text{Lemma~\ref{lem:base}} \nonumber \\
    \leq&\sqrt{2KL(t)}\cdot [\|\nabla_{W_1} L(t)\|_F^2+\|\nabla_{W_2} L(t)\|_F^2]\nonumber&& \text{K-smooth of $\ell$}\\   &+K[\sigma_{\max}^2(W_1(t))+\sigma_{\max}^2(W_2(t))]\cdot[\|\nabla_{W_1} L(t)\|_F^2+\|\nabla_{W_2} L(t)\|_F^2]\label{app_eqn:upper_bound_M}\,.
\end{align}
Finally, we derive the the upper bound on $\langle g_t, H(0)g_t\rangle$ based on \eqref{app_eqn:upper_bound_M}
\begin{align}
    \langle g_t, H(0)g_t\rangle =& \frac{1}{\|\nabla L(t)\|_F^2}\cdot\left.\frac{d^2}{ds^2}M(s)\right\vert_{s=0}\nonumber\\
    \leq& \frac{[\|\nabla_{W_1} L(t)\|_F^2+\|\nabla_{W_2} L(t)\|_F^2]\cdot\bigl(\sqrt{2KL(t)}+\sigma_{\max}^2(W_1(t))+\sigma_{\max}^2(W_2(t))\bigl)}{\|\nabla_{W_1} L(t)\|_F^2+\|\nabla_{W_2} L(t)\|_F^2} \nonumber \\
    =& \sqrt{2KL(t)}+\sigma_{\max}^2(W_1(t))+\sigma_{\max}^2(W_2(t))\nonumber\\
    =&\sqrt{2KL(t)}+\sigma_{\max}^2(\mathcal{T}_t)\,,
\end{align}
where the last line is based on Lemma~\ref{lem:base_tau_operator}.
\end{proof}

\begin{mylem}{D.2}[Upper bound on $\langle g_t, H(\tau)g_t\rangle$]
For any $\tau\in[0, 1)$, we have 
%
\begin{align}
    \langle g_t, H(\tau)g_t\rangle\leq K_t\,,
\end{align}
where
\begin{align}
    K_t=K\sigma_{\max}^2(\mathcal{T}_t) \!+\! \sqrt{2KL(t)} \!+\! 6 K^2\sigma_{\max}(W(t))L(t)\eta_t^2 \!+\! 3K\sigma_{\max}^2(\mathcal{T}_t)\sqrt{2KL(t)}\eta_t\,.
\end{align}
\end{mylem}
\begin{proof}
First, we use the same method to compute $\langle g_t, H(\tau)g_t\rangle$  as it was done in Lemma~\ref{app_lem:bound_tau=0}
\begin{align}
    \langle g_t, H(\tau)g_t\rangle = \frac{1}{\|\nabla L(t)\|_F^2}\cdot\left.\frac{d^2}{ds^2}M(s\!+\!\tau)\right\vert_{s=0}\,.
\end{align}
Based on similar calculations in \eqref{app_eqn:M_second_order_tau=0}, one has
\begin{align}
    &\left.\frac{d^2}{ds^2}M(s\!+\!\tau)\right\vert_{s=0}\nonumber\\
    =& \left.\frac{d^2}{ds^2}\ell\bigl(A(s\!+\!\tau)\bigl)\right\vert_{s=0} \nonumber\\
    =&\left.\frac{d}{ds}\bigl\langle \nabla\ell\bigl(A(s\!+\!\tau)\bigl), \frac{d}{ds} A(s\!+\!\tau)\bigl\rangle\right\vert_{s=0} \nonumber\\
    =& \left.\bigl\langle \nabla\ell\bigl(A(s\!+\!\tau)\bigl), \frac{d^2}{ds^2} A(s\!+\!\tau)\bigl\rangle +\bigl\langle \frac{d}{ds} A(s\!+\!\tau), \nabla^2\ell(A(s\!+\!\tau))\frac{d}{ds} A(s\!+\!\tau)\bigl\rangle\right\vert_{s=0}\,.
\end{align}
Under the assumption that $\ell$ is $K$-smooth, Lemma~\ref{lem:base} and Lemma~\ref{app_lem:bound_tau=0}, one can show
\begin{align}
    &\left.\frac{d^2}{ds^2}M(s\!+\!\tau)\right\vert_{s=0}\nonumber\\
    =&\left.\bigl\langle \nabla\ell\bigl(A(s\!+\!\tau)\bigl), \frac{d^2}{ds^2} A(s\!+\!\tau)\bigl\rangle +\bigl\langle \frac{d}{ds} A(s\!+\!\tau), \nabla^2\ell(A(s\!+\!\tau))\frac{d}{ds} A(s\!+\!\tau)\bigl\rangle\right\vert_{s=0}\nonumber\\
    \leq& \left.\bigl\langle \nabla\ell\bigl(A(s\!+\!\tau)\bigl), \frac{d^2}{ds^2} A(s\!+\!\tau)\bigl\rangle +K\|\frac{d}{ds} A(s\!+\!\tau)\|_F^2\right\vert_{s=0}\nonumber\\
    =&2\bigl\langle \nabla\ell\bigl(A(\tau)\bigl), \nabla_{W_1} L(t)\nabla_{W_2} L(t)^\top\bigl\rangle \nonumber\\
    &+K\|\nabla_{W_2} L(t)W_2(t)^\top\!+\!W_1(t)\nabla_{W_1}L(t)^\top-2\tau\eta_t\nabla_{W_1} L(t)\nabla_{W_2} L(t)^\top\|_F^2\nonumber\\
    =&2\bigl\langle \nabla\ell\bigl(A(\tau)\bigl)-\nabla\ell\bigl(A(0)\bigl), \nabla_{W_1} L(t)\nabla_{W_2} L(t)^\top\bigl\rangle+2\bigl\langle \nabla\ell\bigl(A(0)\bigl), \nabla_{W_1} L(t)\nabla_{W_2} L(t)^\top\bigl\rangle \nonumber\\
    &+K\|\nabla_{W_2} L(t)W_2(t)^\top\!+\!W_1(t)\nabla_{W_1}L(t)^\top\|_F^2+4\tau^2\eta_t^2 K\|\nabla_{W_1} L(t)\nabla_{W_2} L(t)^\top\|_F^2\nonumber\\
    &-4\tau K\eta_t\bigl\langle\nabla_{W_2} L(t)W_2(t)^\top\!+\!W_1(t)\nabla_{W_1}L(t)^\top, \nabla_{W_1} L(t)\nabla_{W_2} L(t)^\top\bigl\rangle\nonumber\\
    \leq&2\bigl\langle \nabla\ell\bigl(A(0)\bigl), \nabla_{W_1} L(t)\nabla_{W_2} L(t)^\top\bigl\rangle + K\|\nabla_{W_2} L(t)W_2(t)^\top\!+\!W_1(t)\nabla_{W_1}L(t)^\top\|_F^2 \nonumber\\
    &+2\|\nabla\ell\bigl(A(\tau)\bigl)-\nabla\ell\bigl(A(0)\|_F\cdot\|\nabla_{W_1} L(t)\nabla_{W_2} L(t)^\top\|_F\nonumber\\
    &+4\tau^2\eta_t^2K\|\nabla_{W_1} L(t)\nabla_{W_2} L(t)^\top\|_F^2\nonumber\\
    &+4\tau\eta_t K\|\nabla_{W_2} L(t)W_2(t)^\top\!+\!W_1(t)\nabla_{W_1}L(t)^\top\|_F\cdot\|\nabla_{W_1} L(t)\nabla_{W_2} L(t)^\top\|_F\nonumber\\
    \leq&2\bigl\langle \nabla\ell\bigl(A(0)\bigl), \nabla_{W_1} L(t)\nabla_{W_2} L(t)^\top\bigl\rangle + K\|\nabla_{W_2} L(t)W_2(t)^\top\!+\!W_1(t)\nabla_{W_1}L(t)^\top\|_F^2 \nonumber\\
    &+2\|\nabla\ell\bigl(A(\tau)\bigl)-\nabla\ell\bigl(A(0)\|_F\cdot\|\nabla_{W_1} L(t)\nabla_{W_2} L(t)^\top\|_F\nonumber\\
    &+4\eta_t^2K\|\nabla_{W_1} L(t)\nabla_{W_2} L(t)^\top\|_F^2\nonumber\\
    &+4\eta_t K\|\nabla_{W_2} L(t)W_2(t)^\top\!+\!W_1(t)\nabla_{W_1}L(t)^\top\|_F\cdot\|\nabla_{W_1} L(t)\nabla_{W_2} L(t)^\top\|_F   
    \,,\label{app_eqn:M_tau_intermediate}
\end{align}
where the last line is derived based on the fact that $\tau\in(0,1]$.

Notice in \eqref{app_eqn:upper_bound_M}, we have shown
\begin{align}
    2\bigl\langle \nabla\ell\bigl(A(0)\bigl), \nabla_{W_1} L(t)\nabla_{W_2} L(t)^\top\bigl\rangle + K\|\nabla_{W_2} L(t)W_2(t)^\top\!+\!W_1(t)\nabla_{W_1}L(t)^\top\|_F^2 \nonumber\\
    \leq[\|\nabla_{W_1} L(t)\|_F^2+\|\nabla_{W_2} L(t)\|_F^2]\cdot\bigl(\sqrt{2KL(t)}+\sigma_{\max}^2(\mathcal{T}_t)\bigl)\,.
\end{align}
Moreover, in Lemma~\ref{app_lem_base1}, Lemma~\ref{app_lem_base2} and Lemma~\ref{app_lem_base3}, we have shown
\begin{align}
    2\|\nabla_{W_1} L(t)\nabla_{W_2} L(t)^\top\|_F &\leq \|\nabla_{W_1} L(t)\|_F^2+\|\nabla_{W_2} L(t)\|_F^2\nonumber \\
    \|\nabla_{W_1} L(t)\nabla_{W_2} L(t)^\top\|_F &\leq 2K\sigma_{\max}(W(t))L(t)\\
    \|\nabla_{W_2} L(t)W_2(t)^\top\!+\!W_1(t)\nabla_{W_1}L(t)^\top\|_F &\leq \sigma_{\max}^2(\mathcal{T}_t)\sqrt{2KL(t)}\nonumber\\
    \|\nabla\ell\bigl(A(s)\bigl)-\nabla\ell\bigl(A(0)\|_F&\leq \eta_tK\sqrt{2KL(t)}\sigma_{\max}^2(\mathcal{T}_t)+2\eta_t^2 K^2\sigma_{\max}(W(t))L(t)\,.
\end{align}
Thus, one can further upper bound \eqref{app_eqn:M_tau_intermediate} as follows
\begin{align}
    &\left.\frac{d^2}{ds^2}M(s\!+\!\tau)\right\vert_{s=0}\nonumber\\
    \leq&2\bigl\langle \nabla\ell\bigl(A(0)\bigl), \nabla_{W_1} L(t)\nabla_{W_2} L(t)^\top\bigl\rangle + K\|\nabla_{W_2} L(t)W_2(t)^\top\!+\!W_1(t)\nabla_{W_1}L(t)^\top\|_F^2 \nonumber\\
    &+2\|\nabla\ell\bigl(A(\tau)\bigl)-\nabla\ell\bigl(A(0)\|_F\cdot\|\nabla_{W_1} L(t)\nabla_{W_2} L(t)^\top\|_F\nonumber\\
    &+4\eta_t^2K\|\nabla_{W_1} L(t)\nabla_{W_2} L(t)^\top\|_F^2\nonumber\\
    &+4\eta_t K\|\nabla_{W_2} L(t)W_2(t)^\top\!+\!W_1(t)\nabla_{W_1}L(t)^\top\|_F\cdot\|\nabla_{W_1} L(t)\nabla_{W_2} L(t)^\top\|_F \nonumber\\
    \leq&[\|\nabla_{W_1} L(t)\|_F^2+\|\nabla_{W_2} L(t)\|_F^2]\cdot\bigl(\sqrt{2KL(t)}+\sigma_{\max}^2(\mathcal{T}_t)\bigl)\nonumber\\
    &+\biggl(\eta_tK\sqrt{2KL(t)}\sigma_{\max}^2(\mathcal{T}_t)+2\eta_t^2 K^2\sigma_{\max}(W(t))L(t)\biggl)\cdot[\|\nabla_{W_1} L(t)\|_F^2+\|\nabla_{W_2} L(t)\|_F^2]\nonumber\\
    &+4\eta_t^2K^2\sigma_{\max}(W(t))L(t)\cdot[\|\nabla_{W_1} L(t)\|_F^2+\|\nabla_{W_2} L(t)\|_F^2]\nonumber\\
    &+2\eta_t K\sigma_{\max}^2(\mathcal{T}_t)\sqrt{2KL(t)}\cdot[\|\nabla_{W_1} L(t)\|_F^2+\|\nabla_{W_2} L(t)\|_F^2]\,.
\end{align}
As a result, we can show
\begin{align}
    \langle g_t, H(\tau)g_t\rangle &= \frac{1}{\|\nabla L(t)\|_F^2}\cdot\left.\frac{d^2}{ds^2}M(s\!+\!\tau)\right\vert_{s=0}\nonumber\\
    &\leq K\sigma_{\max}^2(\mathcal{T}_t) \!+\! \sqrt{2KL(t)} \!+\! 6 K^2\sigma_{\max}(W(t))L(t)\eta_t^2 \!+\! 3K\sigma_{\max}^2(\mathcal{T}_t)\sqrt{2KL(t)}\eta_t\,.\nonumber
\end{align}

\end{proof}

\section{Proof of Theorem~\ref{thm:linear_convergence}}\label{app:convergence}
In this section, we first introduce the generalized form of Theorem~\ref{thm:linear_convergence}. Then, we provide a detailed proof.

\begin{thm}[Linear convergence of GD for ~\ref{eqn:obj_over}]\label{app_thm:linear_convergence}
Assume the GD algorithm \eqref{eqn:gd_over} is initialized such that $\alpha_1>0$. Pick any $0<c<1, d>1$. Let $\eta_0^{(1)}$ be the unique positive solution of the following equation
\begin{align}\label{app_eqn:eta_equation1}
    \eta_0\bigl(\sqrt{2KL(0)}\! +\! 6K^2\beta_2 L(0)\eta_0^2 \!+\! K\exp(\sqrt{\eta_0})\alpha_2\bigl[1\!+\!3\sqrt{2KL(0)}\eta_0\bigl]\bigl)=1\,,
\end{align}
and $\eta_0^{(2)}$ be the smallest positive solution of the following equation\footnote{In the case when \eqref{app_eqn:eta_equation2} does not have positive solution, we set $\eta_0^{(2)}=\infty$.}
\begin{align}\label{app_eqn:eta_equation2}
    4KL(0)\eta_0^2=(1-\exp(-\eta_0^c))\times(1-\Delta)\,.
\end{align}
Then, we define $\eta_{\max}=\min(\eta_0^{(1)}, \eta_0^{(2)}, \log\bigl(1+\frac{\alpha_1}{2\alpha_2}\bigl)^{\frac{1}{c}})$.
For any $\eta_0$ and $\eta_t$ such that $0<\eta_0\leq \eta_{\max}$  
and $\eta_t$ satisfies 
\begin{align}
    \eta_0\leq \eta_t \leq \min\bigl((1+\eta_0^d)^{\frac{t}{2}} \eta_0, \frac{1}{K_t}\bigl)\,,
\end{align}
one can derive the following linear convergence rate for GD
\begin{align}
    L(t\!+\!1) 
    \leq L(t) \bar \rho(\eta_t, t) \leq L(t) \bar \rho(\eta_0, 0) 
    \leq L(0) \bar\rho(\eta_0, 0)^{t+1}\,,
\end{align}
where
\begin{align}
    \bar\rho(\eta_t, t) &= 1-2\bar\mu\eta_t+\bar\mu\bar K_t\eta_t^2\nonumber\,,\quad \bar\mu=\mu \bigl[\alpha_1+2\alpha_2\bigl(1-\exp(\eta_0^c)\bigl)\bigl]\,,\quad \Delta = (1+\eta_0^d)\bar\rho(\eta_0, 0) \,,\nonumber\\
    \bar K_t &= \sqrt{2KL(0)\bar\rho(\eta_0, 0)^t}\! +\! 6K^2\beta_2 L(0)\eta_0^2\Delta^t \!+\! K\exp(\sqrt{\eta_0})\alpha_2\bigl[1\!+\!3\sqrt{2KL(0)\Delta^t}\eta_0\bigl]\,.\nonumber  
\end{align}
\end{thm}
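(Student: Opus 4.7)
The plan is to proceed by strong induction on the iteration index $t$, using the local PL inequality and Descent Lemma of Theorem~\ref{thm:local_inequality_upper_bound} as the per-iteration engine, and the structural Lemmas~\ref{lem:uniform_spectral_bounds} and~\ref{lem:induction} (anticipated in the proof sketch) as the uniform control that stitches successive iterations together. Concretely, I will assume $L(s+1)\le \bar\rho(\eta_0,0)L(s)$ for all $s\le k$ and deduce the same bound at $s=k+1$, from which the uniform linear contraction $L(t+1)\le L(0)\bar\rho(\eta_0,0)^{t+1}$ and the per-iterate refinement $L(t+1)\le L(t)\bar\rho(\eta_t,t)$ both follow.

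\textbf{Controlling the imbalance under a growing step size.} The first and most delicate step is to show that the imbalance $D(t)=W_1(t)^\top W_1(t)-W_2(t)^\top W_2(t)$ stays close to $D(0)$ even when $\eta_t$ is allowed to grow. A direct expansion of the update \eqref{eqn:gd_over} shows that the first-order terms in $D(s+1)-D(s)$ cancel by the structure of $\nabla L$, leaving an increment of order $\eta_s^2\|\nabla\ell(s)\|_F^2$, hence bounded by $\mathcal{O}(\eta_s^2 L(s))$ via $K$-smoothness of $\ell$. Under the inductive hypothesis $L(s)\le L(0)\bar\rho(\eta_0,0)^s$ and the step-size constraint $\eta_s\le (1+\eta_0^d)^{s/2}\eta_0$, one has
\begin{align}
    \eta_s^2 L(s)\le \eta_0^2 L(0)\Delta^s,\qquad \Delta=(1+\eta_0^d)\bar\rho(\eta_0,0).
\end{align}
The condition $\eta_0\le \eta_0^{(2)}$ is precisely what forces $\Delta<1$, so the telescoping sum $\|D(t)-D(0)\|_F$ is a convergent geometric series of order $\mathcal{O}(\eta_0^2 L(0)/(1-\Delta))$; together with $\eta_0\le \log(1+\alpha_1/(2\alpha_2))^{1/c}$ this is exactly the regime in which Lemma~\ref{lem:uniform_spectral_bounds} applies, yielding the uniform spectral bounds $\sigma_{\min}^2(\mathcal{T}_t)\ge \alpha_1+2\alpha_2(1-\exp(\eta_0^c))$ and $\sigma_{\max}^2(\mathcal{T}_t)\le \alpha_2\exp(\eta_0^c)$, as well as $\sigma_{\max}(W(t))\le \beta_2$ (the latter follows from $\mu$-strong convexity of $\ell$ via $\|W(t)-W^\star\|_F\le \sqrt{2L(t)/\mu}$ and the triangle inequality).

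\textbf{From spectral bounds to linear descent (the inductive step).} With these uniform bounds in hand, I substitute them into the expressions for $\mu_t,K_t$ from Theorem~\ref{thm:local_inequality_upper_bound}, together with $L(t)\le L(0)\bar\rho(\eta_0,0)^t$ and $\eta_t\le (1+\eta_0^d)^{t/2}\eta_0$, to obtain $\mu_{k+1}\ge \bar\mu$ and $K_{k+1}\le \bar K_{k+1}\le \bar K_0$. The choice $\eta_0\le \eta_0^{(1)}$ guarantees $\eta_0\bar K_0<1$, which in turn ensures the admissible window $[\eta_0,\min((1+\eta_0^d)^{(k+1)/2}\eta_0,1/K_{k+1})]$ is nonempty, so a valid $\eta_{k+1}$ exists. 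Applying Theorem~\ref{thm:local_inequality_upper_bound} then gives $L(k+2)\le L(k+1)\rho(\eta_{k+1},k+1)$, and monotonicity of $\bar K_t$ in $t$ together with $\mu_{k+1}\ge \bar\mu$ yields the chain $\rho(\eta_{k+1},k+1)\le \bar\rho(\eta_{k+1},k+1)\le \bar\rho(\eta_0,0)$, closing the induction.

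\textbf{Main obstacle and the role of $\eta_{\max}$.} The principal difficulty is the coupled control of three simultaneously evolving quantities: the step size $\eta_t$ (allowed to grow exponentially in $t$), the loss $L(t)$ (required to contract at rate $\bar\rho(\eta_0,0)$), and the imbalance $D(t)$ (required to stay near $D(0)$). In prior works the step size is constant or decreasing, so the analogous drift bound is immediate; here the growth of $\eta_t$ could a priori overwhelm the linear decay of $L(t)$, and the observation $\Delta<1$ is what rescues the argument. Each of the three quantities defining $\eta_{\max}$ plays a distinct, non-redundant role: $\eta_0^{(1)}$ keeps the admissible step-size interval nonempty at $t=0$, $\eta_0^{(2)}$ forces $\Delta<1$ so that the imbalance drift remains summable, and the $\log(1+\alpha_1/(2\alpha_2))^{1/c}$ factor ensures $\bar\mu>0$ and thus $\bar\rho(\eta_0,0)<1$. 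Keeping these three thresholds mutually consistent throughout the induction is where most of the bookkeeping will lie.
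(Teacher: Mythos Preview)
Your proposal is correct and follows essentially the same induction as the paper, which simultaneously maintains the linear contraction $L(t)\le L(0)\bar\rho(\eta_0,0)^t$, the spectral bounds on $W(t)$ and $\mathcal{T}_t$, and the geometric-series bound on $\|D(t)-D(0)\|_F$ coming from $\|D(s+1)-D(s)\|_F\le 2K\eta_s^2\sigma_{\max}^2(\mathcal{T}_s)L(s)$. One small correction to your threshold attributions: $\bar\rho(\eta_0,0)<1$ requires both $\bar\mu>0$ \emph{and} $\eta_0\bar K_0\le1$ (so both the $\log$ threshold and $\eta_0^{(1)}$, not the $\log$ threshold alone), while the role of $\eta_0^{(2)}$ is not merely to force $\Delta<1$ but to supply the quantitative inequality $4KL(0)\eta_0^2\le(1-\exp(-\eta_0^c))(1-\Delta)$, which is exactly what caps the total imbalance drift at the $\alpha_2(\exp(\eta_0^c)-1)$ budget needed to push the $\sigma_{\max}^2(\mathcal{T}_t)\le\alpha_2\exp(\eta_0^c)$ bound through.
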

Notice Theorem~\ref{thm:linear_convergence} in \S\ref{subsec:converg_over} can be viewed as a special case of Theorem~\ref{app_thm:linear_convergence} with $c=\frac{1}{2}, d=2$. 

Before presenting the proof Theorem~\ref{app_thm:linear_convergence}, we first show that the constraints on $\eta_0$ do not induce an empty set, or equivalently $\eta_{\max}>0$. Alongside, we provide several inequalities that are implied by the constraints on $\eta_0$, which the proof Theorem~\ref{app_thm:linear_convergence} is relied on.

\myparagraph{Existence of $\eta_0$ } To show the existence of $\eta_0$, it is equivalent to show that $\eta_{\max}>0$. First, since $\alpha_1, \alpha_2$ are positive, we have $\log\bigl(1+\frac{\alpha_1}{2\alpha_2}\bigl)^{\frac{1}{c}}>0$. Moreover, one can see when $\eta_0<\log\bigl(1+\frac{\alpha_1}{2\alpha_2}\bigl)^{\frac{1}{c}}$, we have $\bar\mu>0$. 

Second, we show $\eta_0^{(1)}>0$. The LHS of \eqref{app_eqn:eta_equation1} increases as $\eta_0$ increases, and it equals zero as $\eta_0=0$. Thus, there exists a unique positive solution of \eqref{app_eqn:eta_equation1}, which is equivalent to $\eta_0^{(1)}>0$. Notice
\begin{align}
    \bar K_0=\sqrt{2KL(0)}\! +\! 6K^2\beta_2 L(0)\eta_0^2 \!+\! K\exp(\sqrt{\eta_0})\alpha_2\bigl[1\!+\!3\sqrt{2KL(0)}\eta_0\bigl]\,.
\end{align}
Therefore, $0<\eta_0\leq \eta_0^{(1)}$ implies $0<\eta_0\bar K_0\leq 1$ which is equivalent to $\eta_0\leq\frac{1}{\bar K_0}$. This constraint further leads to $0<\bar\rho(\eta_0, 0)<1$ and $\Delta>0$. 

Finally, we show $\eta_0^{(2)}>0$. Notice when $\eta_0=0$, the RHS and LHS of \eqref{app_eqn:eta_equation2} both equal zero. Moreover, when $\eta_0>0$, one can rewrite \eqref{app_eqn:eta_equation2} as follows
\begin{align}
    &4KL(0)\eta_0^2=(1-\exp(-\eta_0^c))\times(1-\Delta) \nonumber\\ 
    \iff&4KL(0)\eta_0^2=(1-\exp(-\eta_0^c))\times(2\bar\mu\eta_0-\bar\mu\bar K_0\eta_0^2-\eta_0^d\bar\rho(\eta_0, 0)) \nonumber\\
    \iff&4KL(0)\eta_0^{1-c}=\frac{1-\exp(-\eta_0^c)}{\exp(-\eta_0^c)}\times(2\bar\mu-\bar\mu\bar K_0\eta_0-\eta_0^{d-1}\bar\rho(\eta_0, 0))\label{app_eqn:cond3_equivalent} \,.
\end{align}
Then, we study the order of both sides of \eqref{app_eqn:cond3_equivalent} in terms of $\eta_0$ in the regime where $0<\eta_0\leq \min\biggl(\log\bigl(1+\frac{\alpha_1}{2\alpha_2}\bigl)^{\frac{1}{c}}, \frac{1}{\bar K_0}\biggl)$.
Since $0<c<1$, and the LHS of \eqref{app_eqn:cond3_equivalent} is of order $\Theta(\eta_0^{1-c})$, it decreases monotonically to zero as $\eta_0$ approaches zero. The RHS of \eqref{app_eqn:cond3_equivalent} is the product of two terms, i.e., $\frac{1-\exp(-\eta_0^c)}{\exp(-\eta_0^c)}$ and $2\bar\mu-\bar\mu\bar K_0\eta_0-\eta_0^{d-1}\bar\rho(\eta_0, 0)$. We notice $\eta_0^c$ approaches zero as $\eta_0$ decreases to zero. Thus, $\frac{1-\exp(-\eta_0^c)}{\exp(-\eta_0^c)}$ converges to one as $\eta_0$ decreases to zero. Moreover, when $\eta_0\leq \min\biggl(\log\bigl(1+\frac{\alpha_1}{2\alpha_2}\bigl)^{\frac{1}{c}}, \frac{1}{\bar K_0}\biggl)$, we have $\bar\mu>0$ and $0<\bar\rho(\eta_0, 0)<1$. Therefore, the RHS of \eqref{app_eqn:cond3_equivalent} is of order $\Theta(1)$. As a result, when $\eta_0>0$ is sufficiently small, one has 
\begin{align}
    4KL(0)\eta_0^2 < (1-\exp(-\eta_0^c))\times(1-\Delta)\label{app_eqn:cond3}\,.
\end{align}
Moreover, if \eqref{app_eqn:eta_equation2} has positive roots, and we use $\eta_0^{(2)}$ to denote its smallest positive root. The following holds for all $0<\eta_0\leq\eta_0^{(2)}$ 
\begin{align}
    4KL(0)\eta_0^2 \leq (1-\exp(-\eta_0^c))\times(1-\Delta)
\end{align}
If \eqref{app_eqn:eta_equation2} does not have positive root, then \eqref{app_eqn:cond3} holds for all positive $\eta_0$.

To summarize, we have shown that $\eta_{\max}>0$, and the $\eta_0$ always exists. Moreover, when $\eta_0$ satisfies $0<\eta_0<\eta_{\max}$, the following holds
\begin{align}
    \bar\mu&>0\,,\quad 0<\bar\rho(\eta_0, 0), \Delta<1\,,\nonumber\\
    4KL(0)\eta_0^2 &\leq (1-\exp(-\eta_0^c))\times(1-\Delta)\,.
\end{align}
%



Now we present the proof of Theorem~\ref{app_thm:linear_convergence}.
\begin{proof}
We employ an induction-based approach to prove Theorem~\ref{app_thm:linear_convergence} by iteratively showing the following properties hold for all iteration $t$ when $\eta_0$ and $\eta_t$ satisfy the constraints in Theorem~\ref{app_thm:linear_convergence}.
\begin{itemize}
    \item $A_1(t): L(t) \leq L(t\!-\!1) \rho(\eta_{t-1}, t\!-\!1)\leq L(t\!-\!1) \bar \rho(\eta_0, 0)$.
    \item $A_2(t): \beta_1 \leq \sigma_{\min}(W(t))\leq \sigma_{\max}(W(t)) \leq \beta_2$.
    \item $A_3(t): \lVert D(t) - D(0) \rVert_F \leq \frac{2K\eta_0^2\alpha_2(0)\exp(\eta_0^c )L(0)}{1-\Delta}$.
    \item $A_4(t):\alpha_1+2\alpha_2\bigl(1-\exp(\eta_0^c)\bigl)\leq\sigma_{\min}^2(\mathcal{T}_t)\leq\sigma_{\max}^2(\mathcal{T}_t)\leq \alpha_2\exp(\eta_0^c)$.
\end{itemize}
Assume $A_1(k), A_2(k), A_3(k), A_4(k)$ hold at iteration $k=1,2,\cdots,t$, then we show they all hold for iteration $t\!+\!1$.

\textbf{Prove $A_1(t\!+\!1)$ hold.}

We first show that under the constraints in Theorem~\ref{app_thm:linear_convergence} and the induction assumption, one can lower bound and upper bound $\mu_t$ and $K_t$ using $\bar\mu$ and $\bar K_t$ respectively, which is characterized by the following lemma.
\begin{lem}\label{app_lem:mu_K_bound}
The following lower bound and upper bound on $\mu_t$ and $K_t$ hold respectively
\begin{align}
    \bar\mu\leq\mu_t\,,\quad K_t\leq\bar K_t\,.
\end{align}
\end{lem}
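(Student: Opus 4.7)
The plan is to prove this lemma by a direct application of the induction hypotheses $A_1(k), A_2(k), A_4(k)$ for $k \leq t$, together with the step-size constraint. The key observation is that $\mu_t$ and $K_t$ from Theorem~\ref{app_thm:local_inequality_upper_bound} are expressed entirely in terms of $\sigma_{\min}^2(\mathcal{T}_t)$, $\sigma_{\max}^2(\mathcal{T}_t)$, $\sigma_{\max}(W(t))$, $L(t)$, and $\eta_t$, and each of these quantities is precisely what the induction hypotheses and step-size constraint control.

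First, for the lower bound on $\mu_t$, since $\mu_t = \mu\,\sigma_{\min}^2(\mathcal{T}_t)$, hypothesis $A_4(t)$ gives $\sigma_{\min}^2(\mathcal{T}_t) \geq \alpha_1 + 2\alpha_2(1-\exp(\eta_0^c))$, so multiplying by $\mu$ yields $\mu_t \geq \bar\mu$. This step is immediate.

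Second, for the upper bound on $K_t$, I would bound each of the four summands separately. For $K\sigma_{\max}^2(\mathcal{T}_t)$, apply the upper bound in $A_4(t)$ to get $K\sigma_{\max}^2(\mathcal{T}_t) \leq K\exp(\eta_0^c)\alpha_2$. For $\sqrt{2KL(t)}$, apply $A_1(k)$ iteratively for $k=1,\dots,t$ to obtain $L(t) \leq L(0)\bar\rho(\eta_0,0)^t$, giving $\sqrt{2KL(t)} \leq \sqrt{2KL(0)\bar\rho(\eta_0,0)^t}$. For $6K^2\sigma_{\max}(W(t))L(t)\eta_t^2$, combine $A_2(t)$ (giving $\sigma_{\max}(W(t))\leq \beta_2$), the bound on $L(t)$ just derived, and the step-size constraint $\eta_t \leq (1+\eta_0^d)^{t/2}\eta_0$ to obtain the bound $6K^2\beta_2 L(0)\eta_0^2 \bigl[(1+\eta_0^d)\bar\rho(\eta_0,0)\bigr]^t = 6K^2\beta_2 L(0)\eta_0^2\Delta^t$. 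Finally, $3K\sigma_{\max}^2(\mathcal{T}_t)\sqrt{2KL(t)}\,\eta_t$ similarly simplifies to $3K\exp(\eta_0^c)\alpha_2\sqrt{2KL(0)\Delta^t}\,\eta_0$ by combining the above bounds. Summing the four terms gives exactly $\bar K_t$.

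The mildly delicate step is the third summand, where the coupling of the step-size growth $(1+\eta_0^d)^{t/2}$ with the linear decay $\bar\rho(\eta_0,0)^{t/2}$ of $\sqrt{L(t)}$ must combine to produce the factor $\Delta^{t/2}$; this is what motivates the very definition $\Delta := (1+\eta_0^d)\bar\rho(\eta_0,0)$ in Theorem~\ref{app_thm:linear_convergence}. The fact that $\eta_{\max}$ was chosen so that $0 < \Delta < 1$ (as verified in the existence discussion preceding this lemma) ensures that each $\Delta^t$ factor is well-controlled, which is what the downstream induction on $A_1(t+1), A_2(t+1), A_3(t+1), A_4(t+1)$ will require. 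Beyond this bookkeeping, no new technical ingredient is needed: the lemma is essentially a substitution step that consolidates the effect of the induction hypotheses on the raw expressions for $\mu_t$ and $K_t$.
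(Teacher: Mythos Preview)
Your proposal is correct and follows essentially the same approach as the paper's own proof: apply $A_4(t)$ to bound $\sigma_{\min}^2(\mathcal{T}_t)$ and $\sigma_{\max}^2(\mathcal{T}_t)$, iterate $A_1(k)$ to get $L(t)\leq L(0)\bar\rho(\eta_0,0)^t$, apply $A_2(t)$ for $\sigma_{\max}(W(t))\leq\beta_2$, use the step-size constraint $\eta_t\leq(1+\eta_0^d)^{t/2}\eta_0$, and collect terms via the definition of $\Delta$. If anything, your treatment of the $\mu_t$ bound is slightly more precise than the paper's (which writes $\mu_t\geq\mu\alpha_1$ rather than the full $\bar\mu$, though of course $\mu\alpha_1\geq\bar\mu$ since $\exp(\eta_0^c)>1$).
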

The proof of the above lemma can be found at the end of Appendix~\ref{app:convergence}.

In Theorem~\ref{thm:local_inequality_upper_bound}, we have shown that the local PL inequality and Descent lemma hold with local PL constant $\mu_t$ and local smoothness constant $K_t$
\begin{align}
    L(t\!+\!1) \leq L(t) -\bigl(\eta_t -\frac{K_t\eta_t^2}{2}\bigl)\lVert \nabla L(t)\rVert_F^2\,, \quad
    \frac{1}{2}\lVert \nabla L(t)\rVert_F^2 \geq \mu_t L(t)\,.
\end{align}
Therefore, one has 
\begin{align}
    L(t\!+\!1) &\leq L(t) -\bigl(\eta_t -\frac{K_t\eta_t^2}{2}\bigl)\lVert \nabla L(t)\rVert_F^2 \nonumber \\
    &\leq L(t)-2\mu_t\bigl(\eta_t -\frac{K_t\eta_t^2}{2}\bigl)L(t) && \text{Under the constraints $0<\eta_t<\frac{1}{K_t}$}\nonumber\\
    &\leq L(t)-2\bar\mu\bigl(\eta_t -\frac{K_t\eta_t^2}{2}\bigl)L(t) && \text{Lemma~\ref{app_lem:mu_K_bound}}\nonumber\\
    &=(1-2\bar\mu\eta_t+\bar\mu K_t\eta_t^2)L(t)\nonumber\\
    &\leq (1-2\bar\mu\eta_t+\bar\mu \bar K_t\eta_t^2)L(t):=\bar\rho(\eta_t, t)L(t)&&\text{Lemma~\ref{app_lem:mu_K_bound}}\,.
\end{align}
Finally, we show $\rho(\eta_t, t)\leq\bar\rho(\eta_0, 0)$.
\begin{align}
    \rho(\eta_t, t) &\leq1-2\bar\mu\eta_t+\bar\mu K_t\eta_t^2 \nonumber\\
    &\leq 1-2\bar\mu\eta_0+\bar\mu  K_t\eta_0^2 && \text{Use $\eta_0\leq\eta_t\leq\frac{1}{K_t}$}\nonumber\\
    &\leq 1-2\bar\mu\eta_0+\bar\mu \bar K_0\eta_0^2:=\bar\rho(\eta_0, 0)&&\text{Use $ K_t\leq \bar K_0$} \,.
\end{align}
Therefore, $A_1(t\!+\!1)$ holds.

\textbf{Prove $A_2(t\!+\!1)$ hold.}

Since we have shown $A_1(t\!+\!1)$ holds, one has $L(t\!+\!1)\leq L(0)$. Moreover, based on the assumption that $\ell(W)$ is $\mu$-strongly convex and $K$-smooth, one has the following inequality
\begin{align}\label{app_eqn:w_loss_distance}
    \frac{\mu}{2}\|W(t\!+\!1)-W^*\|_F^2\leq \ell(t\!+\!1)=L(t\!+\!1)\leq \frac{K}{2}\|W(t\!+\!1)-W^*\|_F^2\,.
\end{align}
Then we can show $\sigma_{\max}(W(t\!+\!1))\leq\beta_2$ as follows
\begin{align}
    \sigma_{\max}(W(t\!+\!1)) &= \sigma_{\max}(W(t\!+\!1)-W^*+W^*)\nonumber\\ 
    &\leq \sigma_{\max}(W^*)+\|W(t\!+\!1)-W^*\|_2 && \text{Weyl's inequality} \nonumber\\
    &\leq \sigma_{\max}(W^*)+\|W(t\!+\!1)-W^*\|_F\nonumber\\
    &\leq\sigma_{\max}(W^*)+\sqrt{\frac{2}{\mu}L(t\!+\!1)} \nonumber\\
    &\leq\sigma_{\max}(W^*)+\sqrt{\frac{2}{\mu}L(0)}\,.&& \text{Use $L(t\!+\!1)\leq L(0)$}
\end{align}
For $\beta_1 \leq \sigma_{\min}(W(t\!+\!1))$, same result has been derived in \cite{pmlr-v202-min23d}. We refer the readers to Appendix B in \cite{pmlr-v202-min23d} for details.

\textbf{Prove $A_3(t\!+\!1)$ hold.}

We first present the following lemma that bounds $\|D(k\!+\!1)-D(k)\|_F$ for all $k$.
\begin{lem}\label{app_lem:imbalance_adajcent}
One has the following upper bound on $\|D(k\!+\!1)\!-\!D(k)\|_F$
\begin{align}
    \|D(k\!+\!1)\!-\!D(k)\|_F \leq 2K\eta_k^2\sigma_{\max}^2(\mathcal{T}_k)L(k)\,.
\end{align}
\end{lem}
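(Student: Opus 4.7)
The plan is to compute $D(k\!+\!1) - D(k)$ explicitly from the GD update, exploit a cancellation of the linear-in-$\eta_k$ terms, and then bound the remaining quadratic-in-$\eta_k$ expression by elementary matrix norm inequalities together with the smoothness of $\ell$ and Lemma~\ref{lem:base_tau_operator}.

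First I would write out the GD updates $W_1(k\!+\!1) = W_1(k) - \eta_k\, G\, W_2(k)$ and $W_2(k\!+\!1) = W_2(k) - \eta_k\, G^\top\, W_1(k)$, where $G := \nabla\ell(W(k))$. Expanding the Gramians gives
\begin{align}
W_1(k\!+\!1)^\top W_1(k\!+\!1) &= W_1^\top W_1 - \eta_k\bigl(W_1^\top G W_2 + W_2^\top G^\top W_1\bigr) + \eta_k^2\, W_2^\top G^\top G\, W_2, \nonumber \\
W_2(k\!+\!1)^\top W_2(k\!+\!1) &= W_2^\top W_2 - \eta_k\bigl(W_2^\top G^\top W_1 + W_1^\top G W_2\bigr) + \eta_k^2\, W_1^\top G G^\top W_1, \nonumber
\end{align}
so when I subtract, the linear-in-$\eta_k$ cross terms cancel and I obtain the clean identity
\begin{align}
D(k\!+\!1) - D(k) = \eta_k^2\bigl[ W_2(k)^\top G^\top G\, W_2(k) - W_1(k)^\top G G^\top W_1(k) \bigr]. \nonumber
\end{align}
This cancellation is the crucial structural fact; it is the discrete analogue of the exact conservation of imbalance along gradient flow.

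Next I would apply the triangle inequality and observe that each summand has the form $A^\top A$ with $A = G W_2(k)$ or $A = G^\top W_1(k)$. Using $\|A^\top A\|_F \leq \|A\|_2 \|A\|_F$ together with $\|GW_2\|_2 \leq \sigma_{\max}(W_2)\|G\|_2$, $\|GW_2\|_F \leq \sigma_{\max}(W_2)\|G\|_F$, and $\|G\|_2 \leq \|G\|_F$, I get
\begin{align}
\|W_2^\top G^\top G\, W_2\|_F \leq \sigma_{\max}^2(W_2)\,\|G\|_F^2, \qquad \|W_1^\top G G^\top W_1\|_F \leq \sigma_{\max}^2(W_1)\,\|G\|_F^2. \nonumber
\end{align}
Summing the two bounds and invoking Lemma~\ref{lem:base_tau_operator}, which gives $\sigma_{\max}^2(W_1(k)) + \sigma_{\max}^2(W_2(k)) = \sigma_{\max}^2(\mathcal{T}_k)$, yields
\begin{align}
\|D(k\!+\!1) - D(k)\|_F \leq \eta_k^2\, \sigma_{\max}^2(\mathcal{T}_k)\, \|\nabla\ell(k)\|_F^2. \nonumber
\end{align}

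Finally, I would close the argument by invoking the standard consequence of $K$-smoothness plus $\ell^*=0$ (Assumptions~\ref{assump_non_strong}--\ref{assump_loss}): the inequality $\|\nabla\ell(W)\|_F^2 \leq 2K\bigl(\ell(W)-\ell^*\bigr) = 2K\,\ell(W(k)) = 2K\,L(k)$, obtained by plugging the one-step GD iterate on $\ell$ into the smoothness inequality. Substituting this into the display above produces exactly the desired bound $2K\,\eta_k^2\,\sigma_{\max}^2(\mathcal{T}_k)\,L(k)$. There is no genuine obstacle here; the whole argument rests on the single observation that the first-order contributions to the change of imbalance cancel, after which everything reduces to applying the matrix-norm inequalities already collected in Lemma~\ref{lem:base}.
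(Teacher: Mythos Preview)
Your proof is correct and follows essentially the same route as the paper: expand the GD update to show the first-order terms in $D(k\!+\!1)-D(k)$ cancel, bound each remaining quadratic term by $\sigma_{\max}^2(W_i)\|\nabla\ell(k)\|_F^2$, collect via Lemma~\ref{lem:base_tau_operator}, and finish with $\|\nabla\ell(k)\|_F^2\leq 2K L(k)$ from smoothness. The only cosmetic difference is that you route the norm bound through $\|A^\top A\|_F\leq\|A\|_2\|A\|_F$ whereas the paper applies the submultiplicative inequality $\|AB\|_F\leq\sigma_{\max}(A)\|B\|_F$ from Lemma~\ref{lem:base} directly; both yield the same estimate.
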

The proof of the above lemma can be found at the end of this section. 

Based on Lemma~\ref{app_lem:imbalance_adajcent}, one can show that $A_3(t\!+\!1)$ holds
\begin{align}
    \|D(t\!+\!1)\!-\!D(0)\|_F&\leq \sum_{k=0}^t\|D(k\!+\!1)\!-\!D(k)\|_F\nonumber\\
    &\leq \sum_{k=0}^t 2K\eta_k^2\sigma_{\max}^2(\mathcal{T}_k)L(k) &&\text{Lemma~\ref{app_lem:imbalance_adajcent}}\nonumber\\
    &\leq\sum_{k=0}^t 2K\eta_k^2\sigma_{\max}^2(\mathcal{T}_k)L(0)\bar\rho(\eta_0, 0)^k &&\text{Use $A_1(k), \forall k=1,\cdots,t$ }\nonumber\\
    &\leq\sum_{k=0}^t 2K\eta_k^2\alpha_2\exp(\eta_0^c)L(0)\bar\rho(\eta_0, 0)^k &&\text{Use $A_4(k), \forall k=1,\cdots,t$ }\nonumber\\
    &\leq\sum_{k=0}^t 2K(1+\eta_0^d)^k\eta_0^2\alpha_2\exp(\eta_0^c)L(0)\bar\rho(\eta_0, 0)^k &&\text{Use $\eta_k\leq(1+\eta_0^d)^{\frac{k}{2}}\eta_0$}\nonumber\\
    &=2KL(0)\exp(\eta_0^c)\eta_0^2\alpha_2\sum_{k=0}^t \Delta^k && \text{$\Delta = (1+\eta_0^d)\bar\rho(\eta_0, 0)$}\nonumber\\
    &\leq \frac{2K\eta_0^2\alpha_2\exp(\eta_0^c )L(0)}{1-\Delta}\,. && 0<\Delta<1
\end{align}

\textbf{Prove $A_4(t\!+\!1)$ hold.} 

We first present the following two lemmas which will be used to prove that $A_4(t\!+\!1)$ hold.
\begin{lem}\label{app_lem:bound_tau_imbalance}
One can use $\alpha_1, \alpha_2$ to lower and upper bound the singular values of $\mathcal{T}_0$
\begin{align}
    \alpha_1\leq\sigma_{\min}^2(\mathcal{T}_0)\leq\sigma_{\max}^2(\mathcal{T}_0)\leq\alpha_2\,.
\end{align}
\end{lem}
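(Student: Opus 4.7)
The plan is to reduce the two-sided bound on $\mathcal{T}_0$ to four scalar inequalities on the extremal singular values of $W_1(0)$ and $W_2(0)$ via Lemma~\ref{lem:base_tau_operator}, and then derive each scalar bound from (i) the imbalance identity $W_1^\top W_1 - W_2^\top W_2 = D(0)$ and (ii) the margin bounds $\sigma_{\max}(W_1 W_2^\top) \leq \beta_2$ and $\sigma_{\min}(W_1 W_2^\top) \geq \beta_1$. By Lemma~\ref{lem:base_tau_operator}, $\sigma_{\min}^2(\mathcal{T}_0)\!=\!\sigma_{\min}^2(W_1(0))\!+\!\sigma_{\min}^2(W_2(0))$ and $\sigma_{\max}^2(\mathcal{T}_0)\!=\!\sigma_{\max}^2(W_1(0))\!+\!\sigma_{\max}^2(W_2(0))$, so matching the two summands in each of $\alpha_1,\alpha_2$ individually would suffice.

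For the upper bound, I would take unit left/right singular vectors $u,v$ of $W_1:=W_1(0)$ for its largest singular value. From $W_1^\top u = \sigma_{\max}(W_1)\,v$ one obtains $u^\top W_1 W_2^\top = \sigma_{\max}(W_1)(W_2 v)^\top$, which combined with $\|u^\top W_1 W_2^\top\|_2 \leq \|W_1 W_2^\top\|_2 \leq \beta_2$ gives $\sigma_{\max}(W_1)\|W_2 v\|_2 \leq \beta_2$. The imbalance identity yields $\|W_2 v\|_2^2 = \sigma_{\max}^2(W_1) - v^\top D(0) v \geq \sigma_{\max}^2(W_1) - \lambda_+$. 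Substituting and squaring produces the scalar quadratic $a^2 - \lambda_+ a - \beta_2^2 \leq 0$ in $a:=\sigma_{\max}^2(W_1)$, whose positive root $\tfrac{1}{2}(\lambda_+ + \sqrt{\lambda_+^2 + 4\beta_2^2})$ is the first summand of $\alpha_2$. A completely symmetric argument with $W_1,W_2$ swapped (so $\lambda_+$ becomes $\lambda_-$) gives the matching bound on $\sigma_{\max}^2(W_2)$, and summing reproduces $\alpha_2$.

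For the lower bound, a dual argument at the smallest singular value gives $\sigma_{\min}(W_1)\|W_2 v\|_2 \geq \beta_1$ (on the subspace where $W_1 W_2^\top$ acts invertibly, which is assured under $\beta_1>0$ and $h\geq\min(n,m)$), combined with $\|W_2 v\|_2^2 = \sigma_{\min}^2(W_1) - v^\top D(0) v$. The naive estimate $v^\top D(0) v \geq -\lambda_-$ would yield the weaker bound $\sigma_{\min}^2(W_1)\geq \tfrac{1}{2}(-\lambda_-+\sqrt{\lambda_-^2+4\beta_1^2})$. The sharper expressions $\Delta_\pm,\underline{\Delta}$ in $\alpha_1$ arise because $v$ lies in the row space of $W_1(0)$, which has dimension at most $n$: by interlacing, the infimum of $v^\top D(0) v$ over that subspace is governed not by the extreme eigenvalues of $D(0)$ but by those at indices $n$ and $m$, which is precisely how $\max(\lambda_n(D(0)),0)$ and $\max(\lambda_m(-D(0)),0)$ enter the definitions of $\underline{\Delta},\Delta_+,\Delta_-$. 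Solving the resulting quadratic and summing the two coordinate bounds reproduces $\alpha_1$.

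The main obstacle is this rank-sensitive eigenvalue bookkeeping on the lower-bound side. Because $W_1^\top W_1$ has rank at most $n$ and $W_2^\top W_2$ rank at most $m$, at most $n$ eigenvalues of $D(0)$ are positive and at most $m$ are negative; the interlacing inequality that converts this observation into the refined bound involving $\Delta_+ + \underline{\Delta}$ and $\Delta_- + \underline{\Delta}$ inside the square roots of $\alpha_1$ is essentially the content of Lemma~1 of Min et al.~\citep{minconvergence}, which the paper has already invoked to \emph{define} $\alpha_1,\alpha_2$. A concise writeup could therefore specialize or cite that lemma directly after matching notation, rather than redo the full spectral analysis in place.
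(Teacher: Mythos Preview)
Your proposal is correct. The paper does not supply a self-contained proof of this lemma at all: it simply states that ``the proof of Lemma~\ref{app_lem:bound_tau_imbalance} and Lemma~\ref{app_lem:bound_tau_k_imbalance} can be [found] in \cite{xu2023linear}, Appendix C.'' So there is nothing in the paper to compare against beyond that citation.

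Your approach is exactly the one used in the cited literature, just made explicit. The upper-bound argument you give---passing through Lemma~\ref{lem:base_tau_operator}, using the imbalance identity $W_2^\top W_2 = W_1^\top W_1 - D(0)$ on the top right-singular direction of $W_1$, bounding $v^\top D(0)v\le\lambda_+$, and solving the resulting quadratic $a^2-\lambda_+ a-\beta_2^2\le 0$---is complete and correct as written, and similarly for the $W_2$ summand. For the lower bound you correctly flag the two real difficulties: (i) the inequality $\|u^\top W_1W_2^\top\|_2\ge\sigma_{\min}(W_1W_2^\top)$ is only automatic when $u$ ranges over the ``short'' side of $W_1W_2^\top$, and (ii) the refined constants $\Delta_\pm,\underline{\Delta}$ in $\alpha_1$ come from restricting $v$ to the row space of $W_1(0)$ (dimension $\le n$) and invoking eigenvalue interlacing for $D(0)$, which is precisely what the prior work does. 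Your suggestion to cite \cite{minconvergence} for that step is fine; the paper cites \cite{xu2023linear} instead, but the two references contain overlapping analyses and either is acceptable here.
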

\begin{lem}\label{app_lem:bound_tau_k_imbalance}
One can bound the deviation of the singular values of $\mathcal{T}_k$ using the deviation of the imbalance $\|D(k)-D(0)\|_F$
\begin{align}
    \sigma_{\min}^2(\mathcal{T}_k) &\geq \alpha_1 - 4\|D(k)-D(0)\|_F:=\mathcal{T}_k^{\text{L}}\,.\\
    \sigma_{\max}^2(\mathcal{T}_k) &\leq \alpha_2 + 2\|D(k)-D(0)\|_F:=\mathcal{T}_k^{\text{U}}\,.
\end{align}
\end{lem}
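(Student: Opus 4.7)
The plan is to combine a ``generalized'' form of Lemma~\ref{app_lem:bound_tau_imbalance} with an eigenvalue perturbation argument. First, I would inspect the derivation of Lemma~\ref{app_lem:bound_tau_imbalance}: the closed-form expressions for $\alpha_1, \alpha_2$ in Table~\ref{notation-table} depend only on (i) the eigenvalues of the imbalance matrix $D(0)$ and (ii) the singular-value bounds $\beta_1,\beta_2$ of the product $W$. Nothing in that derivation is specific to iteration $0$; the imbalance $D(k)$ is well-defined at every iterate, and by the induction hypothesis $A_2(k)$ the singular values of $W(k)$ remain sandwiched between $\beta_1$ and $\beta_2$. Replaying the same argument at iteration $k$ therefore yields analogous bounds
\begin{align*}
    \sigma_{\min}^2(\mathcal{T}_k) \geq \widetilde{\alpha}_1(D(k)), \qquad \sigma_{\max}^2(\mathcal{T}_k) \leq \widetilde{\alpha}_2(D(k)),
\end{align*}
where $\widetilde{\alpha}_1(D(k))$ and $\widetilde{\alpha}_2(D(k))$ are obtained from the same formulas as $\alpha_1, \alpha_2$ with $D(0)$ replaced by $D(k)$.

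Second, I would invoke Weyl's inequality for Hermitian matrices on the perturbation $D(k) - D(0)$: for every eigenvalue index $i$,
\begin{align*}
    |\lambda_i(D(k)) - \lambda_i(D(0))| \leq \|D(k) - D(0)\|_2 \leq \|D(k) - D(0)\|_F.
\end{align*}
Since $x \mapsto \max(x, 0)$ is $1$-Lipschitz, this immediately bounds the deviation of each auxiliary quantity $\lambda_+, \lambda_-, \underline{\Delta}, \Delta_+, \Delta_-$ from its value at initialization by $\|D(k) - D(0)\|_F$.

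Third, I would propagate these per-quantity perturbations through the closed-form expressions for $\alpha_1, \alpha_2$ using the elementary Lipschitz inequality
\begin{align*}
    \left| \sqrt{(x + c)^2 + b^2} - \sqrt{(x' + c)^2 + b^2} \right| \leq |x - x'|,
\end{align*}
which holds for all $c, b \in \R$ and follows from the fact that $u \mapsto \sqrt{u^2 + b^2}$ is $1$-Lipschitz in $u$. Applying this inequality to each of the two square-root summands in $\alpha_2$ yields an aggregate Lipschitz constant $2$ in the eigenvalues of $D$, while combining it with the two linear $\Delta_\pm$ corrections inside $\alpha_1$ yields Lipschitz constant $4$. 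Substituting the Weyl bound then delivers $\widetilde{\alpha}_1(D(k)) \geq \alpha_1 - 4\|D(k) - D(0)\|_F$ and $\widetilde{\alpha}_2(D(k)) \leq \alpha_2 + 2\|D(k) - D(0)\|_F$, which is the claim.

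The main obstacle is the combinatorial bookkeeping in the third step. Each of the quantities $\lambda_\pm, \underline{\Delta}, \Delta_\pm$ is itself a maximum of eigenvalue expressions, and each appears in multiple summands of $\alpha_1$ and $\alpha_2$. A naive triangle-inequality-based accounting would produce a constant larger than $4$ or $2$; obtaining the stated sharp constants requires matching each perturbation of an eigenvalue to exactly the summands it appears in and carefully exploiting the $1$-Lipschitz property of each $\sqrt{\,(\cdot)^2 + 4\beta^2}$ factor to avoid any double-counting.
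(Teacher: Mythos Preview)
The paper does not prove this lemma itself; it simply defers to Appendix~C of \cite{xu2023linear}. So there is no in-paper argument to compare against directly.

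Your three-step plan is sound and is the natural route: replay the derivation behind Lemma~\ref{app_lem:bound_tau_imbalance} at iterate $k$ (which indeed only consumes the eigenvalues of the imbalance together with the product bounds $\beta_1,\beta_2$, the latter supplied by the standing induction hypothesis $A_2(k)$), then push the perturbation $D(k)-D(0)$ through the closed-form expressions via Weyl's inequality.

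Your closing worry about ``sharp constants'' is overblown, though. Rewriting $\alpha_1$ in the variables $a=\lambda_+$, $b=\lambda_-$, $c=\max(\lambda_n(D),0)$, $d=\max(\lambda_m(-D),0)$ gives
\[
\alpha_1=\tfrac{1}{2}\Bigl[(c+d-a-b)+\sqrt{(a+d)^2+4\beta_1^2}+\sqrt{(b+c)^2+4\beta_1^2}\Bigr],
\]
and each of $a,b,c,d$ is $1$-Lipschitz in $\|D(k)-D(0)\|_2$ by Weyl. The \emph{crude} triangle-inequality bound on each summand already yields a total deviation of at most $\tfrac{1}{2}(4\epsilon+2\epsilon+2\epsilon)=4\epsilon$; no careful cancellation is required. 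In fact, a gradient computation shows $|\partial_a f|+|\partial_d f|=|\partial_b f|+|\partial_c f|=1$, so the true Lipschitz constant is $2$ and the stated $4$ is not even tight. The specific constants $4$ and $2$ in the lemma appear to be chosen so that the identity $\mathcal{T}_k^{\text{L}}+2\mathcal{T}_k^{\text{U}}=\alpha_1+2\alpha_2$ holds exactly, which is precisely what the induction step for $A_4(t{+}1)$ exploits.
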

The proof of Lemma~\ref{app_lem:bound_tau_imbalance} and Lemma~\ref{app_lem:bound_tau_k_imbalance} can be in \cite{xu2023linear}, Appendix C.

Notice $\mathcal{T}_{t+1}^{\text{L}}\!+\!2\mathcal{T}_{t+1}^{\text{U}}\!=\!\alpha_1\!+\!2\alpha_2$. Therefore, if one can show 
\begin{align}\label{app_eqn:deviation_tau_one_step}  
\mathcal{T}_{t+1}^{\text{U}}\leq\exp(\eta_0^c)\alpha_2\,.
\end{align}
Then, the following holds directly
\begin{align}    
\sigma_{\max}^2(\mathcal{T}_{t+1})&\leq\mathcal{T}_{t+1}^{\text{U}}\leq\exp(\eta_0^c)\alpha_2\,,\\
\sigma_{\min}^2(\mathcal{T}_{t+1})&\geq \mathcal{T}_{t+1}^{\text{L}}=\alpha_1\!+\!2\alpha_2-2\mathcal{T}_{t+1}^{\text{U}}\geq\alpha_1+2\alpha_2\bigl(1-\exp(\eta_0^c)\bigl)\,.
\end{align}
Therefore, it suffices to show \eqref{app_eqn:deviation_tau_one_step} holds. We start from Lemma~\ref{app_lem:bound_tau_k_imbalance}
\begin{align}
    \mathcal{T}_k^{\text{U}}&=\alpha_2 + 2\|D(k)-D(0)\|_F\nonumber\\
    &\leq \alpha_2 + \frac{4KL(0)\eta_0^2\alpha_2(0)\exp(\eta_0^c )}{1-\Delta} && \text{Use $A_3(t\!+\!1)$}\nonumber\\
    &\leq \alpha_2 + (1-\exp(-\eta_0^c))\times(1-\Delta)\cdot\frac{\alpha_2\exp(\eta_0^c )}{1-\Delta} && \text{\Eqref{app_eqn:cond3}}\nonumber\\
    &=\exp(\eta_0^c)\alpha_2\,.
\end{align}
\end{proof}
Now, we present the proof of lemmas used in the proof of Theorem~\ref{app_thm:linear_convergence}. All lemmas presented below are based on the assumption that $A_1(k), A_2(k), A_3(k), A_4(k)$ hold for all iterations $k=1,2,\cdots,t$ and the constraints presented in Theorem~\ref{app_thm:linear_convergence}. For convenience, we do not state these assumptions and constraints repetitively.

\begin{mylem}{E.1}
The following lower bound and upper bound on $\mu_t$ and $K_t$ hold respectively
\begin{align}
    \bar\mu\leq\mu_t\,,\quad K_t\leq\bar K_t\,.
\end{align}
\end{mylem}
\begin{proof}
We start with the lower bound on $\mu_t$. Due to the assumption that $A_4(t)$ hold, one has the following lower bound $\mu_t$
\begin{align}
    \mu_t = \mu\sigma_{\min}^2(\mathcal{T}_t) \geq \mu\alpha_1\,.
\end{align}
For the upper bound on $K_t$, we first show that based on the assumption that $A_1(k)$ hold for all $k\leq t$, one has
\begin{align}\label{app_eqn:linear_convergence_t}
    L(t)\leq L(t\!-\!1)\bar\rho(\eta_0, 0)\leq L(0)\bar \rho(\eta_0, 0)^t\,.
\end{align}
Then, based on \eqref{app_eqn:linear_convergence_t}, $A_4(t)$ and the constraint that $\eta_t\leq(1+\eta_0^d)^{\frac{t}{2}}\eta_0$, we can derive the following upper bound on $K_t$
\begin{align}
    K_t =& K\sigma_{\max}^2(\mathcal{T}_t)\! +\! \sqrt{2KL(t)}\! +\! 6K^2 \sigma_{\max}(W(t)) L(t)\eta_t^2\! +\! 3K\sigma_{\max}^2(\mathcal{T}_t)\sqrt{2KL(t)} \eta_t\nonumber\\
    \leq& K\alpha_2\exp(\eta_0^c) + \sqrt{2KL(0)\bar \rho(\eta_0, 0)^t} + 6K^2 \beta_2L(0)\bar \rho(\eta_0, 0)^t\eta_t^2 \nonumber\\
    &+3K\alpha_2\exp(\eta_0^c)\sqrt{2KL(0)\bar \rho(\eta_0, 0)^t}\eta_t\nonumber\\
    \leq& K\alpha_2\exp(\eta_0^c) + \sqrt{2KL(0)\bar \rho(\eta_0, 0)^t} + 6K^2 \beta_2L(0)\bar \rho(\eta_0, 0)^t(1+\eta_0^d)^{t}\eta_0^2 \nonumber\\
    &+3K\alpha_2\exp(\eta_0^c)\sqrt{2KL(0)\bar \rho(\eta_0, 0)^t}(1+\eta_0^d)^{\frac{t}{2}}\eta_0\qquad\qquad\qquad\qquad\text{Use $\eta_t\leq(1+\eta_0^d)^{\frac{t}{2}} \eta_0$}\nonumber\\
    =&\sqrt{2KL(0)\bar\rho(\eta_0, 0)^t}\! +\! 6K^2\beta_2 L(0)\eta_0^2\Delta^t \!+\! K\exp(\sqrt{\eta_0})\alpha_2\bigl[1\!+\!3\sqrt{2KL(0)\Delta^t}\eta_0\bigl]\,,
\end{align}
where the last line follows the definition of $\Delta = (1+\eta_0^d)\bar\rho(\eta_0, 0)$.
\end{proof}
\begin{mylem}{E.2}
One has the following upper bound on $\|D(k\!+\!1)\!-\!D(k)\|_F$
\begin{align}
    \|D(k\!+\!1)\!-\!D(k)\|_F \leq 2K\eta_k^2\sigma_{\max}^2(\mathcal{T}_k)L(k)\,.
\end{align}
\end{mylem}
\begin{proof}
In \eqref{eqn:gd_over} and \eqref{eqn:gradient_over}, we have
\begin{align}
    W_1(k\!+\!1)=W_1(k)-\eta_k\nabla\ell(k)W_2(k)\,,\quad W_2(k\!+\!1)=W_2(k)-\eta_k\nabla\ell(k)^\top W_1(k)\,.
\end{align}
There, we can compute $D(k\!+\!1)-D(k)$ as follows
\begin{align}
    D(k\!+\!1)\!-\!D(k)=& W_1(k\!+\!1)^\top W_1(k\!+\!1)-W_2(k\!+\!1)^\top W_2(k\!+\!1)\nonumber\\
    &-W_1(k)^\top W_1(k)+W_2(k)^\top W_2(k)\nonumber\\
    =& \bigl(W_1(k)-\eta_k\nabla\ell(k)W_2(k)\bigl)^\top \bigl(W_1(k)-\eta_k\nabla\ell(k)W_2(k)\bigl)\nonumber\\
    &-\bigl(W_2(k)-\eta_k\nabla\ell(k)^\top W_1(k)\bigl)^\top \bigl(W_2(k)-\eta_k\nabla\ell(k)^\top W_1(k)\bigl)\nonumber\\
    &-W_1(k)^\top W_1(k)+W_2(k)^\top W_2(k)\nonumber\\
    =&\eta_k^2\bigl(W_2(k)^\top\nabla\ell(k)^\top\nabla\ell(k)W_2(k)-W_1(k)^\top\nabla\ell(k)^\top\nabla\ell(k)W_1(k)\bigl)\,. 
\end{align}
Based on the above equation, one can bound $\|D(k\!+\!1)\!-\!D(k)\|_F$ as follows
\begin{align}
    \|D(k\!+\!1)\!-\!D(k)\|_F &= \eta_k^2\|W_2(k)^\top\nabla\ell(k)^\top\nabla\ell(k)W_2(k)-W_1(k)^\top\nabla\ell(k)^\top\nabla\ell(k)W_1(k)\|_F\nonumber\\
    \text{Property of norm }&\leq \eta_k^2\|W_2(k)^\top\nabla\ell(k)^\top\nabla\ell(k)W_2(k)\|_F+\eta_k^2\|W_1(k)^\top\nabla\ell(k)^\top\nabla\ell(k)W_1(k)\|_F \nonumber\\
    \text{\eqref{app_eqn:frob_operator_norm} }&\leq \eta_k^2\sigma_{\max}^2(W_2(k))\|\nabla\ell(k)\|_F^2+\eta_k^2\sigma_{\max}^2(W_1(k))\|\nabla\ell(k)\|_F^2\nonumber\\
    &=\eta_k^2\sigma_{\max}^2(\mathcal{T}_k)\|\nabla\ell(k)\|_F^2\nonumber\\
    \text{$K$-smooth of $\ell$ }&\leq2K\eta_k^2\sigma_{\max}^2(\mathcal{T}_k)L(k)\,.
\end{align}
\end{proof}

\section{Verification of the assumption \texorpdfstring{$\alpha_1>0$}{α₁>0}}
\label{app:alpha}
In this section, we provide two conditions that ensure $\alpha_1>0$. 

In \cite{min2021explicit}, the authors show the following lemma which guarantees $\alpha_1>0$.
\begin{lem}[Lemma 1 in \citep{min2021explicit}]\label{app_lem:sufficient_over}
Let $W_1(0), W_2(0)$ are initialized entry-wise i.i.d. from $\mathcal{N}(0, \frac{1}{h^{2p}})$ with $\frac{1}{4}\leq p\leq \frac{1}{2}$. For $\forall \delta>0$ and $h\geq \text{poly}(n, m, \frac{1}{\delta})$, with probability $1-\delta$ over random initialization with $W_1(0), W_2(0)$, the following holds
\begin{align}
    \alpha_1 \geq h^{1-2p}\,.
\end{align}
\end{lem}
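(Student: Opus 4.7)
The plan is to reduce $\alpha_1 \geq h^{1-2p}$ to a clean eigenvalue estimate on the initial imbalance matrix $D(0) = W_1(0)^\top W_1(0) - W_2(0)^\top W_2(0)$, and then obtain that estimate from standard Gaussian concentration applied to the Gram matrix of the stacked initialization. The first simplification is to discard the quadratic terms in the formula defining $\alpha_1$: since $\Delta_+, \Delta_- \geq 0$ and $\sqrt{(\Delta_\pm+\underline{\Delta})^2+4\beta_1^2} \geq \Delta_\pm+\underline{\Delta}$, the definition in Table~\ref{notation-table} immediately yields
\begin{align}
\alpha_1 \geq \underline{\Delta} = \max\bigl(\lambda_n(D(0)),0\bigr)+\max\bigl(\lambda_m(-D(0)),0\bigr),
\end{align}
so it suffices to show, with probability at least $1-\delta$, that both $\lambda_n(D(0))$ and $\lambda_m(-D(0))$ are at least $h^{1-2p}/2$.

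The key reformulation is to write $D(0)=M^\top J M$, where $M\in\R^{(n+m)\times h}$ is the vertical concatenation of $W_1(0)$ on top of $W_2(0)$ and $J=\mathrm{diag}(I_n,-I_m)$, and to apply Courant--Fischer to $D(0)$ on a carefully chosen $n$-dimensional subspace. Almost surely under Gaussian initialization (with $h\geq n+m$) the matrix $M$ has full row rank, so $MM^\top$ is invertible, and I would take $U=M^\top(MM^\top)^{-1}(\R^n\times\{0\})\subset\R^h$, a subspace of dimension $n$. For any $v=M^\top(MM^\top)^{-1}(w_1,0)\in U$ one has $Mv=(w_1,0)$, and therefore
\begin{align}
v^\top D(0)v = \|w_1\|^2, \qquad \|v\|^2 = (w_1,0)^\top(MM^\top)^{-1}(w_1,0).
\end{align}
As soon as $MM^\top \succeq \tfrac{h^{1-2p}}{2}I_{n+m}$, this gives $v^\top D(0)v/\|v\|^2 \geq h^{1-2p}/2$ on all of $U$, and Courant--Fischer then produces $\lambda_n(D(0))\geq h^{1-2p}/2$. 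The symmetric construction using $\{0\}\times\R^m$ yields the matching bound on $\lambda_m(-D(0))$, and adding the two gives $\underline{\Delta}\geq h^{1-2p}$.

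What remains is the spectral control $\|MM^\top - h^{1-2p}I_{n+m}\|_2 \leq h^{1-2p}/2$. Writing $M=\tilde M/h^p$ with $\tilde M$ having i.i.d.\ $\mathcal{N}(0,1)$ entries reduces this to controlling the extreme singular values of the $(n+m)\times h$ standard Gaussian matrix $\tilde M$, which is exactly the content of Lemma~\ref{app_lem:sv_random_matrix}: choosing the deviation parameter as $t=\Theta(\sqrt{\log(1/\delta)})$, the two-sided bound $(\sqrt{h}\pm\sqrt{n+m}\pm t)^2$ on the eigenvalues of $\tilde M\tilde M^\top$ lies within $h/2$ of $h$ as soon as $h\geq C(n+m+\log(1/\delta))$ for an absolute constant $C$, which is of the required polynomial form. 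The main obstacle I anticipate is purely bookkeeping: one has to carry the $O\bigl(\sqrt{(n+m)/h}\bigr)$ relative error from Vershynin's bound through the Courant--Fischer step and still land at exactly $h^{1-2p}$ rather than a smaller constant multiple of it, which forces a specific (but still polynomial) dependence of $h$ on $n,m,1/\delta$.
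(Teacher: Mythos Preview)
The paper does not supply its own proof of this lemma: it is quoted verbatim as ``Lemma 1 in \cite{min2021explicit}'' and left unproved, and the only related computation in the paper is the invocation (again without proof) of the sharper bound $\alpha_1\geq 2h^{-2p}(\sqrt{h}-B)^2$ with $B=\sqrt{m+n}+\tfrac12\log(2/\delta)$ from \cite{minconvergence} inside the proof of Theorem~\ref{thm:well_condition_tau}. So there is no in-paper argument to compare against.

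Your argument is correct and is in fact the natural route to the cited bound. Two small remarks. First, your Courant--Fischer step actually proves the clean inequality $\lambda_n(D(0))\geq\sigma_{\min}^2(M)$ and $\lambda_m(-D(0))\geq\sigma_{\min}^2(M)$, hence $\alpha_1\geq\underline{\Delta}\geq 2\sigma_{\min}^2(M)$; plugging in Lemma~\ref{app_lem:sv_random_matrix} then yields exactly the $2h^{-2p}(\sqrt{h}-B)^2$ bound quoted from \cite{minconvergence}, so your approach and the cited one coincide once you stop specializing to the threshold $h^{1-2p}/2$. Second, the ``bookkeeping obstacle'' you anticipate is not really an obstacle: requiring $(\sqrt{h}-B)^2\geq h/2$, i.e.\ $h\geq (1-1/\sqrt2)^{-2}B^2$, is already a polynomial condition in $n,m,1/\delta$ and delivers $\alpha_1\geq h^{1-2p}$ on the nose, so no constant is lost.
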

The above theorem states when ~\ref{eqn:obj_over} is \textbf{sufficiently overparametrized}, i.e., $h\geq \text{poly}(n, m, \frac{1}{\delta})$, Gaussian initialization with proper variance ensures $\alpha_1$ has a positive lower bound $h^{1-2p}$. Moreover, the lower bound increases as $h$ increases. 

Next, we are going to show with \textbf{mild} overparametrization, one can ensure $\alpha_1>0$.
\begin{lem}[Mild overparametrization ensures $\alpha_1>0$]\label{app_lem:mild}
Let $W_1(0), W_2(0)$ are initialized entry-wise i.i.d. from a continuous distribution $\mathbb{P}$. When $h\geq m+n$, the following holds almost surely over random initialization with $W_1(0), W_2(0)$
\begin{align}
    \alpha_1 >0\,.
\end{align}
\end{lem}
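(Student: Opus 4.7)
The plan is to establish $\alpha_1 > 0$ via the \emph{imbalance} route rather than the margin route, by proving the chain $\alpha_1 \geq \underline{\Delta} > 0$ almost surely. First I would derive the deterministic bound $\alpha_1 \geq \underline{\Delta}$ directly from the definition in Table~\ref{notation-table}: since $\Delta_+, \Delta_-, \underline{\Delta} \geq 0$, both $\Delta_+ + \underline{\Delta}$ and $\Delta_- + \underline{\Delta}$ are nonnegative, so
\begin{align}
\sqrt{(\Delta_+ + \underline{\Delta})^2 + 4\beta_1^2} &\geq \Delta_+ + \underline{\Delta}, \\
\sqrt{(\Delta_- + \underline{\Delta})^2 + 4\beta_1^2} &\geq \Delta_- + \underline{\Delta},
\end{align}
and summing yields $\alpha_1 \geq \underline{\Delta}$. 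Notably this step does not use $\beta_1$ at all, so the claim will hold irrespective of the margin at initialization. It therefore suffices to prove $\underline{\Delta} > 0$ almost surely, which is exactly the statement that $D(0)$ has at least $n$ strictly positive eigenvalues \emph{and} at least $m$ strictly negative eigenvalues.

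The next step is to exploit the congruence factorization
\begin{align}
D(0) \;=\; W_1(0)^\top W_1(0) - W_2(0)^\top W_2(0) \;=\; A\,J\,A^\top,
\end{align}
where $A := [\,W_1(0)^\top,\; W_2(0)^\top\,] \in \R^{h \times (n+m)}$ and $J := \mathrm{diag}(I_n,\,-I_m)$. I would then show that $A$ has full column rank $n+m$ almost surely whenever $h \geq n+m$. The quantity $\det(A^\top A)$ is a polynomial in the $(n+m)h$ i.i.d.\ entries of $W_1(0), W_2(0)$; since the joint law is absolutely continuous, the vanishing set of this polynomial has $\mathbb{P}$-measure zero provided the polynomial is not identically zero. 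A concrete witness that it is not identically zero: take $W_1(0)$ with rows $e_1,\dots,e_n$ and $W_2(0)$ with rows $e_{n+1},\dots,e_{n+m}$ in $\R^h$, so that $A = [e_1,\ldots,e_{n+m}]$ has orthonormal columns.

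To conclude, I would invoke Sylvester's law of inertia. Writing a reduced QR factorization $A = QR$ with $Q \in \R^{h \times (n+m)}$ having orthonormal columns and $R \in \R^{(n+m) \times (n+m)}$ invertible, the nonzero eigenvalues of $D(0) = Q\,(RJR^\top)\,Q^\top$ coincide with those of $RJR^\top$. Since $R$ is invertible, $RJR^\top$ is congruent to $J$, so by Sylvester's law it has exactly $n$ positive and $m$ negative eigenvalues. Therefore $D(0)$ has inertia $(n, m, h-n-m)$, which gives $\lambda_n(D(0)) > 0$ and $\lambda_m(-D(0)) > 0$ a.s., whence
\begin{align}
\underline{\Delta} \;=\; \max\!\bigl(\lambda_n(D(0)),\,0\bigr) + \max\!\bigl(\lambda_m(-D(0)),\,0\bigr) \;>\; 0 \quad \text{a.s.}
\end{align}
The main obstacle, if there is one, is the structural observation that $D(0)$ is a $J$-congruence of a tall matrix $A$, so that its inertia is pinned down entirely by the generic column rank of $A$; once this is spotted, the rank computation is a one-line polynomial non-vanishing argument and the inertia step is a direct citation of Sylvester's law, and no probabilistic machinery beyond absolute continuity is required.
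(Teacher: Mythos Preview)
Your proof is correct and shares the same structural core as the paper's: both factor $D(0)=A J A^\top$ with $A=[W_1(0)^\top,\,W_2(0)^\top]$ and both reduce the problem to showing that $A$ has full column rank almost surely under $h\geq m+n$. The finishing steps differ. The paper argues by contradiction on the kernel dimension: if $\sigma_{m+n}(D(0))=0$ then $\dim\ker D(0)\geq h-m-n+1$, but $D(0)v=0$ together with injectivity of $A$ forces $\begin{bmatrix}W_1(0)\\-W_2(0)\end{bmatrix}v=0$, whose kernel has dimension $h-m-n$; it then invokes the external sufficient condition $\sigma_{m+n}(D(0))>0\Rightarrow\alpha_1>0$ from \cite{min2021explicit}. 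You instead extract the full inertia $(n,m,h-n-m)$ of $D(0)$ via QR and Sylvester's law, and close the loop internally with the elementary bound $\alpha_1\geq\underline{\Delta}$ read off from Table~\ref{notation-table}. Your route is more self-contained and yields the sharper statement that $D(0)$ has exactly $n$ positive and $m$ negative eigenvalues, at the modest cost of invoking Sylvester's law; the paper's route is shorter but outsources the link to $\alpha_1$ to another reference.
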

Compared with Lemma~\ref{app_lem:sufficient_over}, Lemma~\ref{app_lem:mild} considers a wider range of distributions that include Gaussian distribution and uniform distribution. Thus, commonly used random initialization schemes, such as Xavier initialization~\citep{glorot2010understanding} and He initialization~\citep{he2015delving}, lead to $\alpha_1>0$. Moreover, the requirement of overparametrization in Lemma~\ref{app_lem:mild} is mild compared with the one in Lemma~\ref{app_lem:sufficient_over}, i.e., $h\geq m+n$ versus $h\geq \text{poly}(n, m, \frac{1}{\delta})$. As a result, Lemma~\ref{app_lem:mild} can be applied to more general overparametrization.
On the other hand, the conclusion of Lemma~\ref{app_lem:mild} is weaker than Lemma~\ref{app_lem:sufficient_over} in the sense that Lemma~\ref{app_lem:mild} only proves $\alpha_1>0$ but do not characterize its magnitude. 

Before presenting the proof of Lemma~\ref{app_lem:mild}, we first present two lemmas that will be used in the proof.
\begin{lem}\label{app_lem:random_matrix_full_rank}
Let $A\in \R^{h\times n}, h\geq n$ be a random matrix with entry-wise drawn i.i.d. from a continuous distribution $\mathbb{P}$. Then $A$ is of full column rank almost surely.
\end{lem}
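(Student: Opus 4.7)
The plan is to reduce the statement to the classical fact that the zero set of a nonzero multivariate polynomial on $\R^N$ has Lebesgue measure zero, and therefore has probability zero under a product of absolutely continuous distributions.

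First, I would observe that $A$ has full column rank if and only if at least one of its $n\times n$ square submatrices has nonzero determinant. Rather than unioning over all $\binom{h}{n}$ submatrices, it suffices to focus on a single one, say the top block $A_{1:n,1:n}$: whenever $\det(A_{1:n,1:n})\neq 0$, the first $n$ rows of $A$ span $\R^n$, which in turn forces the $n$ columns of $A$ (viewed as vectors in $\R^h$) to be linearly independent. Thus it is enough to prove $\det(A_{1:n,1:n})\neq 0$ almost surely.

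Next, I would view $M\mapsto \det(M)$ as a polynomial $p$ of degree $n$ in the $n^2$ entries of an $n\times n$ matrix. Since $p(I_n)=1$, the polynomial is not identically zero. A standard induction on the number of variables, invoking the fact that a nonzero univariate polynomial has finitely many roots together with Fubini's theorem, then shows that the algebraic variety $\{M\in\R^{n\times n}:p(M)=0\}$ has Lebesgue measure zero in $\R^{n^2}$.

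Finally, because the entries of $A$ are i.i.d.\ draws from a continuous distribution $\mathbb{P}$, the joint law of the entries of $A_{1:n,1:n}$ is absolutely continuous with respect to $n^2$-dimensional Lebesgue measure. Hence the event $\{\det(A_{1:n,1:n})=0\}$ is a Lebesgue-null set, and so has probability zero, which concludes the argument. The only genuine subtlety, and the step I would take care to pin down, is the precise meaning of ``continuous distribution'' adopted in the paper: the argument as sketched requires absolute continuity of $\mathbb{P}$ with respect to one-dimensional Lebesgue measure (as opposed to mere atomlessness), which is the standard reading and covers the Gaussian, uniform, Xavier, and He initialization schemes referenced throughout the paper.
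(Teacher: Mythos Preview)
Your argument is correct. The paper does not actually supply a proof of this lemma; it simply writes ``We refer the readers to \citep{vershynin2018high} for detailed proof.'' Your self-contained route via the top $n\times n$ minor and the Lebesgue-nullity of the zero set of a nonconstant polynomial is the standard way to establish the result, and your observation that the argument requires $\mathbb{P}$ to be absolutely continuous (not merely atomless) is a genuine clarification that the paper glosses over.
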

We refer the readers to \citep{vershynin2018high} for detailed proof.

\begin{lem}\label{app_lem:sufficient_min_D}
A sufficient condition for $\alpha_1>0$ is $\sigma_{m+n}(D(0))>0$.
\end{lem}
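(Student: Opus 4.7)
The plan is to prove $\alpha_1>0$ in two steps: first, reduce the claim to showing $\underline{\Delta}>0$ by extracting a clean lower bound $\alpha_1\geq \underline{\Delta}$ directly from the closed-form formula; second, deduce $\underline{\Delta}>0$ from an inertia analysis of $D(0)$ under the spectral hypothesis $\sigma_{m+n}(D(0))>0$. Note that $\sigma_{m+n}(D(0))>0$ already forces $h\geq m+n$, so the expressions are well defined.

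For the reduction, I would observe that every quantity in the definition of $\alpha_1$ is manifestly nonnegative from Table~\ref{notation-table}: $\Delta_{+}\geq 0$ and $\Delta_{-}\geq 0$ (because $\lambda_+\geq \max(\lambda_n(D(0)),0)$ and symmetrically), $\underline{\Delta}\geq 0$, and $\beta_1\geq 0$. Dropping the $4\beta_1^2$ term beneath each radical yields
\begin{align*}
\sqrt{(\Delta_{+}+\underline{\Delta})^2+4\beta_1^2}\;\geq\;\Delta_{+}+\underline{\Delta},
\qquad
\sqrt{(\Delta_{-}+\underline{\Delta})^2+4\beta_1^2}\;\geq\;\Delta_{-}+\underline{\Delta},
\end{align*}
which, substituted into the definition of $\alpha_1$, gives
$\alpha_1\;\geq\;\tfrac{1}{2}\bigl(-\Delta_{+}-\Delta_{-}+\Delta_{+}+\underline{\Delta}+\Delta_{-}+\underline{\Delta}\bigr)\;=\;\underline{\Delta}$.
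So it suffices to show that at least one of $\lambda_n(D(0))$ or $\lambda_m(-D(0))$ is strictly positive.

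For this, I would exploit the decomposition $D(0)=W_1(0)^\top W_1(0)-W_2(0)^\top W_2(0)$. The first summand is PSD of rank at most $n$ and the second is NSD of rank at most $m$. Applying Weyl's eigenvalue inequality $\lambda_{i+j-1}(A+B)\leq \lambda_i(A)+\lambda_j(B)$ with $i=n+1,\,j=1$ to $A=W_1(0)^\top W_1(0)$ and $B=-W_2(0)^\top W_2(0)$ gives $\lambda_{n+1}(D(0))\leq 0$, and the symmetric argument gives $\lambda_{h-m}(D(0))\geq 0$. Hence $D(0)$ has at most $n$ positive and at most $m$ negative eigenvalues, so $\mathrm{rank}(D(0))\leq m+n$. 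On the other hand, since $D(0)$ is symmetric, its singular values are the absolute values of its eigenvalues, and the hypothesis $\sigma_{m+n}(D(0))>0$ forces $\mathrm{rank}(D(0))\geq m+n$. These bounds can only be reconciled when $D(0)$ has \emph{exactly} $n$ positive and $m$ negative eigenvalues, which yields $\lambda_n(D(0))>0$ and $\lambda_m(-D(0))>0$, and therefore $\underline{\Delta}>0$ and finally $\alpha_1\geq \underline{\Delta}>0$.

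The only nontrivial ingredient is the Weyl-based inertia bound; a cleaner alternative is a Courant--Fischer argument: on any subspace $V\subseteq\mathbb{R}^h$ with $\dim V>n$, dimension counting gives $v\in V\cap \ker W_1(0)$ with $v\neq 0$, whence $v^\top D(0) v=-\|W_2(0) v\|_2^2\leq 0$, and the min-max principle delivers $\lambda_{n+1}(D(0))\leq 0$ (and symmetrically). Either route is short, so I expect no real obstacle beyond being careful about the direction of the inequalities and the convention on eigenvalue ordering.
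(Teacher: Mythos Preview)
Your proof is correct. The reduction $\alpha_1\geq\underline{\Delta}$ obtained by dropping the $4\beta_1^2$ terms is clean and valid (all the auxiliary quantities $\Delta_\pm,\underline{\Delta},\beta_1$ are indeed nonnegative, and you correctly note that $\sigma_{m+n}(D(0))>0$ forces $h\geq m+n$ so that $\lambda_n(D(0))$ and $\lambda_m(-D(0))$ are well defined). The inertia argument in Step~2 is also sound: the Weyl bound shows $D(0)$ has at most $n$ positive and at most $m$ negative eigenvalues, and the rank hypothesis forces equality, so in fact \emph{both} $\lambda_n(D(0))>0$ and $\lambda_m(-D(0))>0$ hold (you only need one, but you get both).

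The paper itself does not give a proof of this lemma; it simply cites \cite{min2021explicit}. Your argument is therefore a self-contained replacement for that citation. The structure you use---first isolating $\underline{\Delta}$ as a lower bound for $\alpha_1$, then reading off $\underline{\Delta}>0$ from the inertia of $D(0)$---is natural and likely close in spirit to what the cited reference does, since the decomposition $D(0)=W_1^\top W_1-W_2^\top W_2$ into a PSD part of rank $\leq n$ and an NSD part of rank $\leq m$ is the only obvious handle on the spectrum. Your Courant--Fischer alternative for the inertia bound is a nice touch and arguably more transparent than invoking Weyl.
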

The proof of this lemma can be found in \citep{min2021explicit}.

Now we present the proof of Lemma~\ref{app_lem:mild}
\begin{proof}
Based on Lemma~\ref{app_lem:sufficient_min_D}, it suffices to show that one almost surely has $\sigma_{m+n}(D(0))>0$ over random initialization with $W_1(0), W_2(0)$. We use proof by contradiction. Assume $\sigma_{m+n}(D(0))=0$, then one has $\text{dim}(\ker D(0))\geq h-n-m+1$.

On the other hand, Lemma~\ref{app_lem:random_matrix_full_rank} implies with probability one, $[W_1^\top(0), W_2^\top(0)]\in\R^{h\times(n+m)}$ is of full column rank. Our next step is to show $\text{dim}(\ker D(0))\leq h-n-m$. If this is true, then there is a contradiction. Thus, one directly has $\sigma_{m+n}(D(0))>0$.

For any $v\in \R^h$ that satisfies $D(0)v=0$, we can write this equation as follows
\begin{align}
    D(0)v=0 
    \Leftrightarrow [W_1^\top(0), W_2^\top(0)]\begin{bmatrix}
        W_1(0)\\
        -W_2(0)
    \end{bmatrix}v=0
\end{align}
Since $[W_1^\top(0), W_2^\top(0)]$ is of full column rank, the above equation is equivalent to 
\begin{align}
\begin{bmatrix}
        W_1(0)\\
        -W_2(0)
    \end{bmatrix}v=0\,,
\end{align}
and $\text{dim}(\ker D(0))\leq h-n-m$.
\end{proof}
\subsection{Large width and proper choices of the variance lead to well-conditioned \texorpdfstring{$\mathcal{T}_0$}{T₀}}
\label{app:large_width_well_conditioned}
In this section, we provide proof for Theorem~\ref{thm:well_condition_tau}. We first restate Theorem~\ref{thm:well_condition_tau} here.
\begin{thm}[Restate of Theorem~\ref{thm:well_condition_tau}]\label{app_thm:well_condition_tau}
Let $W_1(0), W_2(0)$ are initialized entry-wise i.i.d. from $\mathcal{N}(0, \frac{1}{h^{2p}})$ with $\frac{1}{4}\!<\!p\!<\!\frac{1}{2}$. . $\forall \delta\in(0, 1), \forall h \geq \text{poly}(m,n,\frac{1}{\delta})$, with probability $1-\delta$ over random initialization $W_1(0), W_2(0)$, the following condition holds
\begin{align}\label{app_eqn:tau_lower_bound_one}
    \frac{\sigma_{\min}(\mathcal{T}_0)}{\sigma_{\max}(\mathcal{T}_0)}\geq \frac{\alpha_1}{\alpha_2} \geq 1-\Omega(h^{2p-1})\,.
\end{align}
\end{thm}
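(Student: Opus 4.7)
The plan is to combine Gaussian-matrix concentration with a direct spectral analysis of the imbalance $D(0)$, and substitute the resulting bounds into the algebraic formulas for $\alpha_1,\alpha_2$ from Table~\ref{notation-table}. The key quantitative observation is that, for the chosen variance $1/h^{2p}$, the nonzero eigenvalues of both $W_1(0)^\top W_1(0)$ and $W_2(0)^\top W_2(0)$ concentrate around $h^{1-2p}$, which is the large quantity that drives both $\alpha_1$ and $\alpha_2$ to the same leading order. The relative gap between them then comes entirely from the $O(h^{1/2-2p})$ spread of singular values and from the $\beta$-terms (which are $O(1)$ in $h$), both of which are small relative to $h^{1-2p}$ in the regime $p>1/4$.

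More concretely, I would first apply Lemma~\ref{app_lem:sv_random_matrix} to the normalized matrices $h^p W_j(0)$ (which have i.i.d.\ standard normal entries) to obtain, for $h\geq\mathrm{poly}(m,n,1/\delta)$, that every squared singular value of $W_j(0)$ lies in $h^{1-2p}(1\pm O(\sqrt{(m+n)/h}))$ with probability $1-\delta/4$. A standard concentration bound on the operator norm of a product of independent Gaussian matrices then gives $\|W_1(0)W_2(0)^\top\|=O(h^{1/2-2p}\sqrt{m+n})$, so $\beta_1,\beta_2=O(1)$. Next I would analyze the eigenvalues of $D(0)$ through the rank factorization $D(0)=AJA^\top$, with $A=[W_1(0)^\top,\,W_2(0)^\top]\in\R^{h\times(n+m)}$ and $J=\mathrm{diag}(I_n,-I_m)$. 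By Sylvester's law of inertia, $D(0)$ has exactly $n$ positive and $m$ negative nonzero eigenvalues, and these coincide with the spectrum of the $(n+m)\times(n+m)$ matrix $JA^\top A=h^{1-2p}J+JE_1$, where $E_1:=A^\top A - h^{1-2p}I_{n+m}$ satisfies $\|E_1\|\leq C:=O(h^{1/2-2p}\sqrt{n+m})$ by the bounds above. A Bauer--Fike argument around the diagonal reference $h^{1-2p}J$ (whose eigenvector matrix is $I$, hence condition number $1$) then places the $n$ positive and $m$ negative eigenvalues of $D(0)$ in $[h^{1-2p}-C,\,h^{1-2p}+C]$ and $[-h^{1-2p}-C,\,-h^{1-2p}+C]$ respectively. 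Consequently $\lambda_+ + \lambda_- \leq 2h^{1-2p}+2C$, $\underline{\Delta}\geq 2h^{1-2p}-2C$, and $\Delta_+,\Delta_-\leq 2C$.

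Plugging these into the formulas for $\alpha_1,\alpha_2$, and using $\sqrt{x^2+y^2}\geq x$ for a lower bound and $\sqrt{x^2+y^2}\leq x+y$ (for $x,y\geq 0$) for an upper bound, yields
\begin{align*}
\alpha_1 \;\geq\; \underline{\Delta} \;\geq\; 2h^{1-2p}-2C,\qquad
\alpha_2 \;\leq\; \lambda_++\lambda_-+2\beta_2 \;\leq\; 2h^{1-2p}+2C+2\beta_2,
\end{align*}
so that
\begin{align*}
\frac{\alpha_1}{\alpha_2} \;\geq\; 1-\frac{4C+2\beta_2}{\alpha_2} \;=\; 1-O\bigl(h^{-1/2}\sqrt{n+m}\bigr)-O(h^{2p-1}) \;=\; 1-\Omega(h^{2p-1}),
\end{align*}
where the last step uses $2p-1>-1/2$ (since $p>1/4$), so $h^{2p-1}$ dominates $h^{-1/2}$ at large $h$. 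The first inequality in \eqref{app_eqn:tau_lower_bound_one} then follows from $\sigma_{\min}(\mathcal{T}_0)^2\geq\alpha_1$, $\sigma_{\max}(\mathcal{T}_0)^2\leq\alpha_2$, and the elementary $\sqrt{\alpha_1/\alpha_2}\geq\alpha_1/\alpha_2$ valid whenever $\alpha_1\leq\alpha_2$. The main technical obstacle is the eigenvalue-perturbation step: the leading $2h^{1-2p}$ terms in $\alpha_1$ and $\alpha_2$ must cancel to order $h^{1/2-2p}$ in order to extract the $h^{2p-1}$ relative rate, and this cancellation requires the gap-based Bauer--Fike bound on $JA^\top A$; the naive submultiplicative estimate $\|JA^\top A\|\leq\|A\|^2$ only yields a relative gap of $O(1)$ and would be useless.
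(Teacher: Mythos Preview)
Your proof is correct and arrives at the same quantitative bounds as the paper, but by a genuinely different route. The paper obtains the crucial lower bound $\alpha_1\geq 2h^{1-2p}-O(h^{1/2-2p})$ by \emph{citing} an external result (Lemma~11 of~\cite{minconvergence}), and then bounds $\lambda_\pm$ for $\alpha_2$ via the submultiplicative estimate $\sigma_{\max}(D(0))\leq\sigma_{\max}^2([W_1(0),W_2(0)])$. You instead give a fully self-contained argument: the factorization $D(0)=AJA^\top$ together with Sylvester's law of inertia pins down the signature, and the Bauer--Fike bound on the $(n+m)\times(n+m)$ matrix $JA^\top A=h^{1-2p}J+JE_1$ (with diagonal reference, hence eigenvector condition number~$1$) confines \emph{all} nonzero eigenvalues of $D(0)$ to $\pm h^{1-2p}+O(h^{1/2-2p})$. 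This single perturbation step yields simultaneously the lower bounds on $\lambda_n(D(0)),\lambda_m(-D(0))$ (hence on $\underline{\Delta}$ and $\alpha_1$) and the upper bounds on $\lambda_\pm$ (hence on $\alpha_2$), after which the algebra $\alpha_1\geq\underline{\Delta}$ and $\alpha_2\leq\lambda_++\lambda_-+2\beta_2$ matches the paper's. Your closing remark is apt: the submultiplicative bound alone suffices for the $\alpha_2$ side (and is what the paper uses there), but it cannot produce the two-sided eigenvalue localization needed for $\alpha_1$; the paper sidesteps this by citation, whereas you supply the missing perturbation argument directly. Your version is thus more self-contained and treats both sides uniformly.
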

\begin{proof}
Since $\alpha_1, \alpha_2$ are the lower and upper bounds for the singular values of $\mathcal{T}_0$, it is straightforward to see
\begin{align}
    \frac{\sigma_{\min}(\mathcal{T}_0)}{\sigma_{\max}(\mathcal{T}_0)}\geq \frac{\alpha_1}{\alpha_2}\,.
\end{align}
Thus, it suffices to show $\frac{\alpha_1}{\alpha_2}\geq 1-\Omega(h^{p-\frac{1}{2}})$. We provide upper bounds and lower bounds of $\alpha_1, \alpha_2$ separately to prove Theorem~\ref{thm:well_condition_tau}. In~\cite{minconvergence}, the authors provide the following lower bound on $\alpha_1$ under the same setting as Theorem~\ref{thm:well_condition_tau}.
\begin{lem}[Lemma 11 in ~\cite{minconvergence}]\label{app_lem:initial_weights_bound}
Under the same setting as Theorem~\ref{thm:well_condition_tau}, with probability over $1-\delta$ over random initialization over $W_1(0), W_2(0)$, the following holds
\begin{align}
    \alpha_1 \geq 2h^{1-2p}+2B^2h^{-2p}-4Bh^{\frac{1}{2}-2p}\,,\nonumber\\
    \begin{vmatrix}
        W_1(0)W_2^\top(0)
    \end{vmatrix}_F\leq 2\sqrt{m}Bh^{\frac{1}{2}-2p}\,,
\end{align}
where $B=\sqrt{m+n}+\frac{1}{2}\log\frac{2}{\delta}$.
\end{lem}
We refer the readers to \cite{minconvergence} for detailed proof. 

Then, we derive an upper bound on $\alpha_2$ where the definition is given as (See also Table~\ref{notation-table})
\begin{align}
    \alpha_2 = \frac{\lambda_++\sqrt{\lambda_+^2+4\beta_2^2}}{2}+ \frac{\lambda_-+\sqrt{\lambda_-^2+4\beta_2^2}}{2}\,,
\end{align}
where $\lambda_-,\lambda_+,\beta_2$ is defined as follows
\begin{align}
    \lambda_-\!=\!\max(\lambda_{\max}(-D(0)), 0),\quad \lambda_+\!=\!\max(\lambda_{\max}(D(0)), 0)\,,\nonumber\\
    \beta_2\!=\!\sigma_{\max}(W^*)\!+\!\sqrt{\frac{K}{\mu}}\|W_1(0)W_2^\top(0)\!-\!W^*\|_F\,.
\end{align}

Based on Lemma~\ref{app_lem:initial_weights_bound}, one can upper bound $\beta_2$ as follows
\begin{align}
    \beta_2 &= \sigma_{\max}(W^*)+\sqrt{\frac{K}{\mu}}\|W_1(0)W_2^\top(0)-W^*\|_F\nonumber\\
    &\leq \sigma_{\max}(W^*)+\sqrt{\frac{K}{\mu}}\bigl(\|W^*\|_F+2\sqrt{m}Bh^{\frac{1}{2}-2p}\bigl)\,,
\end{align}

Then, we provide an upper bound for $\lambda_{\max}(D(0))$. Similar analysis can be used to derive an upper bound for $\lambda_{\max}(-D(0))$.

First, based on Lemma~\ref{app_lem:sv_random_matrix}, the following holds with probability at least $1-\delta$
\begin{align}
    \sigma_{\max}(h^p[W_1(0), W_2(0)]) \leq \sqrt{h}+\sqrt{m+n}+\frac{1}{2}\log\frac{2}{\delta}\,.
\end{align}
Therefore, $\sigma_{\max}([W_1, W_2]) \leq h^{\frac{1}{2}-p}+Bh^{-p}$. Then, based on this upper bound, one can derive the following upper bound on $\sigma_{\max}(D(0))$
\begin{align}
    \sigma_{\max}(D(0)) &= \sigma_{\max}\biggl(\begin{bmatrix}
        W_1(0),W_2(0)
    \end{bmatrix}\begin{bmatrix}
        W_1^\top(0)\\
        -W_2^\top(0)
    \end{bmatrix}\biggl)\nonumber\\
    &\leq \sigma_{\max}\bigl(\begin{bmatrix}
        W_1(0),W_2(0)\end{bmatrix}\bigl)
        \times  \sigma_{\max}\biggl(\begin{bmatrix}
        W_1^\top(0)\\
        -W_2^\top(0)
    \end{bmatrix}\biggl)\nonumber\\
    &=\sigma_{\max}^2\bigl(\begin{bmatrix}
        W_1(0),W_2(0)\end{bmatrix}\bigl)\nonumber\\
    &\leq h^{1-2p}+2h^{\frac{1}{2}-2p}B+B^2h^{-2p}\,.
\end{align}
Similarly, one can show $\sigma_{\max}(-D(0))\!\leq\!h^{1-2p}+2h^{\frac{1}{2}-2p}B+B^2h^{-2p}$. By combining all the results, one can show the following upper bound on $\alpha_2$
\begin{align}
    \alpha_2 &= \frac{\lambda_++\sqrt{\lambda_+^2+4\beta_2^2}}{2}+ \frac{\lambda_-+\sqrt{\lambda_-^2+4\beta_2^2}}{2}\nonumber\\
    &\leq \lambda_+ +\lambda_-+2\beta_2 \nonumber\\
    &\leq \sigma_{\max}(D(0))+\sigma_{\max}(-D(0))+2\beta_2\nonumber\\
    &\leq 2h^{1-2p}+4h^{\frac{1}{2}-2p}B+2B^2h^{-2p} + 2\sigma_{\max}(W^*)+2\sqrt{\frac{K}{\mu}}\bigl(\|W^*\|_F+2\sqrt{m}Bh^{\frac{1}{2}-2p}\bigl)\nonumber\,.
\end{align}
With the bounds on $\alpha_1,\alpha$, one can derive the following bound on $\frac{\sigma_{\min}(\mathcal{T}_0)}{\sigma_{\max}(\mathcal{T}_0)}$
\begin{align}
    \frac{\sigma_{\min}(\mathcal{T}_0)}{\sigma_{\max}(\mathcal{T}_0)} &\geq \frac{\alpha_1}{\alpha_2}\nonumber\\
    &\geq \frac{2h^{1-2p}+2B^2h^{-2p}-4Bh^{\frac{1}{2}-2p}}{2h^{1-2p}+4h^{\frac{1}{2}-2p}B+2B^2h^{-2p} + 2\sigma_{\max}(W^*)+2\sqrt{\frac{K}{\mu}}\bigl(\|W^*\|_F+2\sqrt{m}Bh^{\frac{1}{2}-2p}\bigl)}\nonumber\\
    &=1-\frac{8h^{\frac{1}{2}-2p}B + 2\sigma_{\max}(W^*)+2\sqrt{\frac{K}{\mu}}\bigl(\|W^*\|_F+2\sqrt{m}Bh^{\frac{1}{2}-2p}\bigl)}{2h^{1-2p}+4h^{\frac{1}{2}-2p}B+2B^2h^{-2p} + 2\sigma_{\max}(W^*)+2\sqrt{\frac{K}{\mu}}\bigl(\|W^*\|_F+2\sqrt{m}Bh^{\frac{1}{2}-2p}\bigl)}\,,\nonumber\\
    &=1-\Omega(h^{2p-1})\,,
\end{align}
where the last line holds because the dominating term in the numerator and denominator is of order $\mathcal{O}(1)$ and $\mathcal{O}(h^{1-2p})$ separately. 
\end{proof}
\section{Simulation}\label{app:simulation}
In this section, we first numerically verify that with proper initialization (See Theorem~\ref{thm:well_condition_tau}), a larger width will lead to a well-conditioned $\mathcal{T}_0$. Then, we present experiments showing that the overparametrized model trained with GD following the adaptive step size proposed in \S\ref{subsec:converg_over} can almost match the rate of non-overparametrized model. Throughout the experiments, we consider ~\ref{eqn:obj_over} with squared loss
\begin{align}
    L(W_1, W_2)=\frac{1}{2}\lVert Y-XW_1W_2^\top\rVert_F^2\,.
\end{align}
\subsection{Large width leads to well-conditioned \texorpdfstring{$\mathcal{T}_0$}{T₀}}\label{subsec:simulation:large_width_well_tau}
In this section, we compare the $\kappa(\mathcal{T}_0)$ under different scales of the variance of the initialization, i.e., $p$, and difference width of the networks, i.e., $h$. We choose $p\in\{0.275, 0.375, 0.475\}$ and $h$ from $[500, 2000]$. We generate the data matrix $X$ as a $10\times 10$ orthogonal matrix and $Y=X\Theta + \mathcal{N}(0, 0.1)$ where $\Theta \in \R^{10\times 10}$ is entry-wise i.i.d. sampled from $\mathcal{N}(0, 0.1)$. The weight matrices $W_1, W_2$ are initialized following Theorem~\ref{thm:well_condition_tau}.

\begin{figure*}[h!]
\vspace{-4mm}
    \centering
    \includegraphics[width=\textwidth]{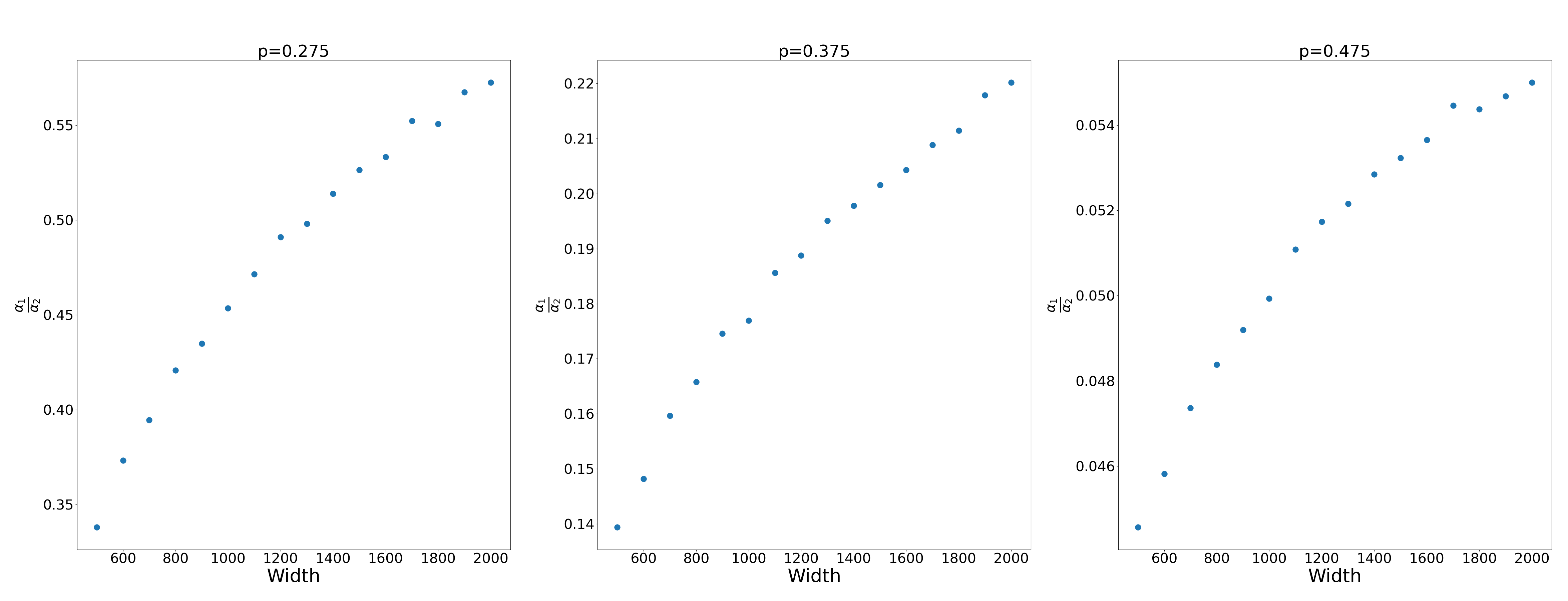}
    \caption{$\kappa(\mathcal{T}_0)$ under different choices of $p$ and $h$. We repeat the simulation thirty times and plot the average value of $\kappa(\mathcal{T}_0)$.}
    \label{fig:actual_condition_tau}
\end{figure*}

Figure\ref{fig:actual_condition_tau} shows that for fixed $p$, larger width leads to well-conditioned $\kappa(\mathcal{T}_0)$. Moreover, for a fixed width, smaller $p$ will lead to smaller $\frac{\alpha_1}{\alpha_2}$. In Theorem~\ref{thm:well_condition_tau}, we show $\frac{\alpha_1}{\alpha_2}\geq 1-\Omega(h^{p-\frac{1}{2}})$. One can see if we decrease either $h$ or $p$, the lower bound on $\frac{\alpha_1}{\alpha_2}$ decreases. Therefore, the simulation results support Theorem~\ref{thm:well_condition_tau}.

\subsection{Overparametrized GD can almost match the rate of the non-overparametrized GD}
In this section, we compare the rate of overparametrized GD following the adaptive step size in \S\ref{subsec:converg_over} with the non-overparametrized GD. We initialize weight matrices $W_1, W_2 \in \R^{5 \times h}$ as entry-wise $\mathcal{N}(0,\frac{1}{h})$. The data matrices are generated as $X\!=\!U\Sigma V, Y\!=\!XW_1W_2^\top \!+\!\mathcal{N}(0, 0.1)$ where $U, V \in\R^{5\times 5}$ are random orthogonal matrices and $\Sigma \in\R^{5\times 5}$ is a diagonal matrix where the diagonal entry is uniformly i.i.d. drawn from $[1.8, 2.3]$. The step size for the non-overparametrized GD is $\eta_t\!=\!\frac{1}{\sigma_{\max}^2(X)}$, and the step size for the overparametrized GD follows equation~\ref{eqn:adaptive_step_size} with $h(\eta_t,t)\!=\!\rho(\eta_t, t)$.
\begin{figure*}[h!]
\vspace{-4mm}
    \centering
    \includegraphics[width=\textwidth]{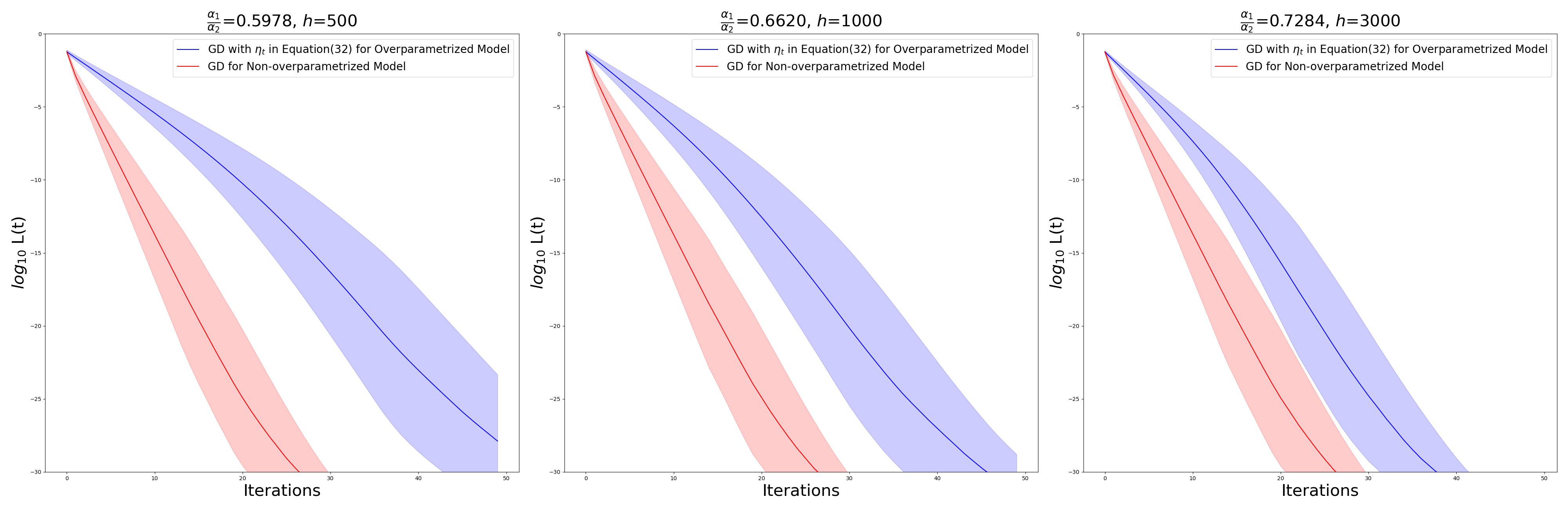}
    \caption{Comparison of convergence rate of GD for the non-overparametrized model and overparametrized model. We run the simulations thrity times. The red line represents $\log_{10} \ell(t)$ and the blue line represents $\log_{10} L(t)$. The shaded area represents plus and minus one standard deviation of the reported loss.}
    \label{fig:compare_with_non}
\end{figure*}
Figure~\ref{fig:compare_with_non} shows that as one increases the width of the networks, the overparametrized GD can almost match the rate of the non-overparametrized GD asymptotically. Moreover, as the width increases, $\frac{\alpha_1}{\alpha_2}$ increases and the rate of the overparametrized GD is more close to the one of the non-overparametrized GD. This is because, in \S\ref{sec:gd_over}, we show that the optimal local rate of convergence can be arbitrarily close to $1-\frac{\mu}{K}\cdot\frac{\alpha_1}{\alpha_2}$. Therefore, as one increases the width, $\frac{\alpha_1}{\alpha_2}$ approaches one, and this leads to the rate of overparametrized GD approaches $1-\frac{\mu}{K}$.

\subsubsection{Detailed description of backtracking line search}
For backtracking line search, the algorithm is described as follows:
\begin{algorithm}[!ht]
\caption{Backtracking Line Search.}
\begin{algorithmic}
\STATE \textbf{Given} Data matrices $X, Y$, initialization $W_1(0), W_2(0)$, and hyperparameters $\eta_{bt}, \tau, \gamma$.
\STATE \textbf{Result} $W_1^*, W_2^*$ that minimize $L(W_1, W_2)=\frac{1}{2}\lVert Y-XW_1W_2^\top\rVert_F^2$.
\FOR{$t=0,1,\cdots, T$}
\STATE $\eta_t=\eta_{\text{bt}}$
\WHILE{$L\bigl(W_1(t)-\eta_t\nabla_{W_1} L(t), W_2(t)-\eta_t\nabla_{W_2} L(t)\bigr) > L(t) -\gamma \lVert \nabla L(t)\rVert_F^2$}
\STATE $\eta_t = \tau \eta_t$
\ENDWHILE
\STATE $W_1(t+1) = W_1(t) - \eta_t \nabla_{W_1} L(t)$
\STATE $W_2(t+1) = W_2(t) - \eta_t \nabla_{W_2} L(t)$
\ENDFOR
\end{algorithmic}
\end{algorithm}
In the simulation in \S\ref{subsec:compare_simulations}, we choose$\tau=0.1$ and $\gamma=0.9$.



Figure\ref{fig:actual_condition_tau} shows that for fixed $p$, larger width leads to well-conditioned $\kappa(\mathcal{T}_0)$. Moreover, for a fixed width, smaller $p$ will lead to smaller $\frac{\alpha_1}{\alpha_2}$. In Theorem~\ref{thm:well_condition_tau}, we show $\frac{\alpha_1}{\alpha_2}\geq 1-\Omega(h^{p-\frac{1}{2}})$. One can see if we decrease either $h$ or $p$, the lower bound on $\frac{\alpha_1}{\alpha_2}$ decreases. Therefore, the simulation results support Theorem~\ref{thm:well_condition_tau}.

\end{document}